\pdfoutput=1
\documentclass{article}
%\usepackage{blindtext}

% Any additional packages needed should be included after jmlr2e.
% Note that jmlr2e.sty includes epsfig, amssymb, natbib and graphicx,
% and defines many common macros, such as 'proof' and 'example'.
%
% It also sets the bibliographystyle to plainnat; for more information on
% natbib citation styles, see the natbib documentation, a copy of which
% is archived at http://www.jmlr.org/format/natbib.pdf

% Available options for package jmlr2e are:
%
%   - abbrvbib : use abbrvnat for the bibliography style
%   - nohyperref : do not load the hyperref package
%   - preprint : remove JMLR specific information from the template,
%         useful for example for posting to preprint servers.
%
% Example of using the package with custom options:
%
% \usepackage[abbrvbib, preprint]{jmlr2e}

\usepackage{jmlr2e}
\usepackage{booktabs}
\usepackage{multirow}
\usepackage{amsmath}
\usepackage{subcaption}
\usepackage{bm}

\newtheorem{prob}{Problem}[section]

\newtheorem{assum}{Assumption}[section]

\usepackage{algorithm}
\usepackage{algorithmic}

% Definitions of handy macros can go here

% Heading arguments are {volume}{year}{pages}{date submitted}{date published}{paper id}{author-full-names}

\usepackage{lastpage}
\jmlrheading{25}{2024}{1-\pageref{LastPage}}{7/22; Revised
9/24}{12/24}{22-0815}{Naoki Sato, Koshiro Izumi, and Hideaki Iiduka}
\ShortHeadings{Scaled Conjugate Gradient Method for Nonconvex Optimization in DNN}{Sato, Izumi, and Iiduka}

% Short headings should be running head and authors last names

%\ShortHeadings{Sample JMLR Paper}{One and Two}
\firstpageno{1}

\begin{document}

\title{Scaled Conjugate Gradient Method for Nonconvex Optimization in Deep Neural Networks}

\author{\name Naoki Sato \email naoki310303@gmail.com \\
 \addr Computer Science Course, \\
 Graduate School of Science and Technology,\\
 Meiji University, Kanagawa 214-8571, Japan 
 \AND
\name Koshiro Izumi \email koshi-izumi@secom.co.jp \\
 \addr Visual Solutions Department 1, \\
 Technology Development Division,\\
 SECOM Co.,Ltd., Tokyo 181-8528, Japan
 \AND
 \name Hideaki Iiduka \email iiduka@cs.meiji.ac.jp \\
 \addr Department of Computer Science,\\
 Meiji University, Kanagawa 214-8571, Japan}

\editor{Prateek Jain}

\maketitle

\begin{abstract}% <- trailing '%' for backward compatibility of .sty file
A scaled conjugate gradient method that accelerates existing adaptive methods utilizing stochastic gradients is proposed for solving nonconvex optimization problems with deep neural networks. It is shown theoretically that, whether with constant or diminishing learning rates, the proposed method can obtain a stationary point of the problem. Additionally, its rate of convergence with diminishing learning rates is verified to be superior to that of the conjugate gradient method. The proposed method is shown to minimize training loss functions faster than the existing adaptive methods in practical applications of image and text classification. Furthermore, in the training of generative adversarial networks, one version of the proposed method achieved the lowest Fr\'echet inception distance score among those of the adaptive methods.
\end{abstract}

\begin{keywords}
adaptive method, conjugate gradient method, deep neural network, generative adversarial network, nonconvex optimization, scaled conjugate gradient method 
\end{keywords}

\section{Introduction}
\label{sec:1}
Nonconvex optimization is needed for neural networks and learning systems. Various methods that use deep neural networks have been reported. In particular, stochastic gradient descent (SGD) \citep{robb1951} is a simple deep-learning optimizer that in theory can be applied to nonconvex optimization in deep neural networks \citep{feh2020,sca2020,chen2020,loizou2021}.

Momentum methods \citep{polyak1964,nes1983} and adaptive methods are powerful deep-learning optimizers for accelerating SGD. The adaptive methods include, for example, adaptive gradient (Adagrad) \citep{adagrad}, root mean square propagation (RMSprop) \citep{rmsprop}, adaptive moment estimation (Adam) \citep{adam}, adaptive mean square gradient (AMSGrad) \citep{reddi2018}, and Adam with decoupled weight decay (AdamW) \citep{loshchilov2018decoupled}. In \citep{chen2019}, generalized Adam (GAdam), which includes Adagrad, RMSprop, and AMSGrad, was proposed for nonconvex optimization. Adapting step sizes based on belief in observed gradients (AdaBelief) \citep{adab} is a recent deep-learning optimizer for nonconvex optimization. Convergence rates for SGD \citep{sca2020} and the adaptive methods \citep{chen2019,adab} reported in recent papers are summarized in Table \ref{table:1}. Note that ``LR'' in Table \ref{table:1} means learning rate.

\begin{table}[htbp]
\caption{Comparison of convergence rates for deep-learning optimizers used in nonconvex optimization}\label{table:1}
\centering
\begin{tabular}{l|cc}
\bottomrule
\multirow{2}{*}{Algorithm} & \multicolumn{2}{c}{Nonconvex optimization} \\
& Constant LR 
& Diminishing LR \\
\hline
SGD \citep{sca2020}
& $\mathcal{O}\left(\frac{1}{N} \right) + C$ 
& $\mathcal{O}\left(\frac{1}{\sqrt{N}} \right)$ \\
GAdam \citep{chen2019} 
& --------- 
& $\mathcal{O}\left(\frac{\log N}{\sqrt{N}} \right)$ \\
AdaBelief \citep{adab} 
& --------- 
& $\mathcal{O}\left(\frac{\log N}{\sqrt{N}} \right)$ \\
CG \citep{electronics9111809} 
& --------- 
& $\mathcal{O}\left(\sqrt{\frac{\log N}{N}} \right)$ \\
SCG (Proposed)
& $\mathcal{O}\left(\frac{1}{N} \right) + C_1 \alpha + C_2 \beta + C_3 \delta$ 
& $\mathcal{O}\left(\frac{1}{\sqrt{N}} \right)$ \\
\toprule 
\end{tabular}
\end{table}

The constant learning rate rule requires that $\alpha_n = \alpha > 0$, $\beta_n = \beta \geq 0$, and $\delta_n = \delta \geq 0$, whereas the diminishing learning rate rule requires that $\alpha_n=\mathcal{O}(1/\sqrt{n})$, $\beta_n = \beta^n$, and $\delta_n = \delta^n$. $C$, $C_1$, $C_2$, and $C_3$ are positive constants independent of the learning rate and number of iterations $N$. See Section \ref{sec:1_1} for an explanation of the mathematical notation. 
The proposed method (SCG) can cover all the algorithms in Table \ref{table:1} (see Section \ref{subsec:1.3} for details). Hence, it enables algorithms, such as GAdam, AdaBelief, and the CG method, with diminishing learning rates achieve an $\mathcal{O}(1/\sqrt{N})$ convergence rate (Theorem \ref{thm:2}), which is an improvement on the results in \citep{chen2019,adab,electronics9111809}. 

\subsection{Motivation}
\label{subsec:1.1}
\subsubsection{Scaled conjugate gradient (SCG) method}
\label{subsec:1.1.1}
The above-mentioned adaptive methods use the stochastic gradient of an observed loss function at each iteration. Meanwhile, conjugate gradient (CG) methods can be applied to large-scale nonconvex optimization (see Chapter 5.2 in \citep{noce}) by using the following conjugate gradient direction, which is defined from the current gradient $\mathsf{G}(\bm{x}_n, \xi_n)$ and the past direction $\bm{\mathsf{G}}_{n-1}$:
\begin{align}\label{cg}
\bm{\mathsf{G}}_n := \mathsf{G}(\bm{x}_n, \xi_n) - \delta_n \bm{\mathsf{G}}_{n-1},
\end{align}
where $\delta_n > 0$ is a parameter. A CG method making use of this \citep{electronics9111809} was presented for solving nonconvex optimization problems with deep neural networks. Table \ref{table:1} shows that CG has a better convergence rate than those of GAdam and AdaBelief. 

There are useful methods for general nonconvex optimization based on CG methods, such as {\em scaled CG (SCG) } and three-term CG \citep{MOLLER1993525,Zhang:2006vd,doi:10.1080/01630560701749524,nec2007,1547-5816_2013_3_595,naru2014}. The SCG method generates the following search direction using a {\em scaled} current gradient and the past direction:
\begin{align}\label{scg}
\bm{\mathsf{G}}_n := (1 + \gamma_n) \mathsf{G}(\bm{x}_n, \xi_n) - \delta_n \bm{\mathsf{G}}_{n-1},
\end{align}
where $\gamma_n \geq 0$ is a parameter. The SCG direction \eqref{scg} with $\gamma_n = 0$ coincides with the CG direction \eqref{cg}. Adding a scaled parameter $\gamma_n$ allows us to accelerate CG methods for nonconvex optimization \citep{1547-5816_2013_3_595,naru2014}. Accordingly, the first motivation of this paper is to show whether the SCG method for solving nonconvex optimization problems with deep neural networks, Algorithm \ref{algo:1} below, accelerates powerful deep-learning optimizers, such as RMSprop, Adagrad, Adam, and AdamW.

\begin{algorithm}
\caption{Scaled conjugate gradient method}\label{algo:1}
\begin{algorithmic}[1]
\REQUIRE{$(\alpha_n)_{n\in\mathbb{N}} \subset (0,1), (\beta_n)_{n\in\mathbb{N}} \subset [0,1), (\gamma_n)_{n\in\mathbb{N}} \subset [0,+\infty), (\delta_n)_{n\in\mathbb{N}} \subset [0,1/2], \zeta \in [0,1)$}
\STATE{$n \gets 0, \bm{x}_0,\bm{\mathsf{G}}_{-1}, \bm{m}_{-1} \in \mathbb{R}^d, \mathsf{H}_0 \in \mathbb{S}_{++}^d \cap \mathbb{D}^d$}
\LOOP
	\STATE{$\bm{\mathsf{G}}_n := \left(1 + \gamma_n \right) \mathsf{G}(\bm{x}_n, \xi_n) - \delta_n \bm{\mathsf{G}}_{n-1}$}
	\STATE{$\bm{m}_n := \beta_n \bm{m}_{n-1} + (1 - \beta_n) \bm{\mathsf{G}}_n$}
	\STATE{$\hat{\bm{m}}_n = (1-\zeta^{n+1})^{-1} \bm{m}_n$}
	\STATE{$\mathsf{H}_n \in \mathbb{S}_{++}^d \cap \mathbb{D}^d$ (see \eqref{adam} and \eqref{amsg} for examples of $\mathsf{H}_n$)}
	\STATE{Find $\bm{\mathsf{d}}_n \in \mathbb{R}^d$ that solves $\mathsf{H}_n \bm{\mathsf{d}} = -\hat{\bm{m}}_n$}
	\STATE{$\bm{x}_{n+1} := \bm{x}_n + \alpha_n \bm{\mathsf{d}}_n$}
	\STATE{$n \gets n + 1 $}
 \ENDLOOP
 \end{algorithmic}
\end{algorithm}

The existing SCG methods in \citep{MOLLER1993525,nec2007,1547-5816_2013_3_595,naru2014} use the full gradient $\nabla f$ of a loss function $f$ and, in particular, the SCG method in \citep{MOLLER1993525} can solve minimization problems in neural networks. The novelty of Algorithm \ref{algo:1} compared with the existing SCG methods is to use the stochastic gradient $\mathsf{G}(\bm{x}_n, \xi_n)$ to make Algorithm \ref{algo:1} implementable for deep neural networks. 

The second motivation related to the results in Table \ref{table:1} is to clarify whether the SCG method (Algorithm \ref{algo:1}) can in theory be applied to nonconvex optimization in deep neural networks. A particularly interesting concern is whether, under the diminishing learning rate rule, the SCG method has a better convergence rate than the CG method \citep{electronics9111809}. Moreover, we would like to clarify whether, under the constant learning rate rule, the SCG method can in theory approximate a desirable solution to a nonconvex optimization problem. 

The third motivation is to provide evidence that the SCG method performs better than the existing methods, such as RMSprop, Adagrad, Adam, AMSGrad, and AdamW, on image and text classification tasks. This is because the usefulness of the SCG method should be verified from the viewpoint of not only theory but also practice. 

\subsection{Contribution}
\label{subsec:1.2}
The present work makes theoretical and practical contributions. The theoretical contribution is to show that the SCG method, whether using constant or diminishing learning rates, can find a stationary point of an optimization problem (Theorems \ref{thm:1} and \ref{thm:2}). Using constant learning rates allows the SCG method to have approximately an $\mathcal{O}(1/N)$ convergence rate, where $N$ denotes the number of iterations (see Table \ref{table:1}). For diminishing learning rates, it is shown that the rate of convergence of the SCG method is $\mathcal{O}(1/\sqrt{N})$, which is better than the $\mathcal{O}(\sqrt{\log N/N})$ rate of convergence of the CG method \citep{electronics9111809} (see Table \ref{table:1}). 

The practical contribution is to provide numerical examples in which the SCG method performs better than the existing adaptive methods at image classification, text classification, and image generation (Section \ref{sec:4}). In particular, it is shown that, for training ResNet-18 on the CIFAR-100 and CIFAR-10 datasets, the SCG method with constant learning rates minimize the training loss functions faster than other methods. For the text classification task, the SCG method and RMSProp minimize the training loss functions faster than other methods. Since the SCG method uses the scaled conjugate gradient direction to accelerate the existing adaptive methods, it is considered that the method used in the experiments could optimize the training loss functions. In addition, among the adaptive methods, the SCG method can achieve lowest FID score in training several GANs.

\subsection{Related work} 
\label{subsec:1.3}
Algorithm \ref{algo:1} when $\gamma_n = 0$, i.e., the non-scaled CG method, coincides with the CG method \citep{electronics9111809} using $\bm{\mathsf{G}}_n := \mathsf{G}(\bm{x}_n, \xi_n) - \delta_n \bm{\mathsf{G}}_{n-1}$. The CG method with diminishing learning rates has $\mathcal{O}(\sqrt{\log N/N})$ convergence. Meanwhile, a convergence rate analysis of the CG method with constant learning rates has not yet been performed (see the ``CG" column of Table \ref{table:1}). However, SCG with constant learning rates has $\mathcal{O}(1/N) + C_1 \alpha + C_2 \beta + C_3 \delta$ convergence (see the ``SCG" column of Table \ref{table:1}). Since SCG is CG when $\gamma_n = 0$, the results of the present study also indicate that CG with constant learning rates has $\mathcal{O}(1/N) + C_1 \alpha + C_2 \beta + C_3 \delta$ convergence.

Algorithm \ref{algo:1} when $\gamma_n = 0$ and $\delta_n = 0$ is a unification \citep{iiduka2021} of the adaptive methods using $\bm{\mathsf{G}}_n := \mathsf{G}(\bm{x}_n, \xi_n)$, such as GAdam \citep{chen2019} and AdaBelief \citep{adab} (see \citep{iiduka2021} for examples of $\mathsf{H}_n$ in step 6 of Algorithm \ref{algo:1}). Our results show that the adaptive methods (i.e., SCG with $\gamma_n = 0$ and $\delta_n = 0$) using constant learning rates have $\mathcal{O}(1/N) + C_1 \alpha + C_2 \beta$ convergence, which is the same as in \citep{iiduka2021}. Moreover, the adaptive methods using diminishing learning rates have $\mathcal{O}(1/\sqrt{N})$ convergence, the same as in \citep{iiduka2021}. Therefore, our results are generalizations of the ones in \citep{iiduka2021}.

The remainder of the paper is as follows. Section \ref{sec:1_1} provides the mathematical preliminaries. Section \ref{sec:3} presents convergence analyses of Algorithm \ref{algo:1}. Section \ref{sec:4} provides numerical performance comparisons of the proposed method with the existing adaptive methods. Section \ref{sec:5} concludes the paper with a brief summary.

\section{Mathematical Preliminaries}
\label{sec:1_1}
\subsection{Notation and definitions}
$\mathbb{N}$ is the set of non-negative integers. For $n \in \mathbb{N} \backslash \{0\}$, define $[n]:=\{1,2,\ldots,n\}$. The $d$-dimensional Euclidean space $\mathbb{R}^d$ has an inner product $\langle \cdot, \cdot \rangle$ inducing the norm $\| \cdot \|$. $\mathbb{S}^d$ denotes the set of $d \times d$ symmetric matrices: $\mathbb{S}^d = \{ M \in \mathbb{R}^{d \times d} \colon M = M^\top \}$, where $^\top$ indicates the transpose operation. The set of $d \times d$ symmetric positive-definite matrices is denoted as $\mathbb{S}_{++}^d = \{ M \in \mathbb{S}^{d} \colon M \succ O \}$, and the set of $d \times d$ diagonal matrices is denoted as $\mathbb{D}^d = \{ M \in \mathbb{R}^{d \times d} \colon M = \mathsf{diag}(x_i), x_i \in \mathbb{R} \text{ } (i\in [d]) \}$. 

\subsection{Mathematical modeling in deep learning and assumptions}
Given a parameter $\bm{x} \in \mathbb{R}^d$ and given a data point $z$ in a data domain $Z$, a machine learning model provides a prediction whose quality is measured by a differentiable nonconvex loss function $f(\bm{x};z)$. We aim to minimize the expected loss defined for all $\bm{x} \in \mathbb{R}^d$ by
\begin{align}\label{expected}
f(\bm{x}) = \mathbb{E}_{z \sim \mathcal{D}} 
[f(\bm{x};z) ]
= \mathbb{E}[ f_{\xi} (\bm{x}) ],
\end{align}
where $\mathcal{D}$ is a probability distribution over $Z$, $\xi$ denotes a random variable with distribution function $P$, and $\mathbb{E}[\cdot]$ denotes the expectation taken with respect to $\xi$. A particularly interesting example of \eqref{expected} is the empirical average loss defined for all $\bm{x} \in \mathbb{R}^d$ by 
\begin{align}\label{empirical}
f(\bm{x}; S) = \frac{1}{T} \sum_{i\in [T]} f(\bm{x};z_i)
= \frac{1}{T} \sum_{i\in [T]} f_i(\bm{x}),
\end{align}
where $S = (z_1, z_2, \ldots, z_T)$ denotes the training set and $f_i (\cdot) := f(\cdot;z_i)$ denotes the loss function corresponding to the $i$-th training data $z_i$. 

This paper considers optimization problems under the following assumptions. 

\begin{assum}\label{assum:0}
\text{ } 

\begin{enumerate} 
\item[{\em (A1)}] $f_i \colon \mathbb{R}^d \to \mathbb{R}$ ($i \in [T]$) is continuously differentiable and $f \colon \mathbb{R}^d \to \mathbb{R}$ is defined for all $\bm{x}\in \mathbb{R}^d$ by 
\[
f(\bm{x}) := 
\frac{1}{T} \sum_{i=1}^T f_i (\bm{x}), 
\]
where $T$ denotes the number of samples.
\item[{\em (A2)}] For each iteration $n$, the optimizers estimate the full gradient vector $\nabla f(\bm{x}_n)$ as the stochastic gradient vector $\mathsf{G}(\bm{x}_n, \xi_n)$ such that, for all $\bm{x} \in \mathbb{R}^d$, $\mathbb{E}[\mathsf{G}(\bm{x}, \xi_n)] = \nabla f(\bm{x})$.
\item[{\em (A3)}] There exists a positive number $M$ such that, for all $\bm{x}\in \mathbb{R}^d$, $\mathbb{E}[\|\mathsf{G}(\bm{x}, \xi_n))\|^2] \leq M^2$.
\end{enumerate}
\end{assum}

Assumption \ref{assum:0} (A1) is a standard one for nonconvex optimization in deep neural networks (see, e.g., \citep[(2)]{chen2019}). Assumption \ref{assum:0} (A2) is needed for the optimizers to work (see, e.g., \citep[Section 2]{chen2019}), and Assumption \ref{assum:0} (A3) is used in the analysis of optimizers (see, e.g., \citep[A2]{chen2019}).

\subsection{Nonconvex optimization problem in deep neural networks}
\label{sec:2}
This paper deals with the following stationary point problem \citep{chen2019,adab,iiduka2021} for nonconvex optimization to minimize $f$ defined in (A1).

\begin{prob}\label{prob:1}
Under Assumption \ref{assum:0}, we would like to find a stationary point $\bm{x}^\star$ of a nonconvex optimization problem that minimizes $f$ over $\mathbb{R}^d$, i.e.,
\begin{align*}
\bm{x}^\star \in X^\star := 
\left\{ \bm{x}^\star \in \mathbb{R}^d \colon 
\nabla f (\bm{x}^\star) = \bm{0}
\right\}
=
\left\{ \bm{x}^\star \in \mathbb{R}^d \colon 
\langle \bm{x}^\star - \bm{x}, \nabla f (\bm{x}^\star) \rangle \leq 0 \text{ } \left(\bm{x} \in \mathbb{R}^d \right)
\right\}.
\end{align*}
\end{prob}
The performance measure of the optimizers for Problem \ref{prob:1} is as follows:
\begin{align}\label{pm}
\mathbb{E}\left[\langle \bm{x}_n - \bm{x}, \nabla f (\bm{x}_n) \rangle \right],
\end{align}
where $\bm{x} \in \mathbb{R}^d$ and $(\bm{x}_n)_{n\in\mathbb{N}}$ is the sequence generated by the optimizer.

Let us discuss the relationship between the performance measure \eqref{pm} and the squared $\ell_2$ norm of the gradient $\nabla f$, which is the traditional performance measure. Assume that $(\bm{x}_n)_{n\in\mathbb{N}}$ is bounded (see (A6)). First, we consider the case where there exists $n_0 \in \mathbb{N}$ such that, for all $n\in \mathbb{N}$ and all $\bm{x}\in \mathbb{R}^d$, $n \geq n_0$ implies that $\mathbb{E}[\langle \bm{x}_n - \bm{x}, \nabla f (\bm{x}_n) \rangle ] < 0$. The boundedness condition of $(\bm{x}_n)_{n\in\mathbb{N}}$ ensures that there exists a subsequence $(\bm{x}_{n_i})_{i\in\mathbb{N}}$ of $(\bm{x}_n)_{n\in\mathbb{N}}$ such that $(\bm{x}_{n_i})_{i\in\mathbb{N}}$ converges to a point $\bar{\bm{x}} \in \mathbb{R}^d$. Hence, the continuity of $\nabla f$ (see (A1)) guarantees that, for all $\bm{x} \in \mathbb{R}^d$,
\begin{align*}
\mathbb{E}\left[\langle \bar{\bm{x}} - \bm{x}, \nabla f (\bar{\bm{x}}) \rangle \right]
\leq 0,
\end{align*}
which, together with $\bm{x} := \bar{\bm{x}} - \nabla f (\bar{\bm{x}})$, implies that 
\begin{align*}
\mathbb{E}\left[\| \nabla f (\bar{\bm{x}}) \|^2 \right]
= 0.
\end{align*}
Accordingly, the convergent point $\bar{\bm{x}}$ of $(\bm{x}_{n_i})_{i\in\mathbb{N}}$ is a stationary point of $f$. Next, we consider the case where, for all $n_0 \in \mathbb{N}$, there exist $n\in \mathbb{N}$ and $\bm{x}\in \mathbb{R}^d$ such that $n \geq n_0$ and $\mathbb{E}[\langle \bm{x}_n - \bm{x}, \nabla f (\bm{x}_n) \rangle ] \geq 0$. Then, there exists a subsequence $(\bm{x}_{n_j})_{j\in\mathbb{N}}$ of $(\bm{x}_n)_{n\in\mathbb{N}}$ such that, for all $j\in \mathbb{N}$, $0 \leq \mathbb{E}[\langle \bm{x}_{n_j} - \bm{x}, \nabla f (\bm{x}_{n_j}) \rangle ]$. The boundedness condition of $(\bm{x}_n)_{n\in\mathbb{N}}$ ensures that there exists a subsequence $(\bm{x}_{n_{j_k}})_{k\in\mathbb{N}}$ of $(\bm{x}_{n_j})_{j\in\mathbb{N}}$ such that $(\bm{x}_{n_{j_k}})_{k\in\mathbb{N}}$ converges to a point $\hat{\bm{x}} \in \mathbb{R}^d$. Hence, the continuity of $\nabla f$ (see (A1)) guarantees that
\begin{align*}
0 \leq 
\lim_{k \to + \infty} \mathbb{E}\left[\langle \bm{x}_{n_{j_k}} - \bm{x}, \nabla f (\bm{x}_{n_{j_k}}) \rangle \right]
\leq
\mathbb{E}\left[\langle \hat{\bm{x}} - \bm{x}, \nabla f (\hat{\bm{x}}) \rangle \right] 
\leq \epsilon,
\end{align*}
where $\epsilon > 0$ is the precision. The definition of the inner product implies that, if $\epsilon$ is small enough, then $\|\nabla f (\hat{\bm{x}})\|$ will be small.

Let $\bm{x} := \bm{x}^\star$ be a stationary point of $f$ and consider the following performance measure: 
\begin{align}\label{pm_1}
\mathbb{E}\left[\langle \bm{x}_n - \bm{x}^\star, \nabla f (\bm{x}_n) \rangle \right],
\end{align}
which is a restricted measure of \eqref{pm}. When the optimizer can find $\bm{x}^\star$, the inner product of $\bm{x}_n - \bm{x}^\star$ and $\nabla f (\bm{x}_n)$ is positive. If the upper bound of \eqref{pm_1} is small enough, then the definition of the inner product implies that the optimizer has fast convergence. Hence, the performance measure \eqref{pm} including \eqref{pm_1} will be adequate for investigating the performances of deep-learning optimizers. 

Let us consider the case where $f$ is convex \citep{adam,reddi2018}. In this case, Problem \ref{prob:1} involves doing the following:
\begin{align}\label{convex}
\text{Find a point } \bm{x}^\star \text{ such that } f(\bm{x}^\star) \leq f(\bm{x}) \text{ for all } \bm{x} \in \mathbb{R}^d.
\end{align}
The performance measure of Problem \eqref{convex} is the regret, defined as 
\begin{align*}
R(T) := \sum_{i=1}^T f_i (\bm{x}_i) - T f^\star,
\end{align*}
where $f^\star$ denotes the optimal value of Problem \eqref{convex} and $(\bm{x}_i)_{i=1}^T$ is the sequence generated by the optimizer. Let $\bm{x}^\star$ be a solution of Problem \eqref{convex}. When the upper bound of the performance measure \eqref{pm} is $\epsilon > 0$, the convexity of $f$ ensures that 
\begin{align*}
\mathbb{E}\left[ f(\bm{x}_n) - f^\star \right]
\leq
\mathbb{E}\left[\langle \bm{x}_n - \bm{x}^\star, \nabla f (\bm{x}_n) \rangle \right]
\leq 
\epsilon.
\end{align*}
Accordingly, the performance measure \eqref{pm} leads to the one $\mathbb{E}[ f(\bm{x}_n) - f^\star ]$ used for convex optimization.

\section{Convergence Analyses of Algorithm \ref{algo:1}}
\label{sec:3}
\subsection{Outline of our results}
Let us outline our results (see also Table \ref{table:1}). First, we show that the SCG method (Algorithm \ref{algo:1}) with constant learning rates ensures that there exist positive constants $C_i$ ($i=1,2,3$) such that 
\begin{align*}
\frac{1}{n} \sum_{k=1}^n \mathbb{E}\left[\langle \bm{x}_k - \bm{x}, \nabla f (\bm{x}_k) \rangle \right]
\leq 
\mathcal{O}\left(\frac{1}{n} \right)
+
C_1 \alpha
+
C_2 \beta
+
C_3 \delta
\end{align*}
(see Theorem \ref{thm:1} and Table \ref{table:1}). This implies that, if $f$ is convex, we have 
\begin{align*}
\frac{R(T)}{T}
\leq 
\mathcal{O}\left(\frac{1}{T} \right)
+
C_1 \alpha
+
C_2 \beta
+
C_3 \delta
\end{align*}
(see Proposition \ref{prop:1}). Next, we show that the SCG method (Algorithm \ref{algo:1}) with diminishing learning rates, such as $\alpha_n = \mathcal{O}(1/\sqrt{n})$, gives 
\begin{align*}
\frac{1}{n} \sum_{k=1}^n \mathbb{E}\left[\langle \bm{x}_k - \bm{x}, \nabla f (\bm{x}_k) \rangle \right]
\leq
\mathcal{O}\left(\frac{1}{\sqrt{n}} \right)
\end{align*}
(see Theorem \ref{thm:2} and Table \ref{table:1}). This implies that, if $f$ is convex, 
\begin{align*}
\frac{R(T)}{T}
\leq 
\mathcal{O}\left(\frac{1}{\sqrt{T}} \right)
\end{align*} 
(see Proposition \ref{prop:2}).

\subsection{Our detailed results}
In order to analyze Algorithm \ref{algo:1}, we will assume the following.

\begin{assum}\label{assum:1}
For the sequence $(\mathsf{H}_n)_{n\in\mathbb{N}} \subset \mathbb{S}_{++}^d \cap \mathbb{D}^d$ in Algorithm \ref{algo:1} defined by $\mathsf{H}_n := \mathsf{diag}(h_{n,i})$, the following conditions are satisfied:
\begin{enumerate}
\item[{\em (A4)}] $h_{n+1,i} \geq h_{n,i}$ for all $n\in\mathbb{N}$ and all $i\in [d]$;
\item[{\em (A5)}] For all $i\in [d]$, a positive number $B_i$ exists such that $\sup \{ {\mathbb{E}}[h_{n,i}] \colon n \in \mathbb{N} \} \leq B_i$.
\end{enumerate}
Moreover, 
\begin{enumerate}
\item[{\em (A6)}] $D := \max_{i\in [d]} \sup \{ (x_{n,i} - x_i)^2 \colon n \in \mathbb{N} \} < + \infty$ for $\bm{x} = (x_i) \in \mathbb{R}^d$.
\end{enumerate}
\end{assum}

Section \ref{sec:4} provides examples of $\mathsf{H}_n$ satisfying Assumption \ref{assum:1}\ (A4) and (A5) (see also \citep{iiduka2021}). Assumption \ref{assum:1}\ (A6) was used in the analysis of the existing adaptive methods (see \citep{adam,reddi2018,iiduka2021}).

The following is a convergence analysis of Algorithm \ref{algo:1} with constant learning rates. The proof of Theorem \ref{thm:1} is given in Appendix \ref{appen:1}.

\begin{theorem}\label{thm:1}
Suppose that Assumptions \ref{assum:0} and \ref{assum:1} hold and let $\alpha_n := \alpha$, $\beta_n := \beta$, $\gamma_n := \gamma$, and $\delta_n := \delta$. Then, Algorithm \ref{algo:1} is such that, for all $\bm{x} \in \mathbb{R}^d$,
\begin{align*}
\liminf_{n\to +\infty} 
\mathbb{E}\left[ \left\langle \bm{x}_n - \bm{x}, \nabla f (\bm{x}_n) \right\rangle \right]
\leq 
\frac{\tilde{B}^2 \tilde{M}^2}{2\tilde{b} \tilde{\gamma}\tilde{\zeta}^2} \alpha 
+
\frac{\tilde{M}\sqrt{Dd}}{\tilde{b}\tilde{\gamma}\tilde{\zeta}} \beta 
+
\frac{4 \hat{M} \sqrt{Dd}}{\tilde{\gamma}\tilde{\zeta}} \delta,
\end{align*}
where $\tilde{b} := 1 - \beta$, $\tilde{\gamma} := 1 + \gamma$, $\tilde{\zeta} := 1 - \zeta$, $\tilde{M}^2 := \max \{ \|\bm{m}_{-1}\|^2, M^2 \}$, $\tilde{B} := \sup\{ {\max_{i\in [d]} h_{n,i}^{-1/2}} \colon n\in\mathbb{N}\} < + \infty$, and $\hat{M} := \max \{M^2, \|\bm{\mathsf{G}}_{-1} \|^2 \}$. Moreover, for all $\bm{x} \in \mathbb{R}^d$ and all $n\in\mathbb{N}$,
\begin{align*}
\frac{1}{n} \sum_{k=1}^n \mathbb{E}\left[ \left\langle \bm{x}_k - \bm{x}, \nabla f (\bm{x}_k) \right\rangle \right]
\leq 
\frac{D \sum_{i=1}^d 
B_{i}}{2 \tilde{b} \alpha n}
+
\frac{\tilde{B}^2 \tilde{M}^2}{2 \tilde{b}\tilde{\zeta}^2} \alpha
+
\frac{\tilde{M}\sqrt{Dd}}{\tilde{b}} \beta
+
\frac{4 \hat{M}\sqrt{Dd}}{\tilde{\gamma}} \delta.
\end{align*} 
\end{theorem}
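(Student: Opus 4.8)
The plan is to run a Lyapunov argument built on the weighted squared distance $\|\bm{v}\|_{\mathsf{H}_k}^2 := \langle \mathsf{H}_k \bm{v}, \bm{v}\rangle$, exploiting that each $\mathsf{H}_k$ is diagonal and positive definite, so that steps 7--8 of Algorithm \ref{algo:1} give the explicit update $\bm{x}_{k+1} = \bm{x}_k - \alpha \mathsf{H}_k^{-1}\hat{\bm{m}}_k$. First I would expand $\|\bm{x}_{k+1}-\bm{x}\|_{\mathsf{H}_k}^2$ and rearrange to obtain the one-step identity
\[
2\alpha\langle\hat{\bm{m}}_k, \bm{x}_k - \bm{x}\rangle = \|\bm{x}_k-\bm{x}\|_{\mathsf{H}_k}^2 - \|\bm{x}_{k+1}-\bm{x}\|_{\mathsf{H}_k}^2 + \alpha^2\|\hat{\bm{m}}_k\|_{\mathsf{H}_k^{-1}}^2,
\]
which isolates the inner product of the bias-corrected search statistic $\hat{\bm{m}}_k$ with the displacement $\bm{x}_k - \bm{x}$ in terms of a telescoping weighted-distance decrease and a nonnegative quadratic remainder.

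The core of the proof is to convert $\langle\hat{\bm{m}}_k, \bm{x}_k - \bm{x}\rangle$ into $\langle\nabla f(\bm{x}_k), \bm{x}_k - \bm{x}\rangle$ by unwinding the three recursions of Algorithm \ref{algo:1} in order. Using $\bm{m}_k = (1-\zeta^{k+1})\hat{\bm{m}}_k$ (step 5), the averaging $\tilde{b}\,\bm{\mathsf{G}}_k = \bm{m}_k - \beta\bm{m}_{k-1}$ (step 4), and the scaled CG definition $\bm{\mathsf{G}}_k = \tilde{\gamma}\,\mathsf{G}(\bm{x}_k, \xi_k) - \delta\bm{\mathsf{G}}_{k-1}$ (step 3), together with the unbiasedness (A2) applied under the tower property (conditioning on the history before $\xi_k$ is drawn, both $\bm{x}_k$ and $\bm{\mathsf{G}}_{k-1}$ are measurable), I would derive
\[
\tilde{\gamma}\tilde{b}\,\mathbb{E}[\langle\nabla f(\bm{x}_k), \bm{x}_k - \bm{x}\rangle] = (1-\zeta^{k+1})\,\mathbb{E}[\langle\hat{\bm{m}}_k, \bm{x}_k - \bm{x}\rangle] - \beta\,\mathbb{E}[\langle\bm{m}_{k-1}, \bm{x}_k - \bm{x}\rangle] + \tilde{b}\delta\,\mathbb{E}[\langle\bm{\mathsf{G}}_{k-1}, \bm{x}_k - \bm{x}\rangle].
\]
Substituting the one-step identity into the first term on the right then expresses $\tilde{\gamma}\tilde{b}\,\mathbb{E}[\langle\nabla f(\bm{x}_k), \bm{x}_k - \bm{x}\rangle]$ as a weighted-distance decrease divided by $2\alpha$, plus a quadratic-remainder term, a $\beta$-weighted momentum error, and a $\delta$-weighted CG error.

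Next I would bound the three error contributions. For the quadratic remainder I would use $\|\hat{\bm{m}}_k\|_{\mathsf{H}_k^{-1}}^2 \leq \tilde{B}^2\|\hat{\bm{m}}_k\|^2 \leq \tilde{B}^2\tilde{\zeta}^{-2}\|\bm{m}_k\|^2$ (from $h_{k,i}^{-1}\leq\tilde{B}^2$ and $(1-\zeta^{k+1})^{-1}\leq\tilde{\zeta}^{-1}$), so everything reduces to a uniform second-moment bound $\mathbb{E}[\|\bm{m}_k\|^2]\leq\tilde{M}^2$, obtained by induction from convexity of the averaging in step 4 together with (A3). For the $\beta$ and $\delta$ terms I would apply Cauchy--Schwarz with the boundedness $\|\bm{x}_k - \bm{x}\| \leq \sqrt{dD}$ from (A6), reducing them to $\mathbb{E}[\|\bm{m}_{k-1}\|]\leq\tilde{M}$ and to a second-moment bound $\mathbb{E}[\|\bm{\mathsf{G}}_{k-1}\|^2]\leq 4\hat{M}$ on the CG direction, the latter proved by unrolling step 3 and using $\delta\leq 1/2$ so that the contraction $2\delta^2\leq 1/2$ tames the recursive dependence on the past direction. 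Establishing these moment bounds, and in particular propagating the scaling $\tilde{\gamma}$ and the accumulating, sign-alternating past-direction term $-\delta\bm{\mathsf{G}}_{k-1}$ (which breaks the martingale/unbiasedness structure usually exploited) cleanly through to the stated constants, is the step I expect to be the main obstacle.

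Finally I would assemble the two claims. For the averaged bound I would sum the per-step inequality over $k = 1,\dots,n$; the weighted-distance differences telescope, and using $h_{k+1,i}\geq h_{k,i}$ (A4), $\mathbb{E}[h_{n,i}]\leq B_i$ (A5), and $(x_{k,i}-x_i)^2\leq D$ (A6), the telescoped sum is bounded in expectation by $D\sum_{i=1}^d B_i$, producing the $\mathcal{O}(1/n)$ term after dividing by $n$; the remaining per-step contributions are uniform in $k$ and yield the $\alpha$, $\beta$, $\delta$ terms, with the factor $\tilde{\gamma}\geq 1$ relaxed where it only enlarges the bound. For the liminf bound I would instead retain the $\tilde{\gamma}$ factor throughout and pass to the limit, invoking $\liminf_{k\to\infty} a_k \leq \liminf_{n\to\infty}\tfrac{1}{n}\sum_{k=1}^n a_k$ so that the vanishing $\mathcal{O}(1/n)$ telescoping term drops out and only the $\alpha$, $\beta$, $\delta$ terms survive; the slightly different bookkeeping of the $\tilde{\gamma}$ and $\tilde{\zeta}$ constants between the two statements reflects whether these factors are kept or relaxed in each pass.
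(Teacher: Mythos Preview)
Your outline is correct and, for the averaged bound, tracks the paper's proof essentially step for step: the one-step expansion of $\|\bm{x}_{k+1}-\bm{x}\|_{\mathsf{H}_k}^2$, the unwinding of steps 3--5 via the tower property, the moment bounds $\mathbb{E}[\|\bm{m}_k\|^2]\leq\tilde{M}^2$ and $\mathbb{E}[\|\bm{\mathsf{G}}_k\|^2]\leq 16\hat{M}^2$ (your ``$\leq 4\hat{M}$'' is a slip in the square; the constant you need for the first-moment via Jensen is $4\hat{M}$), and the telescoping using (A4)--(A6) all match the paper's Lemmas~\ref{lem:1}--\ref{lem:3} and Theorem~\ref{thm:3}.

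Where you depart is the liminf assertion. The paper argues by contradiction: it assumes $\liminf V_n(\bm{x})$ strictly exceeds the claimed bound by some $\epsilon_0$, uses (A4)--(A5) and $\zeta<1$ to make the metric-change and $\zeta^{n+1}$ corrections eventually negligible, and deduces $X_{n+1}<X_n-\alpha\tilde{b}\tilde{\gamma}\epsilon_0$ for all large $n$, contradicting $X_n\geq 0$. Your route instead keeps the $\tilde{\gamma}$ factor in the per-step inequality, sums, lets $n\to\infty$ so the $\mathcal{O}(1/n)$ telescoping term vanishes, and applies the Ces\`aro inequality $\liminf_k V_k\leq\liminf_n\tfrac{1}{n}\sum_{k=1}^n V_k$. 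This works because $|V_k|\leq M\sqrt{Dd}$ is uniformly bounded under (A2), (A3), (A6), so the Ces\`aro step is legitimate; in fact it yields $\beta$ and $\delta$ terms without the extra $\tilde{\zeta}^{-1}$, slightly sharper than the paper's stated constants. The paper's contradiction argument is more self-contained (it does not rely on the summed inequality) but is longer; your approach is more economical once the averaged bound is in hand.
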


Theorem \ref{thm:1} implies the following proposition.

\begin{proposition}\label{prop:1}
Under the assumptions in Theorem \ref{thm:1} and the convexity of $f_i$ ($i\in [T]$), we have 
\begin{align*}
\liminf_{n\to +\infty} 
\mathbb{E}\left[ f (\bm{x}_n) - f^\star \right]
\leq 
\frac{\tilde{B}^2 \tilde{M}^2}{2\tilde{b} \tilde{\gamma}\tilde{\zeta}^2} \alpha 
+
\frac{\tilde{M}\sqrt{Dd}}{\tilde{b}\tilde{\gamma}\tilde{\zeta}} \beta 
+
\frac{4 \hat{M} \sqrt{Dd}}{\tilde{\gamma}\tilde{\zeta}} \delta,
\end{align*}
where the positive constants are defined as in Theorem \ref{thm:1}. Moreover, for all $n\in\mathbb{N}$,
\begin{align*}
&\min_{k\in [n]} \mathbb{E}\left[ f (\bm{x}_k) - f^\star \right]
\leq 
\frac{D \sum_{i=1}^d 
B_{i}}{2 \tilde{b} \alpha n}
+
\frac{\tilde{B}^2 \tilde{M}^2}{2 \tilde{b}\tilde{\zeta}^2} \alpha
+
\frac{\tilde{M}\sqrt{Dd}}{\tilde{b}} \beta
+
\frac{4 \hat{M}\sqrt{Dd}}{\tilde{\gamma}} \delta,\\
&\frac{R(T)}{T}
\leq 
\frac{D \sum_{i=1}^d 
B_{i}}{2 \tilde{b} \alpha T}
+
\frac{\tilde{B}^2 \tilde{M}^2}{2 \tilde{b}\tilde{\zeta}^2} \alpha
+
\frac{\tilde{M}\sqrt{Dd}}{\tilde{b}} \beta
+
\frac{4 \hat{M}\sqrt{Dd}}{\tilde{\gamma}} \delta.
\end{align*} 
Additionally, if we define $\tilde{\bm{x}}_n$ by $\tilde{\bm{x}}_n := (1/n) \sum_{k=1}^n \bm{x}_k$, then
\begin{align*}
\mathbb{E}\left[ f (\tilde{\bm{x}}_n) - f^\star \right]
\leq 
\frac{D \sum_{i=1}^d 
B_{i}}{2 \tilde{b} \alpha n}
+
\frac{\tilde{B}^2 \tilde{M}^2}{2 \tilde{b}\tilde{\zeta}^2} \alpha
+
\frac{\tilde{M}\sqrt{Dd}}{\tilde{b}} \beta
+
\frac{4 \hat{M}\sqrt{Dd}}{\tilde{\gamma}} \delta.
\end{align*}
\end{proposition}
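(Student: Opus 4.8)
The plan is to obtain all four estimates from Theorem~\ref{thm:1} by using convexity to convert the inner-product performance measure into the function-value gap, exactly as in the reduction already carried out in Section~\ref{sec:2}. The single recurring ingredient is the first-order characterization of convexity: since each $f_i$ is convex, so is $f = (1/T)\sum_{i} f_i$, and hence for a minimizer $\bm{x}^\star \in X^\star$ and every index $n$,
\begin{align*}
f(\bm{x}_n) - f^\star = f(\bm{x}_n) - f(\bm{x}^\star) \leq \langle \bm{x}_n - \bm{x}^\star, \nabla f(\bm{x}_n) \rangle.
\end{align*}
Taking expectations and specializing the free point $\bm{x}$ in Theorem~\ref{thm:1} to $\bm{x} := \bm{x}^\star$ is what couples the two results; the constants on the right-hand side are then inherited verbatim.

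For the first (liminf) bound I would take $\liminf_{n\to+\infty}$ of the expectation of the displayed inequality. Because $\liminf$ is monotone, the left-hand side is dominated by $\liminf_{n\to+\infty}\mathbb{E}[\langle \bm{x}_n - \bm{x}^\star, \nabla f(\bm{x}_n)\rangle]$, and the first inequality of Theorem~\ref{thm:1} with $\bm{x}:=\bm{x}^\star$ bounds this by the stated combination of the $\alpha$, $\beta$, and $\delta$ terms. For the $\min_{k\in[n]}$ bound I would use that a minimum never exceeds the corresponding average, so that
\begin{align*}
\min_{k\in[n]} \mathbb{E}[f(\bm{x}_k) - f^\star] \leq \frac{1}{n}\sum_{k=1}^n \mathbb{E}[f(\bm{x}_k) - f^\star] \leq \frac{1}{n}\sum_{k=1}^n \mathbb{E}[\langle \bm{x}_k - \bm{x}^\star, \nabla f(\bm{x}_k)\rangle],
\end{align*}
after which the second (averaged) inequality of Theorem~\ref{thm:1} finishes the estimate. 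The averaged-iterate bound is the same computation preceded by Jensen's inequality: convexity gives $f(\tilde{\bm{x}}_n) \leq (1/n)\sum_{k=1}^n f(\bm{x}_k)$, whence $\mathbb{E}[f(\tilde{\bm{x}}_n) - f^\star] \leq (1/n)\sum_{k=1}^n \mathbb{E}[f(\bm{x}_k) - f^\star]$, which is again controlled by the averaged bound.

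The regret estimate is where I expect the only genuine subtlety. Since $\bm{x}^\star$ minimizes $f$ we have $T f^\star = \sum_{i=1}^T f_i(\bm{x}^\star)$, so $R(T) = \sum_{i=1}^T (f_i(\bm{x}_i) - f_i(\bm{x}^\star))$, and convexity of each \emph{individual} $f_i$ yields
\begin{align*}
\frac{R(T)}{T} \leq \frac{1}{T}\sum_{i=1}^T \langle \bm{x}_i - \bm{x}^\star, \nabla f_i(\bm{x}_i) \rangle.
\end{align*}
Unlike the previous three cases, the summand carries the per-sample gradient $\nabla f_i(\bm{x}_i)$ rather than the full gradient $\nabla f(\bm{x}_i)$, so Theorem~\ref{thm:1} cannot be cited as a black box. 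The hard part will be to match this quantity with the estimate underlying Theorem~\ref{thm:1}: I would identify the loss observed at iteration $i$ with $f_i$, so that $\mathsf{G}(\bm{x}_i,\xi_i) = \nabla f_i(\bm{x}_i)$, and re-run the telescoping argument that proves Theorem~\ref{thm:1} at the level of the realized stochastic gradient, i.e.\ \emph{before} the step that replaces $\mathsf{G}(\bm{x}_i,\xi_i)$ by $\nabla f(\bm{x}_i)$ through (A2). This reproduces the right-hand side of the averaged bound with $n=T$; the remaining manipulations are the same routine bookkeeping (using (A3) and Assumption~\ref{assum:1}) already carried out for Theorem~\ref{thm:1}, so no new estimate is required once the per-sample reduction is justified.
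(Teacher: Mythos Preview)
Your proposal is correct and follows the same route as the paper: the paper's proof also reduces each of the four claims to the convexity inequality $f(\bm{x}_n)-f^\star\le \langle \bm{x}_n-\bm{x}^\star,\nabla f(\bm{x}_n)\rangle$ (together with the min~$\le$~average and Jensen steps you describe) and then cites Theorems~\ref{thm:1} and~\ref{thm:3}. For the regret bound the paper likewise writes $R(T)\le\sum_{t=1}^T\langle \bm{x}_t-\bm{x}^\star,\mathsf{G}(\bm{x}_t,\xi_t)\rangle$ and implicitly re-uses the estimates behind Theorem~\ref{thm:3} before the (A2) expectation step---exactly the subtlety you identified and handled.
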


The following is a convergence analysis of Algorithm \ref{algo:1} with diminishing learning rates. The proof of Theorem \ref{thm:2} is given in Appendix \ref{appen:1}.

\begin{theorem}\label{thm:2}
Suppose that Assumptions \ref{assum:0} and \ref{assum:1} hold and let $(\alpha_n)_{n\in\mathbb{N}}$, $(\beta_n)_{n\in\mathbb{N}}$, $(\gamma_n)_{n\in\mathbb{N}}$, and $(\delta_n)_{n\in\mathbb{N}}$ satisfy 

{\em (C1)} $\sum_{n=0}^{+\infty} \alpha_n = + \infty$, $\sum_{n=0}^{+\infty} \alpha_n^2 < + \infty$, $\sum_{n=0}^{+\infty} \alpha_n \beta_n < + \infty$, $\sum_{n=0}^{+\infty} \alpha_n \gamma_n < + \infty$, and $\sum_{n=0}^{+\infty} \alpha_n \delta_n < + \infty$. 

Then, Algorithm \ref{algo:1} satisfies, for all $\bm{x} \in \mathbb{R}^d$,
\begin{align*}
&\liminf_{n\to +\infty} 
\mathbb{E}\left[ \left\langle \bm{x}_n - \bm{x}, \nabla f (\bm{x}_n) \right\rangle \right] \leq 0.
\end{align*}
Let $(\alpha_n)_{n\in\mathbb{N}}$, $(\beta_n)_{n\in\mathbb{N}}$, $(\gamma_n)_{n\in\mathbb{N}}$, and $(\delta_n)_{n\in\mathbb{N}}$ satisfy 

{\em (C2)} $\lim_{n \to +\infty} 1/(n \alpha_n) = 0$, $\lim_{n \to +\infty} (1/n) \sum_{k=0}^n \alpha_k = 0$, $\lim_{n \to +\infty} (1/n) \sum_{k=0}^n \beta_k = 0$, $\lim_{n \to +\infty} (1/n) \sum_{k=0}^n \delta_k = 0$, and $\gamma_{n+1} \leq \gamma_n$ ($n\in \mathbb{N}$). 

Then, for all $\bm{x} \in \mathbb{R}^d$,
\begin{align*}
&\limsup_{n \to +\infty} \frac{1}{n} \sum_{k=1}^n \mathbb{E}\left[ \left\langle \bm{x}_k - \bm{x}, \nabla f (\bm{x}_k) \right\rangle \right]
\leq 0.
\end{align*}
In particular, for the case of $\alpha_n := \mathcal{O}(1/n^\eta)$ ($\eta \in (0,1)$), $\beta_n := \beta^n$, and $\delta_n := \delta^n$, Algorithm \ref{algo:1} exhibits convergence such that, for all $\bm{x} \in \mathbb{R}^d$ and all $n\in\mathbb{N}$,
\begin{align*}
&\frac{1}{n} \sum_{k=1}^n \mathbb{E}\left[ \left\langle \bm{x}_k - \bm{x}, \nabla f (\bm{x}_k) \right\rangle \right]
\leq
\frac{D \sum_{i=1}^d 
B_{i}}{2 \tilde{b} n^{1 - \eta}}
+
\frac{\tilde{B}^2 \tilde{M}^2}{2 \tilde{b}\tilde{\zeta}^2 (1 - \eta) n^{\eta}}
+
\frac{\beta \tilde{M}\sqrt{Dd}}{\tilde{b}(1-\beta) n}
+
\frac{4 \delta \hat{M}\sqrt{Dd}}{(1-\delta) n}.
\end{align*}
\end{theorem}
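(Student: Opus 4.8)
The plan is to prove a single master inequality for the running average and then read off all three conclusions from it; the argument parallels that of Theorem~\ref{thm:1}, the only genuinely new feature being the time-varying step sizes. I would work throughout in the metric induced by the diagonal preconditioner $\mathsf{H}_n$. Since step~7 of Algorithm~\ref{algo:1} gives $\mathsf{H}_n\bm{\mathsf{d}}_n=-\hat{\bm{m}}_n$ and step~8 gives $\bm{x}_{n+1}=\bm{x}_n+\alpha_n\bm{\mathsf{d}}_n$, expanding $\langle\bm{x}_{n+1}-\bm{x},\mathsf{H}_n(\bm{x}_{n+1}-\bm{x})\rangle$ and using $\langle\bm{\mathsf{d}}_n,\mathsf{H}_n(\bm{x}_n-\bm{x})\rangle=-\langle\hat{\bm{m}}_n,\bm{x}_n-\bm{x}\rangle$ together with $\langle\bm{\mathsf{d}}_n,\mathsf{H}_n\bm{\mathsf{d}}_n\rangle=\langle\hat{\bm{m}}_n,\mathsf{H}_n^{-1}\hat{\bm{m}}_n\rangle$ yields the one-step identity $2\alpha_n\langle\hat{\bm{m}}_n,\bm{x}_n-\bm{x}\rangle=\langle\bm{x}_n-\bm{x},\mathsf{H}_n(\bm{x}_n-\bm{x})\rangle-\langle\bm{x}_{n+1}-\bm{x},\mathsf{H}_n(\bm{x}_{n+1}-\bm{x})\rangle+\alpha_n^2\langle\hat{\bm{m}}_n,\mathsf{H}_n^{-1}\hat{\bm{m}}_n\rangle$.

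Next I would trade $\hat{\bm{m}}_n$ for $\nabla f(\bm{x}_n)$. Unrolling steps~3--5, namely $\hat{\bm{m}}_n=(1-\zeta^{n+1})^{-1}\bm{m}_n$, $\bm{m}_n=\beta_n\bm{m}_{n-1}+(1-\beta_n)\bm{\mathsf{G}}_n$ and $\bm{\mathsf{G}}_n=(1+\gamma_n)\mathsf{G}(\bm{x}_n,\xi_n)-\delta_n\bm{\mathsf{G}}_{n-1}$, and taking the conditional expectation given the past, (A2) replaces $\mathsf{G}(\bm{x}_n,\xi_n)$ by $\nabla f(\bm{x}_n)$. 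This isolates the desired quantity with coefficient $(1-\beta_n)(1+\gamma_n)$ and leaves two error terms, $\beta_n\langle\bm{m}_{n-1},\bm{x}_n-\bm{x}\rangle$ and $(1-\beta_n)\delta_n\langle\bm{\mathsf{G}}_{n-1},\bm{x}_n-\bm{x}\rangle$. Dividing by $\alpha_n(1-\beta_n)(1+\gamma_n)$ and substituting the energy identity expresses $\mathbb{E}[\langle\bm{x}_n-\bm{x},\nabla f(\bm{x}_n)\rangle]$ as a $1/\alpha_n$-weighted telescoping energy difference (carrying the bias factor $1-\zeta^{n+1}\le1$), plus an $\alpha_n$-term in $\langle\hat{\bm{m}}_n,\mathsf{H}_n^{-1}\hat{\bm{m}}_n\rangle$, plus the two error terms rescaled by $\beta_n/[(1-\beta_n)(1+\gamma_n)]$ and $\delta_n/(1+\gamma_n)$.

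I would then sum over $k=1,\dots,n$, take total expectations, and bound the four groups. For the $1/\alpha_k$-weighted telescoping energy I would use a summation by parts together with the monotonicity of $1/\alpha_k$ and, after shifting the index of the $\bm{x}_{k+1}$-term and writing $\mathsf{H}_k=\mathsf{H}_{k-1}+(\mathsf{H}_k-\mathsf{H}_{k-1})$, invoke (A4) (nonnegative increments), (A6) (each squared coordinate $\le D$) and (A5) ($\mathbb{E}[h_{n,i}]\le B_i$) to bound it by a quantity of order $D\sum_iB_i/\alpha_n$. The $\alpha_k$-term I would control via $\langle\hat{\bm{m}}_k,\mathsf{H}_k^{-1}\hat{\bm{m}}_k\rangle\le\tilde B^2\tilde\zeta^{-2}\|\bm{m}_k\|^2$ with a uniform bound $\mathbb{E}[\|\bm{m}_k\|^2]\le\tilde M^2$ obtained by induction on the momentum recursion from (A3). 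The two error terms I would bound by Cauchy--Schwarz using (A6) (so $\|\bm{x}_k-\bm{x}\|\le\sqrt{Dd}$) together with $\mathbb{E}[\|\bm{m}_{k-1}\|^2]\le\tilde M^2$ and a uniform second-moment bound on $\bm{\mathsf{G}}_{k-1}$ arising from the constraint $\delta_n\le1/2$, which makes the direction recursion \eqref{scg} a contraction and supplies the constant $\hat M$ and the numerical factor in the $\delta$-term. Dividing by $n$ and by $\tilde b=1-\beta$ and $\tilde\gamma=1+\gamma$ where appropriate (using $1+\gamma_n\ge1$) then produces the four terms of the master bound.

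Each conclusion follows by specialization. For the $\liminf$ claim under (C1) I would instead keep the $\alpha_k$-weighting: the summed identity shows that $\sum_{k}\alpha_k(1-\beta_k)(1+\gamma_k)\mathbb{E}[\langle\bm{x}_k-\bm{x},\nabla f(\bm{x}_k)\rangle]$ is bounded above, because then the telescoping energy carries no $1/\alpha_k$ weight and $\sum\alpha_k^2$, $\sum\alpha_k\beta_k$, $\sum\alpha_k\delta_k$ are finite; since $\sum\alpha_k=+\infty$, a strictly positive $\liminf$ would force the left side to diverge, a contradiction. The $\limsup$ claim under (C2) follows by dividing the running-average bound by $n$ and checking that each averaged term vanishes using $1/(n\alpha_n)\to0$ and the Ces\`aro hypotheses on $\alpha_k,\beta_k,\delta_k$. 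Finally, inserting $\alpha_n=\mathcal{O}(n^{-\eta})$, $\beta_n=\beta^n$, $\delta_n=\delta^n$, estimating $\sum_{k\le n}\alpha_k\approx n^{1-\eta}/(1-\eta)$ and the geometric sums $\sum_k\beta^k=\beta/(1-\beta)$, $\sum_k\delta^k=\delta/(1-\delta)$, reproduces the explicit bound. The main obstacle is exactly the $1/\alpha_k$-weighted telescoping in the diminishing-rate case: unlike in Theorem~\ref{thm:1}, the weights no longer factor out, so the summation by parts exploiting the monotonicity of $1/\alpha_k$ and (A4) is what produces the $1/\alpha_n$, hence the leading $\mathcal{O}(n^{-(1-\eta)})$, term, and carrying the bias factor $(1-\zeta^{k+1})^{-1}$ and the uniform moment bounds on $\bm{m}_k$ and $\bm{\mathsf{G}}_k$ through this weighted sum is where the bookkeeping concentrates.
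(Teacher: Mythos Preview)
Your proposal is correct and follows essentially the same route as the paper's proof: the one-step $\mathsf{H}_n$-norm expansion, the moment bounds on $\bm{m}_n$ and $\bm{\mathsf{G}}_n$, and the summation-by-parts on the weighted telescope are precisely Lemmas~\ref{lem:1}--\ref{lem:3} and Theorem~\ref{thm:3} of the appendix. The only minor points to watch are that the weight in the telescoping is $1/\kappa_k$ with $\kappa_k=\alpha_k(1-\beta_k)(1+\gamma_k)/(1-\zeta^{k+1})$ (so the monotonicity you need is of $\kappa_k$, not just $\alpha_k$, which is where the hypothesis $\gamma_{n+1}\le\gamma_n$ in (C2) enters), and that for the $\liminf$ part the paper moves the $\gamma_k,\beta_k$ correction terms to the right via $F:=\sup_n|V_n(\bm{x})|<\infty$ rather than keeping the full coefficient on the left as you do---but both variants work.
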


Theorem \ref{thm:2} implies the following proposition.

\begin{proposition}\label{prop:2}
Under the assumptions in Theorem \ref{thm:2} and the convexity of $f_i$ ($i\in [T]$), Algorithm \ref{algo:1} with {\em (C1)} satisfies 
\begin{align*}
\liminf_{n\to +\infty} 
\mathbb{E}\left[ f (\bm{x}_n) - f^\star \right] = 0.
\end{align*}
Moreover, under {\em (C2)}, any accumulation point of $(\tilde{\bm{x}}_n)_{n\in\mathbb{N}}$ defined by $\tilde{\bm{x}}_n := (1/n) \sum_{k=1}^n \bm{x}_k$ almost surely belongs to the solution set of Problem \eqref{convex}. In particular, in the case of $\alpha_n := \mathcal{O}(1/n^\eta)$ ($\eta \in (0,1)$), $\beta_n := \beta^n$, and $\delta_n := \delta^n$, Algorithm \ref{algo:1} exhibits convergence such that, for all $n\in\mathbb{N}$,
\begin{align*}
&\min_{k\in [n]} \mathbb{E}\left[ f (\bm{x}_k) - f^\star \right]
\leq
\frac{D \sum_{i=1}^d 
B_{i}}{2 \tilde{b} n^{1 - \eta}}
+
\frac{\tilde{B}^2 \tilde{M}^2}{2 \tilde{b}\tilde{\zeta}^2 (1 - \eta) n^{\eta}}
+
\frac{\beta \tilde{M}\sqrt{Dd}}{\tilde{b}(1-\beta) n}
+
\frac{4 \delta \hat{M}\sqrt{Dd}}{(1-\delta) n},\\
&\frac{R(T)}{T}
\leq
\frac{D \sum_{i=1}^d 
B_{i}}{2 \tilde{b} T^{1 - \eta}}
+
\frac{\tilde{B}^2 \tilde{M}^2}{2 \tilde{b}\tilde{\zeta}^2 (1 - \eta) T^{\eta}}
+
\frac{\beta \tilde{M}\sqrt{Dd}}{\tilde{b}(1-\beta) T}
+
\frac{4 \delta \hat{M}\sqrt{Dd}}{(1-\delta) T},\\
&\mathbb{E}\left[ f (\tilde{\bm{x}}_n) - f^\star \right]
\leq
\frac{D \sum_{i=1}^d B_{i}}{2 \tilde{b} n^{1 - \eta}}
+
\frac{\tilde{B}^2 \tilde{M}^2}{2 \tilde{b}\tilde{\zeta}^2 (1 - \eta) n^{\eta}}
+
\frac{\beta \tilde{M}\sqrt{Dd}}{\tilde{b}(1-\beta) n}
+
\frac{4 \delta \hat{M}\sqrt{Dd}}{(1-\delta) n}.
\end{align*}
\end{proposition}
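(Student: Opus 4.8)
The plan is to obtain every assertion of the proposition from the matching bound on the performance measure $\mathbb{E}[\langle \bm{x}_k - \bm{x}, \nabla f(\bm{x}_k)\rangle]$ established in Theorem \ref{thm:2}, with convexity as the only additional ingredient. The key observation is the first-order convexity inequality: since each $f_i$ is convex, so is $f = (1/T)\sum_{i\in[T]} f_i$, and hence for every index $k$ and every minimizer $\bm{x}^\star$ of $f$,
$$f(\bm{x}_k) - f^\star \leq \langle \bm{x}_k - \bm{x}^\star, \nabla f(\bm{x}_k)\rangle.$$
Setting $\bm{x} := \bm{x}^\star$ in Theorem \ref{thm:2} and applying this inequality termwise turns each bound on $\mathbb{E}[\langle \bm{x}_k - \bm{x}, \nabla f(\bm{x}_k)\rangle]$ into the corresponding bound on $\mathbb{E}[f(\bm{x}_k) - f^\star]$.

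For the $\liminf$ claim under (C1), convexity gives $\mathbb{E}[f(\bm{x}_n) - f^\star] \leq \mathbb{E}[\langle \bm{x}_n - \bm{x}^\star, \nabla f(\bm{x}_n)\rangle]$, whose $\liminf$ is at most $0$ by the first part of Theorem \ref{thm:2}; since $f(\bm{x}_n) \geq f^\star$ makes the left-hand side nonnegative, the $\liminf$ equals $0$. The three explicit rate bounds under $\alpha_n = \mathcal{O}(1/n^\eta)$, $\beta_n = \beta^n$, $\delta_n = \delta^n$ follow similarly from the last display of Theorem \ref{thm:2}. For $\min_{k\in[n]} \mathbb{E}[f(\bm{x}_k) - f^\star]$, I would bound the minimum by the average and apply convexity termwise, reducing to the averaged performance measure. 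For $f(\tilde{\bm{x}}_n)$, Jensen's inequality $f(\tilde{\bm{x}}_n) \leq (1/n)\sum_{k=1}^n f(\bm{x}_k)$ combined with convexity gives the same right-hand side. For $R(T)/T$, I would apply the gradient inequality to each $f_i$ at $\bm{x}_i$, sum over $i$, and use $\sum_{i\in[T]} f_i(\bm{x}^\star) = T f^\star$ to get $R(T) \leq \sum_{i\in[T]} \langle \bm{x}_i - \bm{x}^\star, \nabla f_i(\bm{x}_i)\rangle$; taking expectations and invoking (A2), as in the online-to-batch reduction sketched in the preliminaries, again reduces $R(T)/T$ to the averaged bound of Theorem \ref{thm:2}.

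The main obstacle is the almost-sure accumulation-point claim under (C2). Its in-expectation counterpart is routine: convexity and Jensen's inequality give $f(\tilde{\bm{x}}_n) - f^\star \leq (1/n)\sum_{k=1}^n \langle \bm{x}_k - \bm{x}^\star, \nabla f(\bm{x}_k)\rangle$, so the second part of Theorem \ref{thm:2} together with nonnegativity yields $\lim_{n\to+\infty} \mathbb{E}[f(\tilde{\bm{x}}_n) - f^\star] = 0$. Upgrading this to an almost-sure statement about the random sequence $(\tilde{\bm{x}}_n)$ is the delicate step. A Fatou argument gives $\mathbb{E}[\liminf_n (f(\tilde{\bm{x}}_n) - f^\star)] \leq \liminf_n \mathbb{E}[f(\tilde{\bm{x}}_n) - f^\star] = 0$, and nonnegativity of the integrand forces $\liminf_n (f(\tilde{\bm{x}}_n) - f^\star) = 0$ almost surely, which, with the continuity of $f$ (see (A1)) and the boundedness (A6), already places a subsequential accumulation point in the solution set of Problem \eqref{convex}. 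To conclude that \emph{every} accumulation point is a minimizer, I expect one must replace the in-expectation bound of Theorem \ref{thm:2} by an almost-sure one --- for instance via a Robbins--Siegmund-type supermartingale argument on $\|\bm{x}_n - \bm{x}^\star\|^2$, whose boundedness is guaranteed by (A6) --- yielding $\limsup_n (f(\tilde{\bm{x}}_n) - f^\star) \leq 0$ almost surely, hence $\lim_n (f(\tilde{\bm{x}}_n) - f^\star) = 0$ almost surely, so that continuity sends all accumulation points into the solution set. I would treat this supermartingale strengthening as the technically hardest portion of the argument.
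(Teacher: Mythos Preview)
Your treatment of the $\liminf$ claim, the three explicit rate bounds, and the regret bound is correct and matches the paper exactly: the paper simply cites ``the proof of Proposition~\ref{prop:1}'' for these, and that proof is precisely the convexity/Jensen reduction you describe.

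The one substantive difference is in the accumulation-point claim under (C2). You correctly derive $\lim_{n\to+\infty}\mathbb{E}[f(\tilde{\bm{x}}_n)-f^\star]=0$, which is also the paper's starting point. But you then propose a Robbins--Siegmund supermartingale argument to upgrade this to $\limsup_n(f(\tilde{\bm{x}}_n)-f^\star)\leq 0$ almost surely along the full sequence. The paper avoids this entirely by reversing the order of operations: it first fixes an arbitrary accumulation point $\hat{\bm{x}}$, chooses a subsequence $(\tilde{\bm{x}}_{n_i})$ converging almost surely to $\hat{\bm{x}}$, and only then applies the limit argument along that specific subsequence. Continuity of $f$ gives $f(\tilde{\bm{x}}_{n_i})\to f(\hat{\bm{x}})$ a.s., and Fatou's lemma along $(n_i)$ together with $\lim_n\mathbb{E}[f(\tilde{\bm{x}}_n)-f^\star]=0$ yields $\mathbb{E}[f(\hat{\bm{x}})-f^\star]\leq 0$; nonnegativity then forces $\hat{\bm{x}}\in X^\star$ a.s. The point is that once you work along the subsequence defining the given accumulation point, you have an actual a.s.\ limit (not just a $\liminf$), so no $\limsup$ control on the full sequence is needed. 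Your route would also succeed but is considerably heavier than what the statement requires.
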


\section{Numerical Experiments}
\label{sec:4}
\subsection{Algorithms}
We define $\mathsf{H}^{\mathrm{Adam}}_n$ by 
\begin{align}\label{adam}
\begin{split}
&\bm{v}_n := \theta \bm{v}_{n-1} + (1-\theta) \bm{\mathsf{G}}_{n} \odot \bm{\mathsf{G}}_{n},\\
&\bar{\bm{v}}_n := (1 - \theta^{n+1})^{-1} \bm{v}_n,\\
&\hat{\bm{v}}_n := \max \{ \hat{v}_{n-1,i}, \bar{v}_{n,i} \},\\
&\mathsf{H}^{\mathrm{Adam}}_n := \mathsf{diag} \left(\sqrt{\hat{v}_{n,i}} \right),
\end{split}
\end{align}
where $\theta \in [0,1)$, $\bm{v}_n = \hat{\bm{v}}_n = \bm{0}$, and $\bm{x} \odot \bm{x} := (x_i^2)_{i=1}^d$ for $\bm{x} = (x_i)_{i=1}^d \in \mathbb{R}^d$. Algorithm \ref{algo:1} with $\mathsf{H}_n = \mathsf{H}^{\mathrm{Adam}}_n$ and $\gamma_n = \delta_n = 0$ resembles Adam \citep{adam}\footnote{While Adam \citep{adam} used $\mathsf{H}_n := \mathsf{diag} (\sqrt{\bar{v}_{n,i}})$, we use $\mathsf{H}^{\mathrm{Adam}}_n$ to guarantee convergence \citep{iiduka2021}.}. We also define $\mathsf{H}^{\mathrm{AMSGrad}}_n$ by
\begin{align}\label{amsg}
\begin{split}
&\bm{v}_n := \theta \bm{v}_{n-1} + (1-\theta) \bm{\mathsf{G}}_{n} \odot \bm{\mathsf{G}}_{n},\\
&\hat{\bm{v}}_n := \max \{ \hat{v}_{n-1,i}, {v}_{n,i} \},\\
&\mathsf{H}^{\mathrm{AMSGrad}}_n := \mathsf{diag} \left(\sqrt{\hat{v}_{n,i}} \right).
\end{split}
\end{align}
Algorithm \ref{algo:1} with $\mathsf{H}_n = \mathsf{H}^{\mathrm{AMSGrad}}_n$ and $\gamma_n = \delta_n = 0$ is equivalent to AMSGrad \citep{reddi2018}. We compared Algorithm \ref{algo:1} with SGD \citep{robb1951}, Momentum \citep{polyak1964,nes1983}, RMSprop \citep{rmsprop}, Adagrad \citep{adagrad}, Adam \citep{adam}, AMSGrad \citep{reddi2018}, and AdamW \citep{loshchilov2018decoupled} (see Sections \ref{appen:5} and \ref{appen:6} for details).

The experiments used two Intel(R) Xeon(R) Gold 6148 2.4-GHz CPUs with 20 cores each and a 16-GB NVIDIA Tesla V100 900-Gbps GPU. The experimental code was written in Python 3.7.5 using the NumPy 1.21.6 package and PyTorch 1.7.1 package. Python implementations of the optimizers used in the numerical experiments are available at\\ 
\url{https://github.com/iiduka-researches/202210-izumi}.

\subsection{Image classification}
For image classification, we used a residual neural network (ResNet), a relatively deep model derived from convolutional neural networks (CNNs). It was applied to the CIFAR-100 and CIFAR-10 benchmark datasets\footnote{\url{https://www.cs.toronto.edu/~kriz/cifar.html}} for image classification. These datasets respectively comprise 100 and 10 classes and each contain 6,000 32 $\times$ 32 pixel images. Each dataset was separated into 50,000 training images and 10,000 test images (1,000 randomly selected images per class). A 18-layer ResNet was trained on CIFAR-100 and CIFAR-10 \citep{he2015}. Following common practice in image classification, cross-entropy was used as the loss function for model fitting.

The image classification results for Algorithm \ref{algo:1} with constant and diminishing learning rates are shown in Figures \ref{fig:1_c}, \ref{fig:1_d}, \ref{fig:3_c}, and \ref{fig:3_d}, where panels (a), (b), and (c) respectively show the training loss function value, training classification accuracy score, and test classification accuracy score as functions of the number of epochs. Figures \ref{fig:5_0}, \ref{fig:5}, \ref{fig:5_1}, \ref{fig:7_0}, \ref{fig:9}, and \ref{fig:7_1} present box-plot comparisons of Algorithm \ref{algo:1} with constant (panel (a)) and diminishing (panel (b)) learning rates in terms of the training loss function value, training classification accuracy score, and test classification accuracy score. As shown, Algorithm \ref{algo:1} performed better with constant learning rates than with diminishing learning rates. For example, looking at Figures \ref{fig:1_c}(a) and \ref{fig:5_0}(a)(ResNet-18 on CIFAR-100), we see that SCGAdam performed best at minimizing the training loss function. Looking at Figures \ref{fig:3_c}(a) and \ref{fig:7_0}(a)(ResNet-18 on CIFAR-10), we see that SCGAdam performed best at minimizing the training loss function. Figures \ref{fig:1_c}(c) and \ref{fig:5_1}(a)(ResNet-18 on CIFAR-100) also show that, when the number of epochs was 200, Momentum had a higher classification accuracy in terms of test classification accuracy score than those of the other algorithms. Meanwhile, AMSGrad and SCGAdam had 70 \% classification accuracy in terms of test classification accuracy score faster than the other algorithms (see Figure \ref{fig:1_c}(c)). Figure \ref{fig:3_c}(c) (ResNet-18 on CIFAR-10) also shows that, when the number of epochs was 200, Momentum had a higher classification accuracy in that score than those of the other algorithms, while SCGAdam had 90 \% classification accuracy in terms of test classification accuracy score faster than the other algorithms.

\begin{figure*}[htbp] 
 	\begin{tabular}{ccc}
 	\begin{minipage}{0.323\linewidth}
 		\centering
 		\includegraphics[width=1\textwidth]{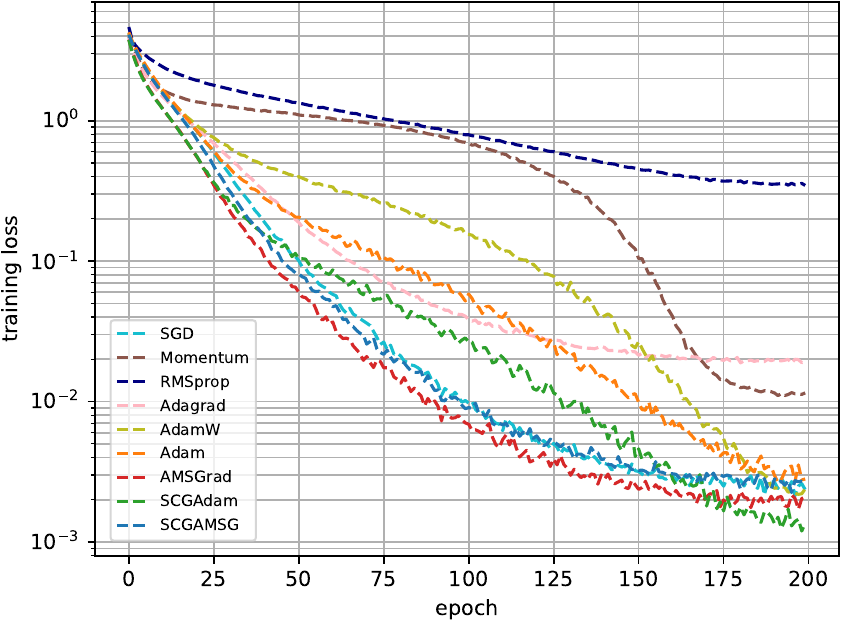}%
		\subcaption{}
		\label{fig:1_c_l}
 	\end{minipage}
 	\begin{minipage}{0.323\linewidth}
 		\centering
 		\includegraphics[width=1\textwidth]{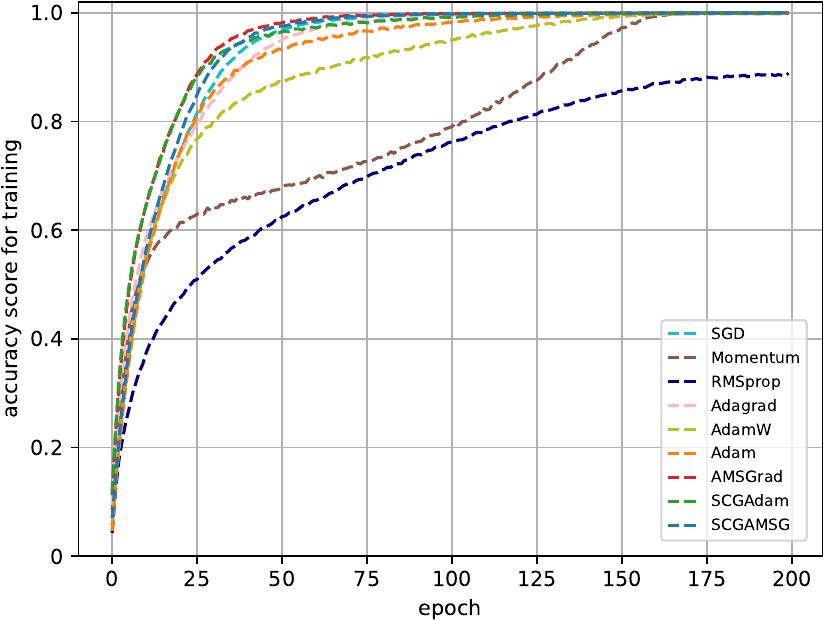}%
		\subcaption{}
		\label{fig:1_c_a}
 	\end{minipage}
 	\begin{minipage}{0.323\linewidth}
 		\centering
 		\includegraphics[width=1\textwidth]{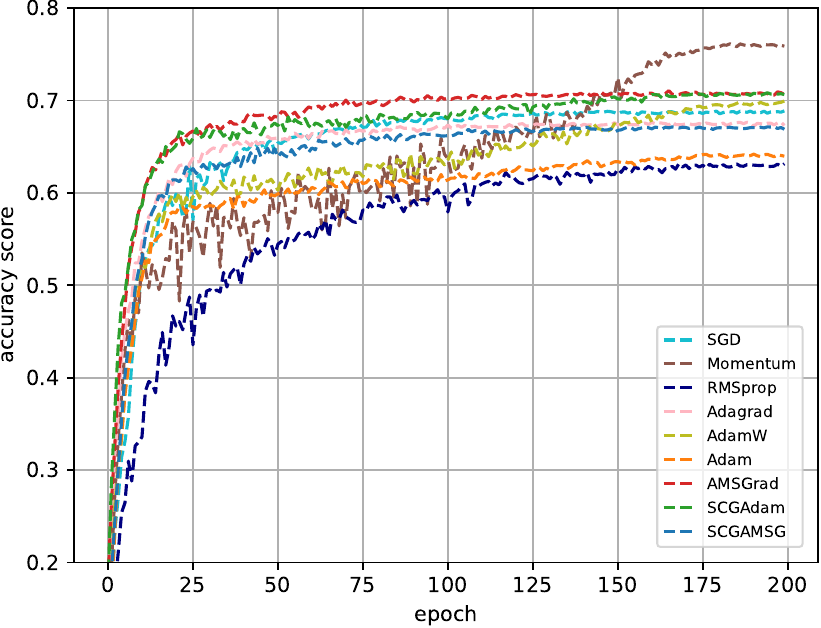}%
		\subcaption{}
		\label{fig:1_c_a_t}
 	\end{minipage}
 	\end{tabular}
	\vspace*{-5pt}
\caption{Results of Algorithm \ref{algo:1} with constant learning rates for training ResNet-18 on the CIFAR-100 dataset: (a) training loss function value, (b) training classification accuracy score, and (c) test classification accuracy score.}
\label{fig:1_c}
\end{figure*}

\begin{figure*}[h] 
 	\begin{tabular}{ccc}
 	\begin{minipage}{0.323\linewidth}
 		\centering
 		\includegraphics[width=1\textwidth]{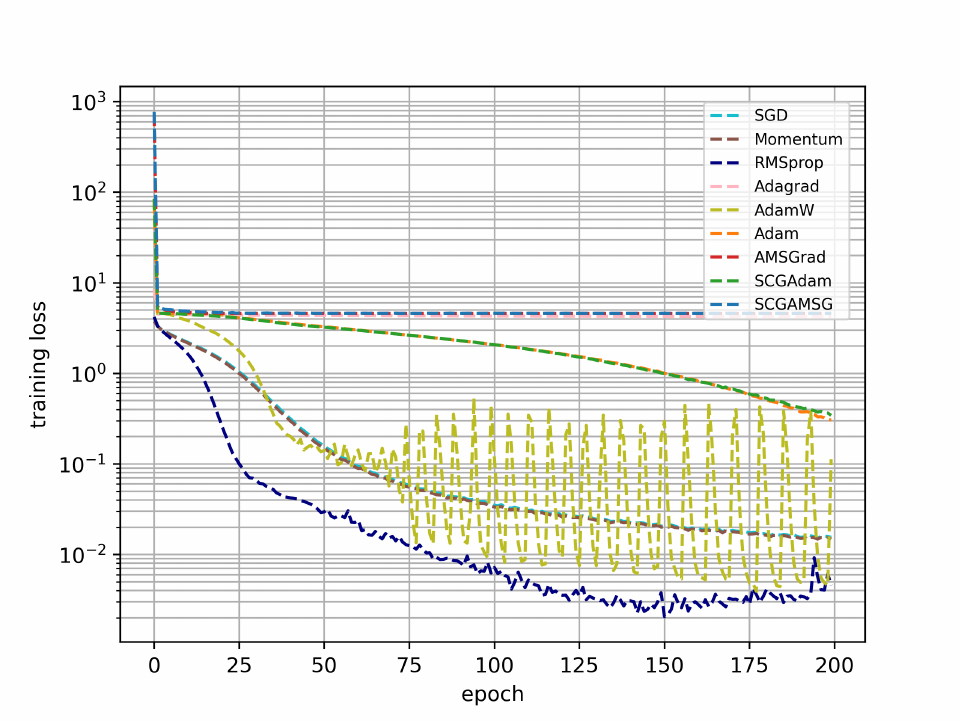}%
		\subcaption{}
		\label{fig:1_d_l}
 	\end{minipage}
 	\begin{minipage}{0.323\linewidth}
 		\centering
 		\includegraphics[width=1\textwidth]{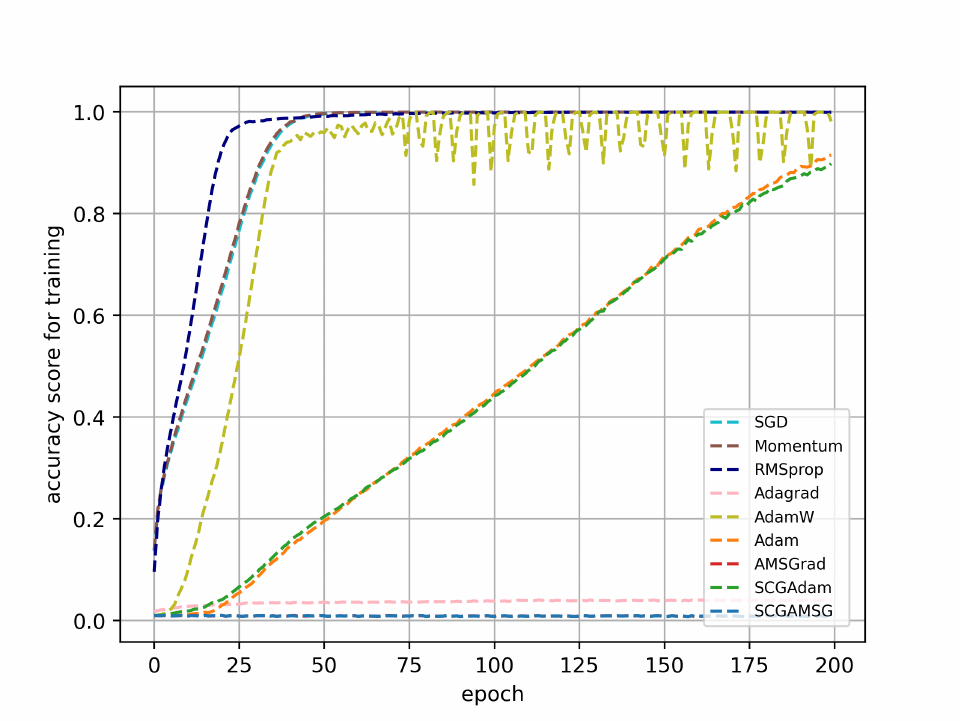}%
		\subcaption{}
		\label{fig:1_d_a}
 	\end{minipage}
 	\begin{minipage}{0.323\linewidth}
 		\centering
 		\includegraphics[width=1\textwidth]{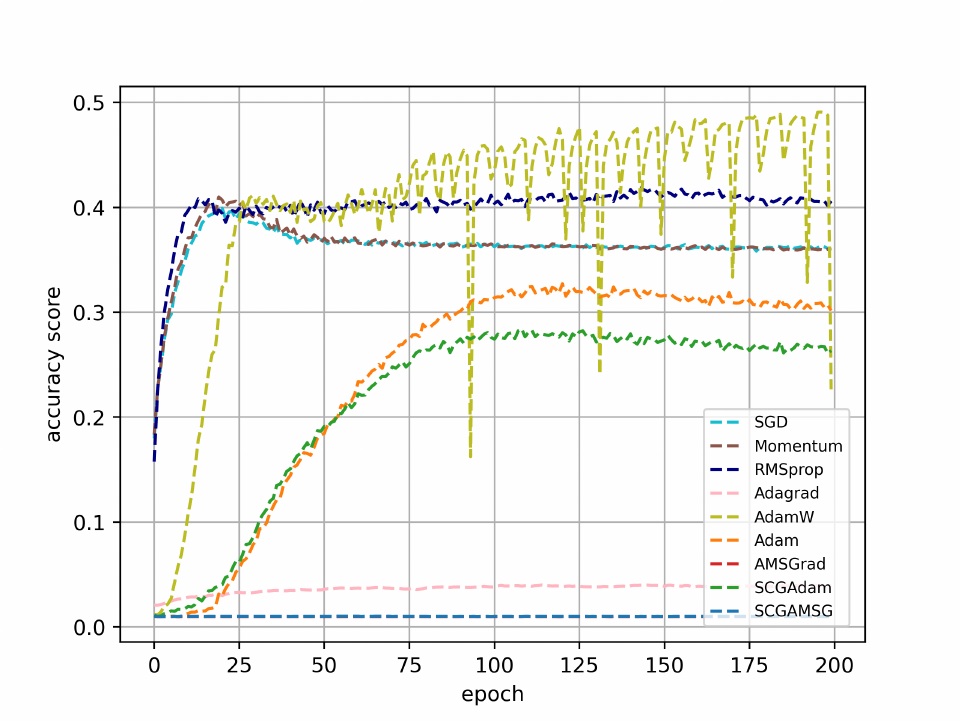}%
		\subcaption{}
		\label{fig:1_d_a_t}
 	\end{minipage}
 	\end{tabular}
	\vspace*{-5pt}
\caption{Results of Algorithm \ref{algo:1} with diminishing learning rates for training ResNet-18 on the CIFAR-100 dataset: (a) training loss function value, (b) training classification accuracy score, and (c) test classification accuracy score.}
\label{fig:1_d}
\end{figure*}

\begin{figure*}[htbp] 
 	\begin{tabular}{cc}
 	\begin{minipage}{0.5\linewidth}
 		\centering
 		\includegraphics[width=0.85\textwidth]{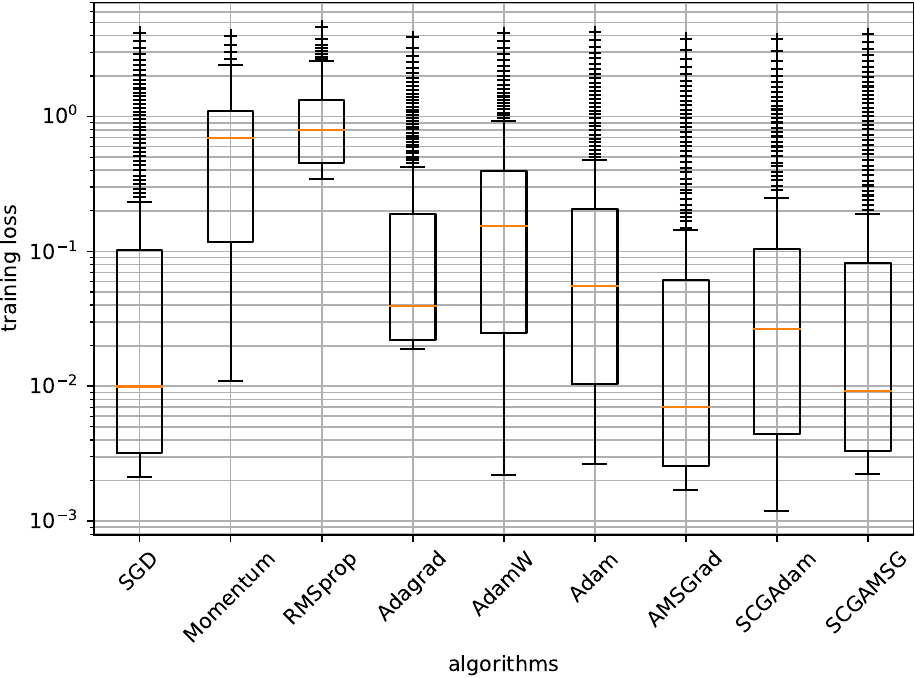}
		\subcaption{}
		\label{fig:5_c_f}
 	\end{minipage}
 	\begin{minipage}{0.5\linewidth}
 		\centering
 		\includegraphics[width=0.85\textwidth]{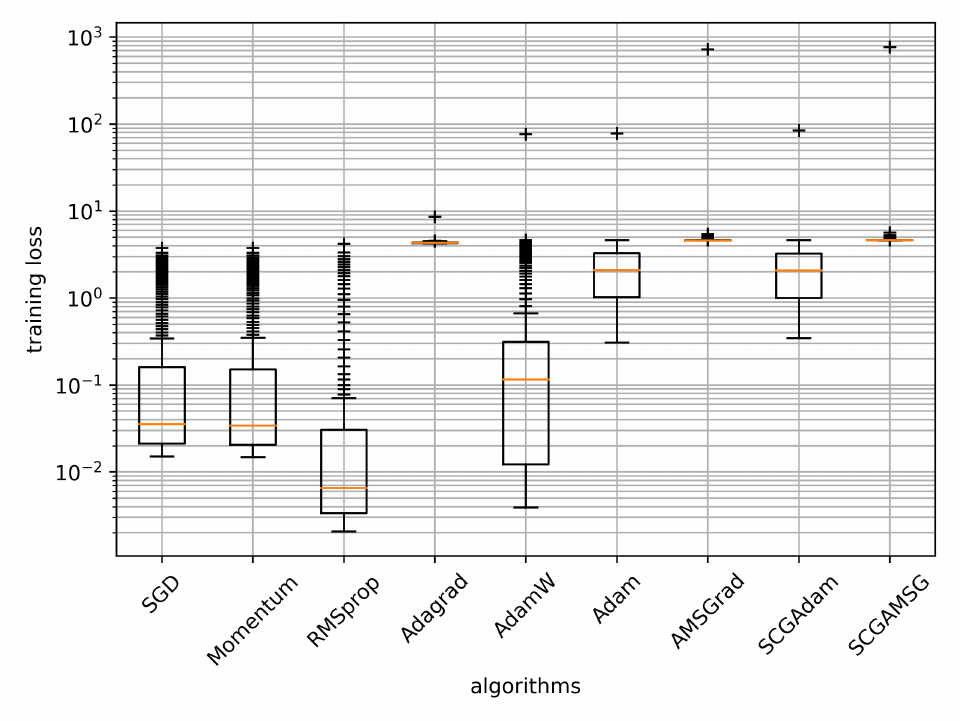}
		\subcaption{}
		\label{fig:5_d_f}
 	\end{minipage}
 	\end{tabular}
\caption{Box plots of training loss function values for Algorithm \ref{algo:1} for training ResNet-18 on the CIFAR-100 dataset: (a) constant learning rates and (b) diminishing learning rates.}
\label{fig:5_0}
\end{figure*}

\begin{figure*}[htbp] 
 	\begin{tabular}{cc}
 	\begin{minipage}{0.5\linewidth}
 		\centering
 		\includegraphics[width=0.85\textwidth]{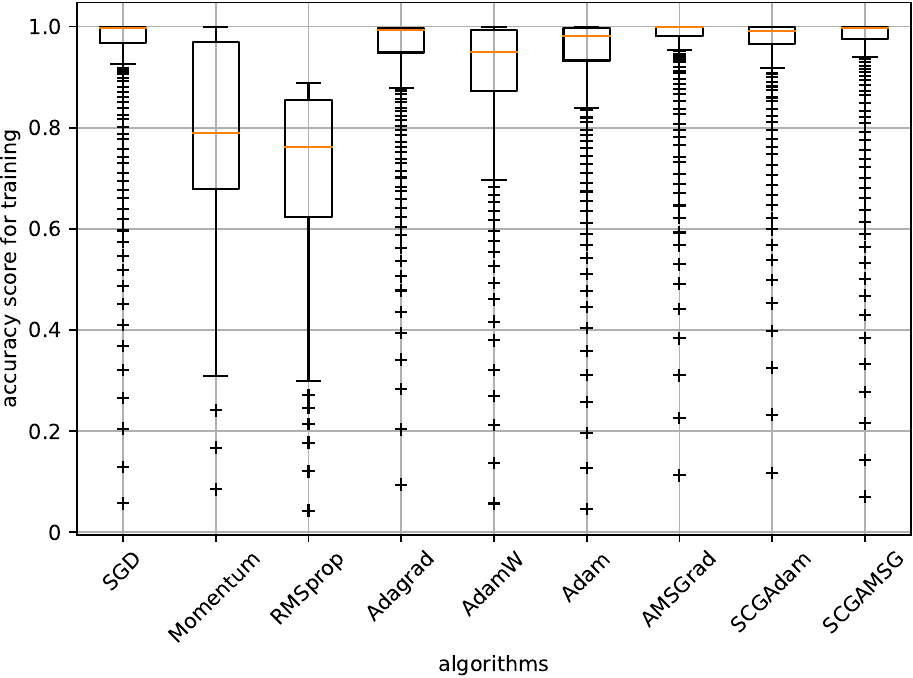}
		\subcaption{}
		\label{fig:5_c}
 	\end{minipage}
 	\begin{minipage}{0.5\linewidth}
 		\centering
 		\includegraphics[width=0.85\textwidth]{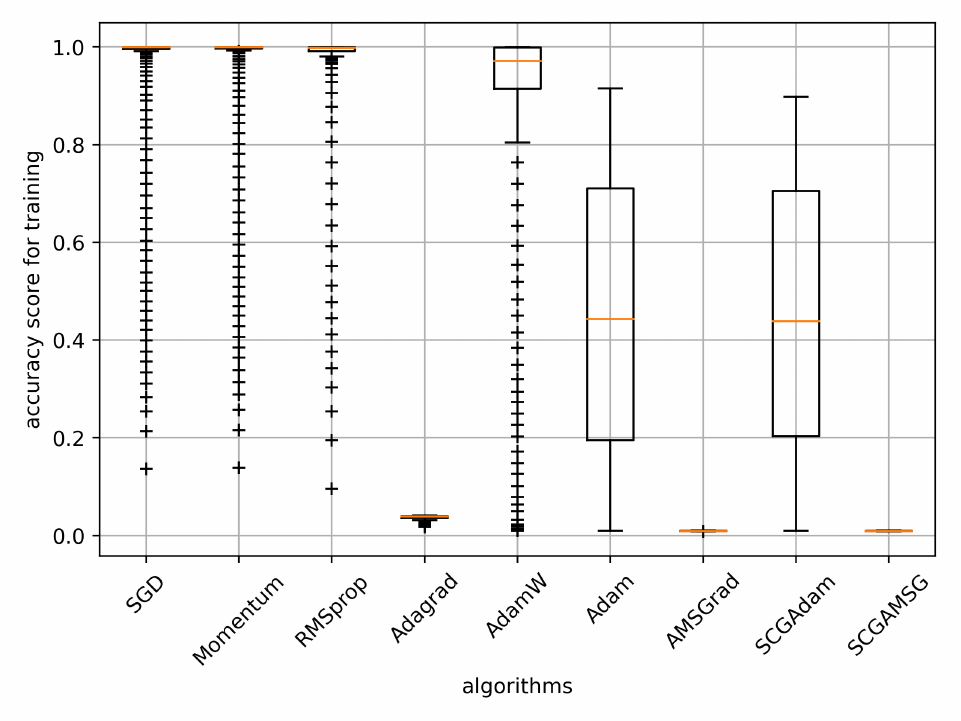}
		\subcaption{}
		\label{fig:5_d}
 	\end{minipage}
 	\end{tabular}
\caption{Box plots of training classification accuracy score for Algorithm \ref{algo:1} for training ResNet-18 on the CIFAR-100 dataset: (a) constant learning rates and (b) diminishing learning rates.}
\label{fig:5}
\end{figure*}

\begin{figure}[htbp] 
 	\begin{tabular}{cc}
 	\begin{minipage}{0.5\linewidth}
 		\centering
 		\includegraphics[width=0.85\textwidth]{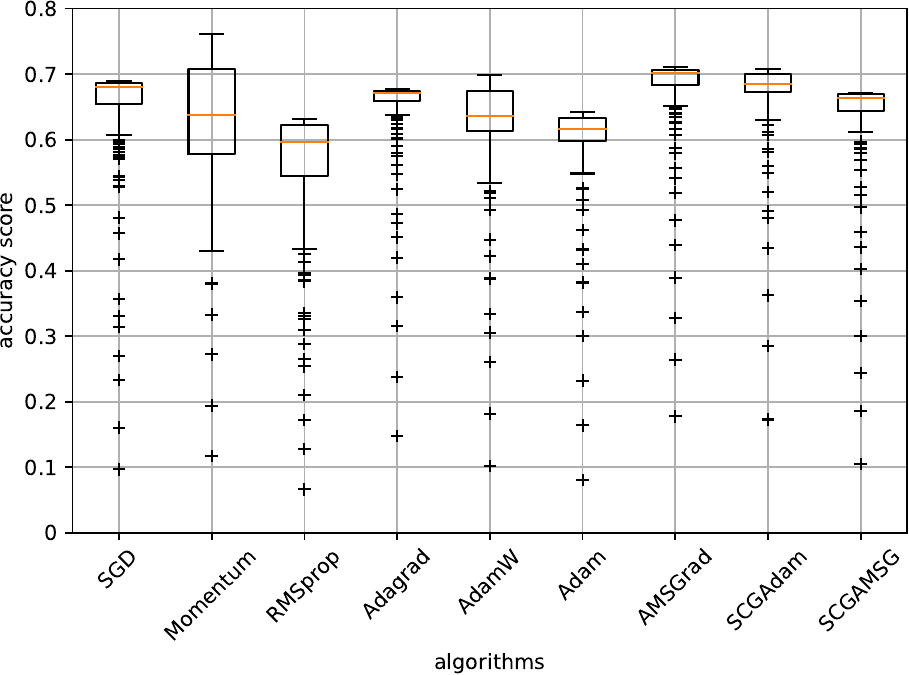}
		\subcaption{}
		\label{fig:5_c_1}
 	\end{minipage}
 	\begin{minipage}{0.5\linewidth}
 		\centering
 		\includegraphics[width=0.85\textwidth]{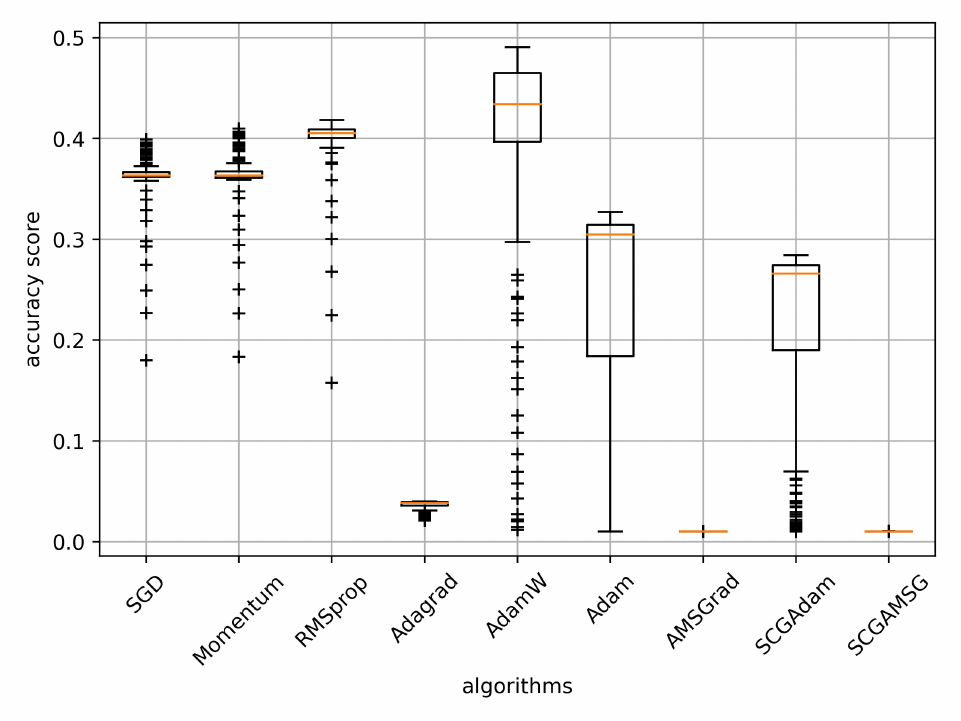}
		\subcaption{}
		\label{fig:5_d_1}
 	\end{minipage}
 	\end{tabular}
\caption{Box plots of test classification accuracy score for Algorithm \ref{algo:1} for training ResNet-18 on the CIFAR-100 dataset: (a) constant learning rates and (b) diminishing learning rates.}
\label{fig:5_1}
\end{figure}

\begin{figure}[h] 
 	\begin{tabular}{ccc}
 	\begin{minipage}{0.323\linewidth}
 		\centering
 		\includegraphics[width=1\textwidth]{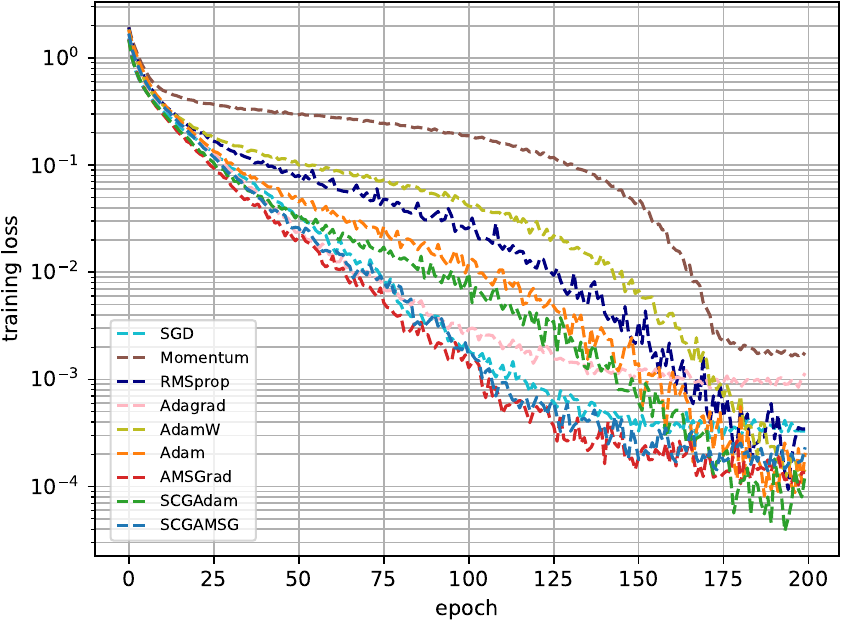}
		\subcaption{}
		\label{fig:3_c_l}
 	\end{minipage}
 	\begin{minipage}{0.323\linewidth}
 		\centering
 		\includegraphics[width=1\textwidth]{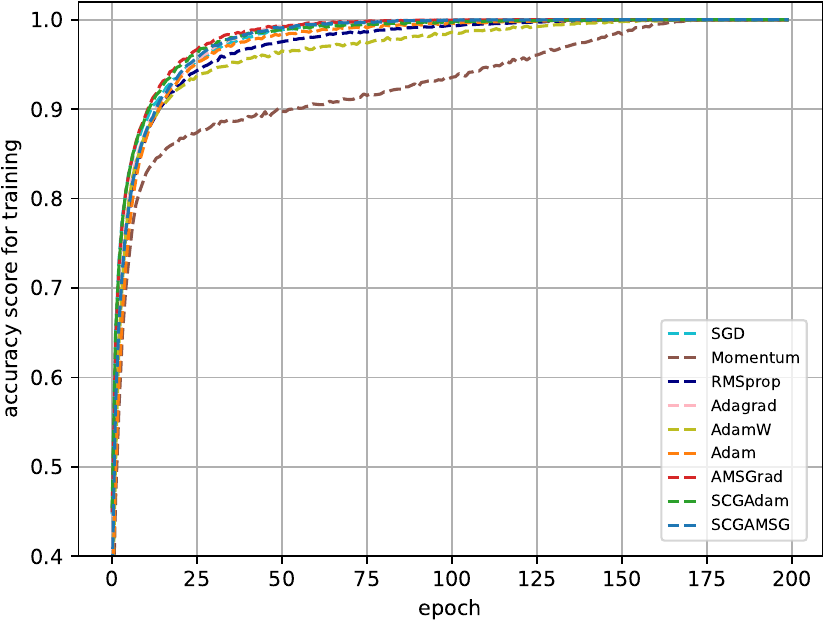}
		\subcaption{}
		\label{fig:3_c_a}
 	\end{minipage}
 	\begin{minipage}{0.323\linewidth}
 		\centering
 		\includegraphics[width=1\textwidth]{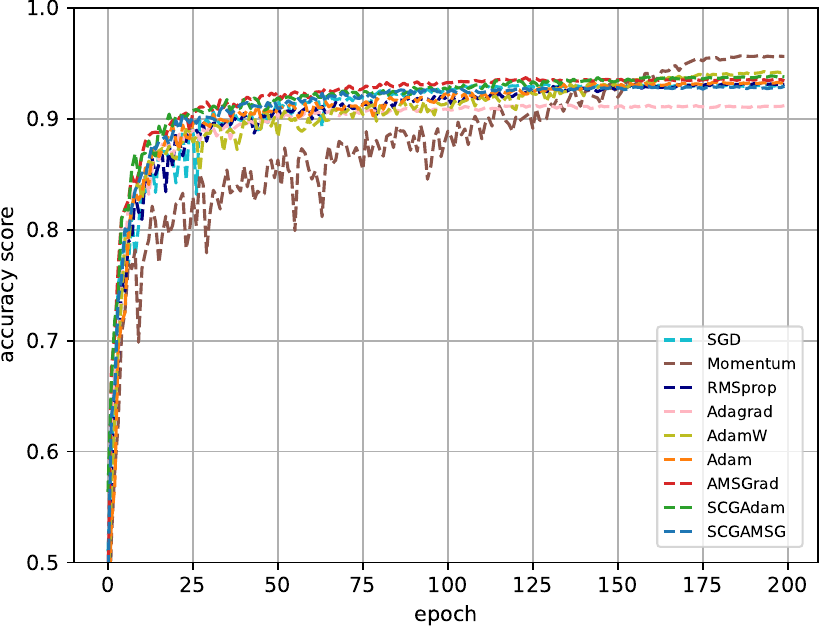}
		\subcaption{}
		\label{fig:3_c_a_t}
 	\end{minipage}
 	\end{tabular}
\caption{Results of Algorithm \ref{algo:1} with constant learning rates for training ResNet-18 on the CIFAR-10 dataset: (a) training loss function value, (b) training classification accuracy score, and (c) test classification accuracy score.}
\label{fig:3_c}
\end{figure}

\begin{figure}[h] 
 	\begin{tabular}{ccc}
 	\begin{minipage}{0.323\linewidth}
 		\centering
 		\includegraphics[width=1\textwidth]{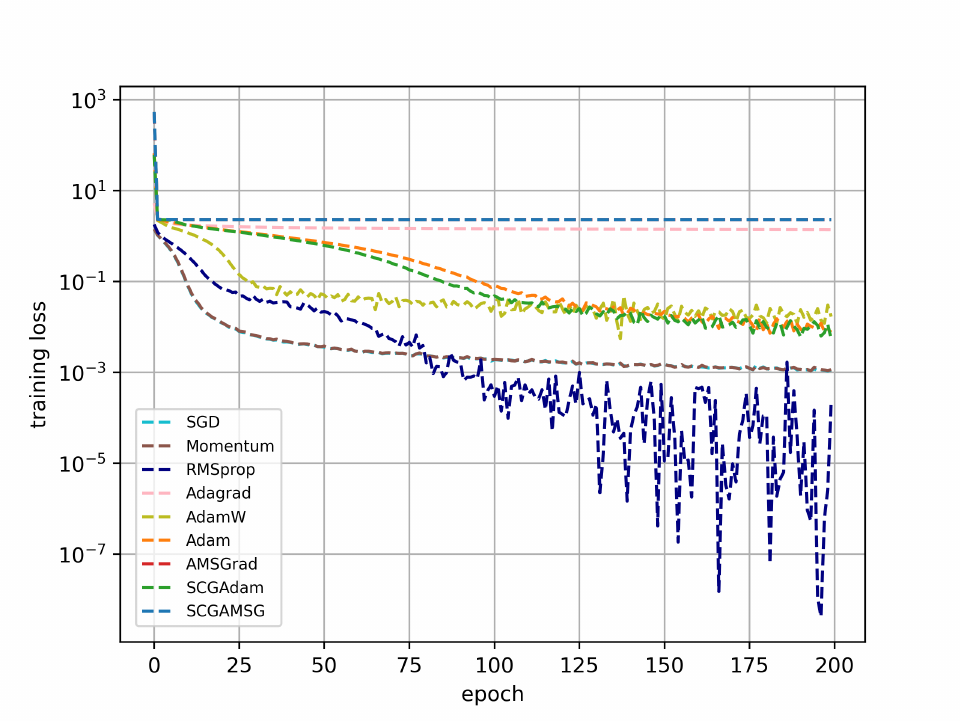}
		\subcaption{}
		\label{fig:3_d_l}
 	\end{minipage}
 	\begin{minipage}{0.323\linewidth}
 		\centering
 		\includegraphics[width=1\textwidth]{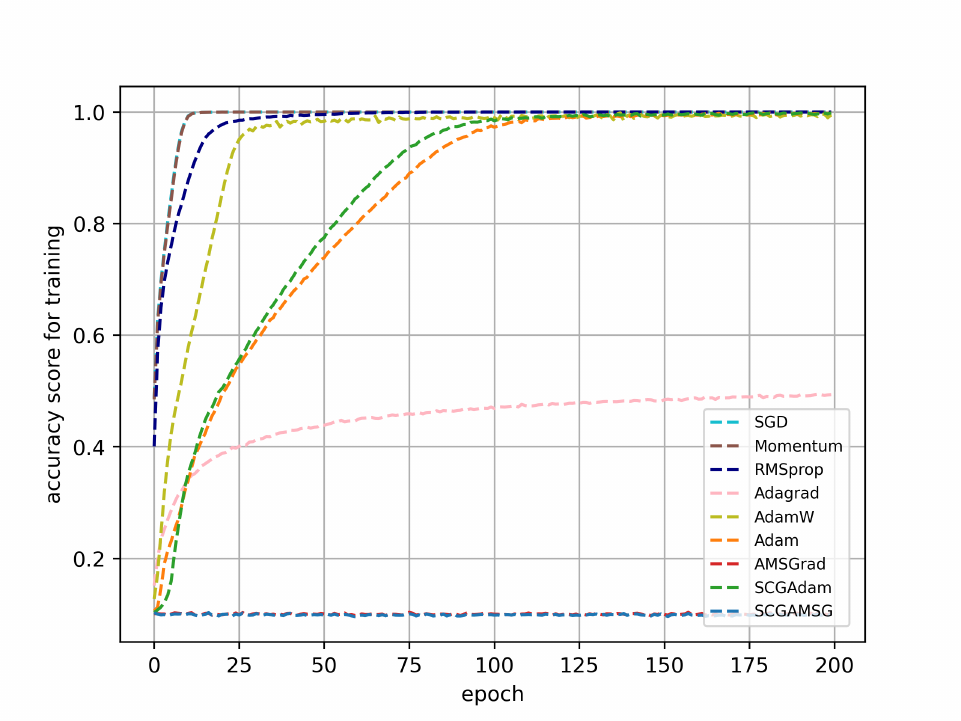}
		\subcaption{}
		\label{fig:3_d_a}
 	\end{minipage}
 	\begin{minipage}{0.323\linewidth}
 		\centering
 		\includegraphics[width=1\textwidth]{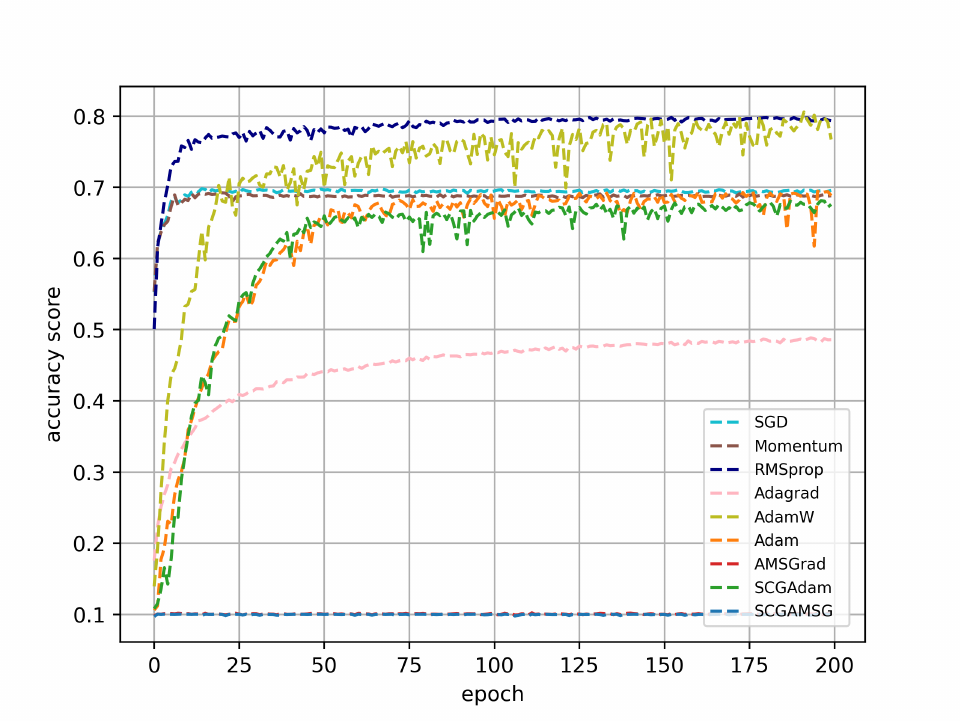}
		\subcaption{}
		\label{fig:3_d_a_t}
 	\end{minipage}
 	\end{tabular}
\caption{Results of Algorithm \ref{algo:1} with diminishing learning rates for training ResNet-18 on the CIFAR-10 dataset: (a) training loss function value, (b) training classification accuracy score, and (c) test classification accuracy score.}
\label{fig:3_d}
\end{figure}

\begin{figure*}[h] 
 	\begin{tabular}{cc}
 	\begin{minipage}{0.5\linewidth}
 		\centering
 		\includegraphics[width=1\textwidth]{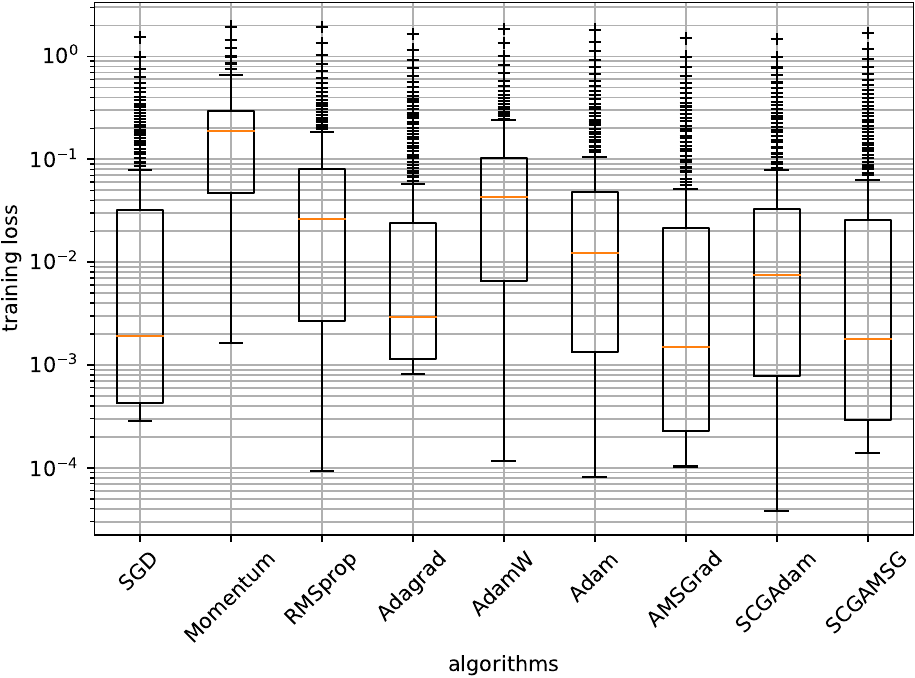}
		\subcaption{}
		\label{fig:7_c_f}
 	\end{minipage}
 	\begin{minipage}{0.5\linewidth}
 		\centering
 		\includegraphics[width=1\textwidth]{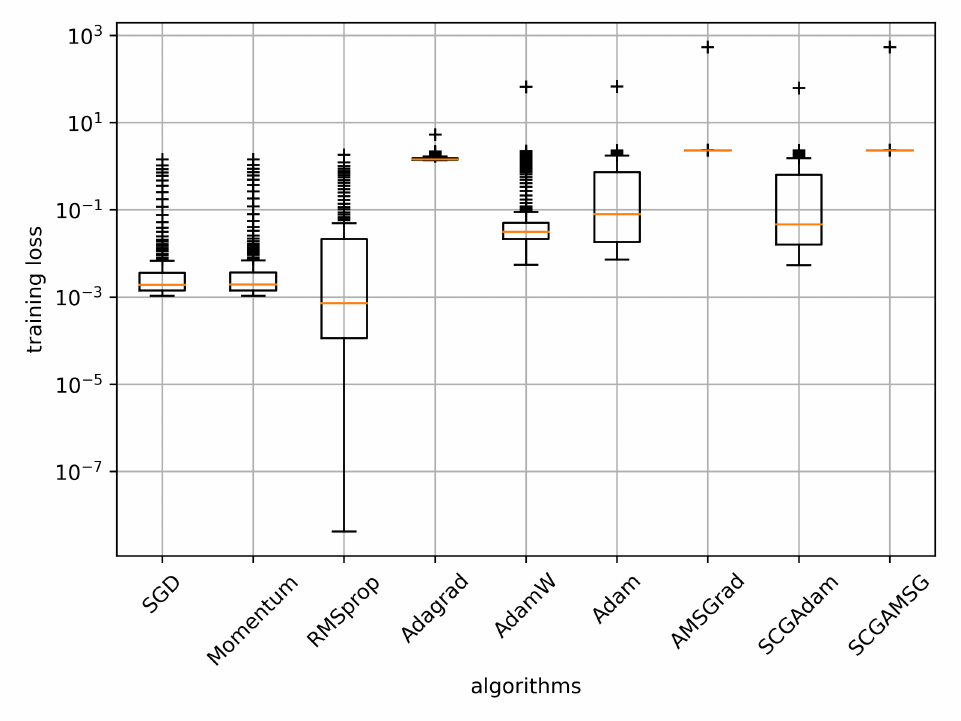}
		\subcaption{}
		\label{fig:7_d_f}
 	\end{minipage}
 	\end{tabular}
\caption{Box plots of training loss function values for Algorithm \ref{algo:1} for training ResNet-18 on the CIFAR-10 dataset: (a) constant learning rates and (b) diminishing learning rates.}
\label{fig:7_0}
\end{figure*}

\begin{figure*}[h] 
 	\begin{tabular}{cc}
 	\begin{minipage}{0.5\linewidth}
 		\centering
 		\includegraphics[width=1\textwidth]{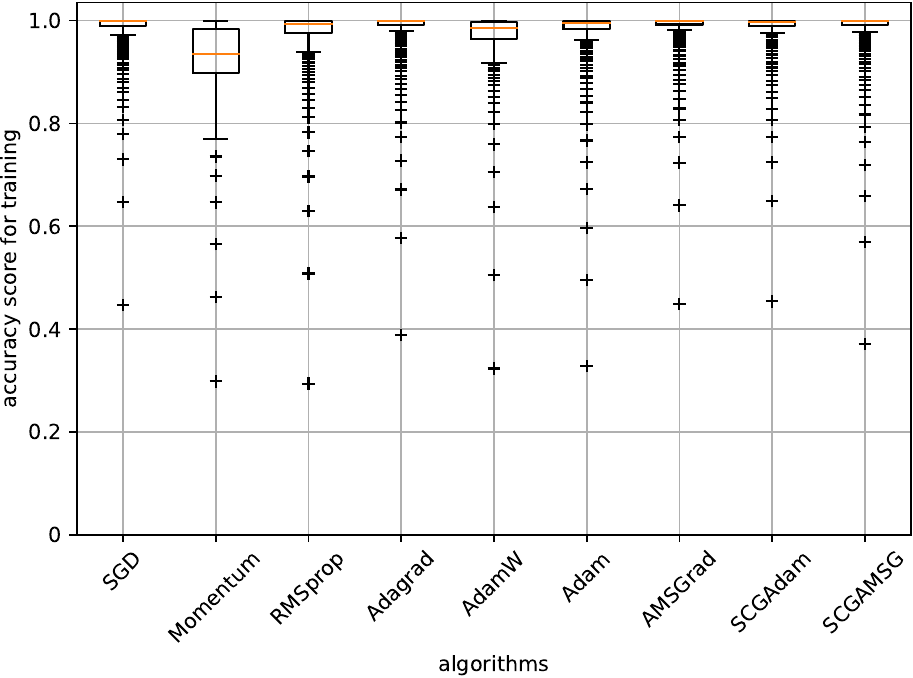}
		\subcaption{}
		\label{fig:7_c}
 	\end{minipage}
 	\begin{minipage}{0.5\linewidth}
 		\centering
 		\includegraphics[width=1\textwidth]{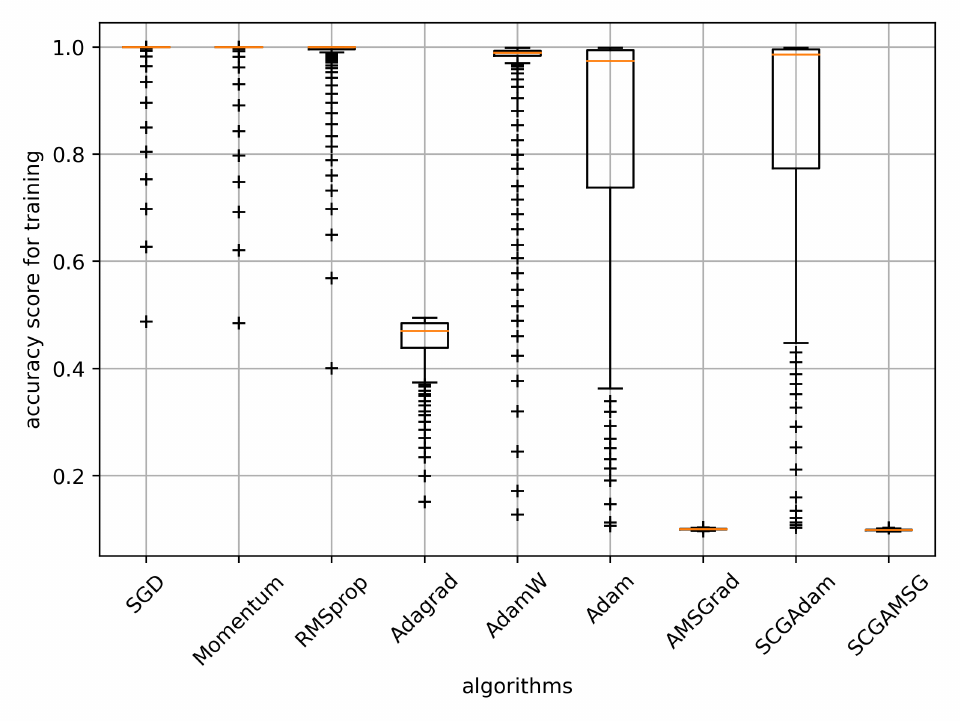}
		\subcaption{}
		\label{fig:7_d}
 	\end{minipage}
 	\end{tabular}
\caption{Box plots of training classification accuracy score for Algorithm \ref{algo:1} for training ResNet-18 on the CIFAR-10 dataset: (a) constant learning rates and (b) diminishing learning rates.}
\label{fig:9}
\end{figure*}

\begin{figure*}[h] 
 	\begin{tabular}{cc}
 	\begin{minipage}{0.5\linewidth}
 		\centering
 		\includegraphics[width=1\textwidth]{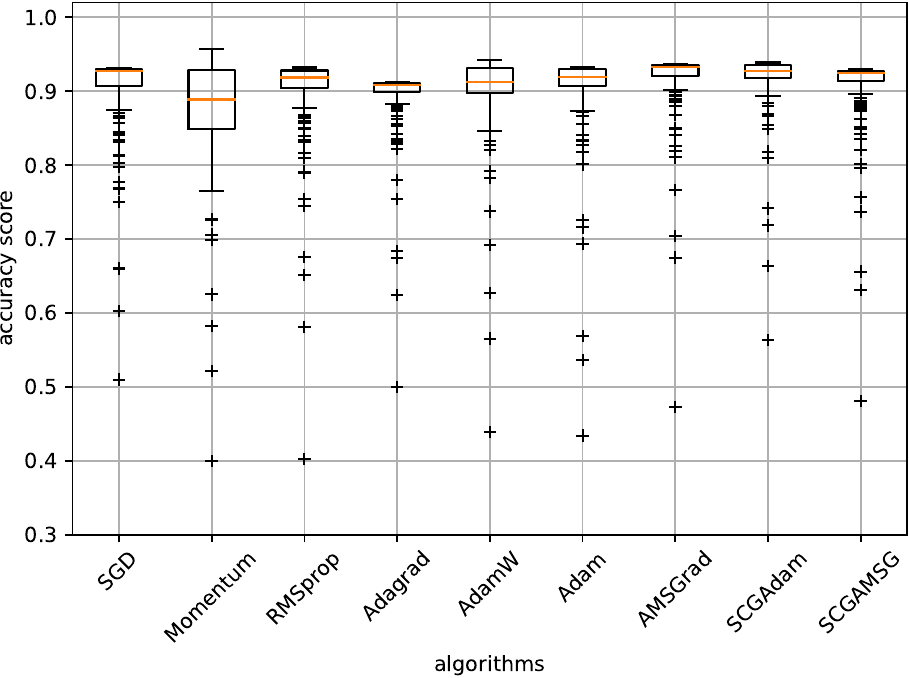}
		\subcaption{}
		\label{fig:7_c_1}
 	\end{minipage}
 	\begin{minipage}{0.5\linewidth}
 		\centering
 		\includegraphics[width=1\textwidth]{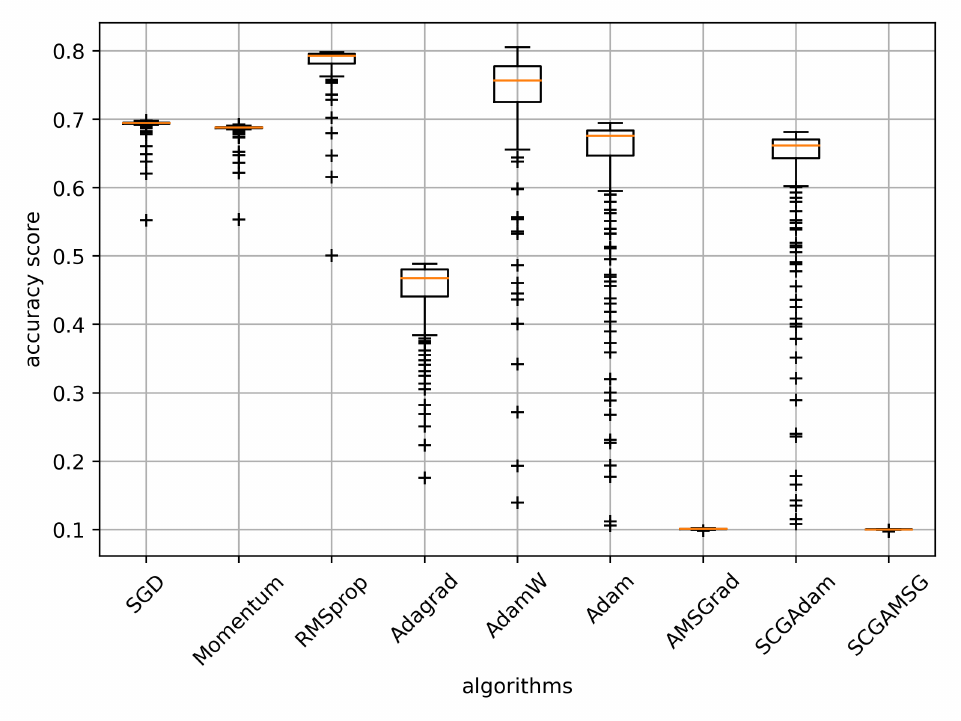}
		\subcaption{}
		\label{fig:7_d_1}
 	\end{minipage}
 	\end{tabular}
\caption{Box plots of test classification accuracy score for Algorithm \ref{algo:1} for training ResNet-18 on the CIFAR-10 dataset: (a) constant learning rates and (b) diminishing learning rates.}
\label{fig:7_1}
\end{figure*}

\clearpage
\subsection{Text classification}
For text classification, we used long short-term memory (LSTM), an artificial recurrent neural network (RNN) architecture used for deep learning that is based on natural language processing. The IMDb dataset \citep{imdb} %\footnote{\url{https://datasets.imdbws.com/}} 
was used; this dataset comprises 50,000 movie reviews and associated binary sentiment polarity labels. The data were split into 25,000 training sets and 25,000 test sets. The data were classified using a multilayer LSTM neural network and AlphaDropout %\footnote{ \url{https://pytorch.org/docs/1.7.1/generated/torch.nn.AlphaDropout.html\#torch.nn.AlphaDropout}}
 for overfitting suppression. This network included one affine layer and employed a sigmoid activation function for the output. As with the image classification, cross-entropy was used as the loss function for the model fitting.

The text classification results for Algorithm \ref{algo:1} with constant learning rates are shown in Figures \ref{fig:2_c} and \ref{fig:2_d}, where panels (a), (b), and (c) respectively show the training loss function value, training classification accuracy score, and test classification accuracy score as functions of the number of epochs. Figures \ref{fig:6_0}, \ref{fig:6}, and \ref{fig:7} present box-plot comparisons of Algorithm \ref{algo:1} with constant (panel (a)) and diminishing (panel (b)) learning rates in terms of the training loss function value, training classification accuracy score, and test classification accuracy score. As was the case with the image datasets, Algorithm \ref{algo:1} performed better with constant learning rates than with diminishing learning rates. For example, looking at Figures \ref{fig:2_c}(a), and \ref{fig:6_0}(a), we see that SCGAdam, along with RMSprop, performed the best at minimizing the training loss function.
\vspace*{-10pt}
\begin{figure*}[h] 
 	\begin{tabular}{ccc}
 	\begin{minipage}{0.323\linewidth}
 		\centering
 		\includegraphics[width=1\textwidth]{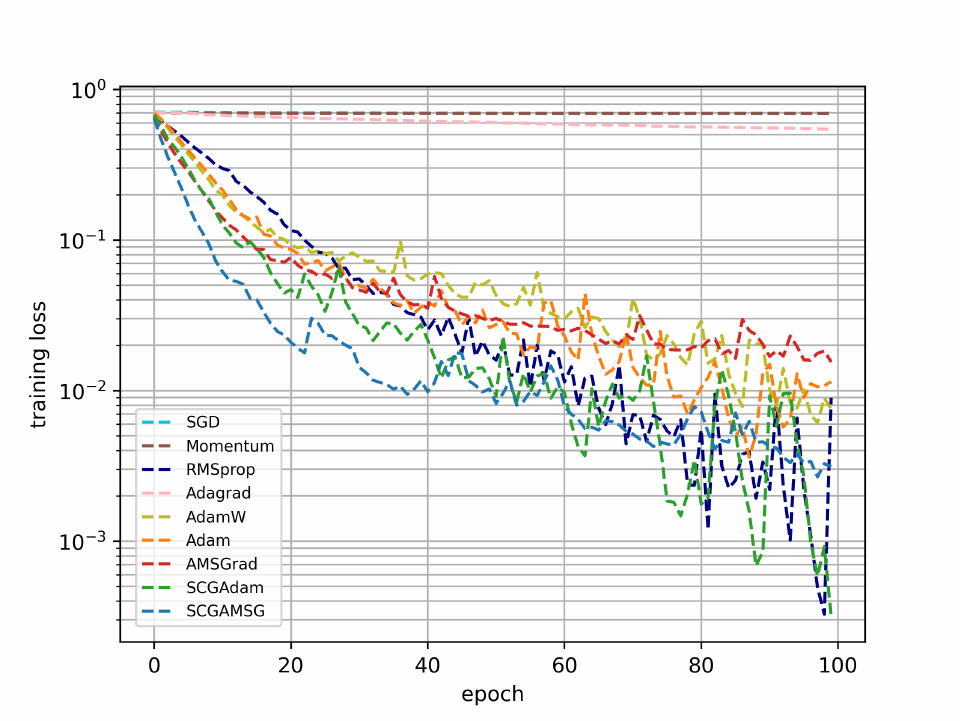}
		\subcaption{}
		\label{fig:2_c_l}
 	\end{minipage}
 	\begin{minipage}{0.323\linewidth}
 		\centering
 		\includegraphics[width=1\textwidth]{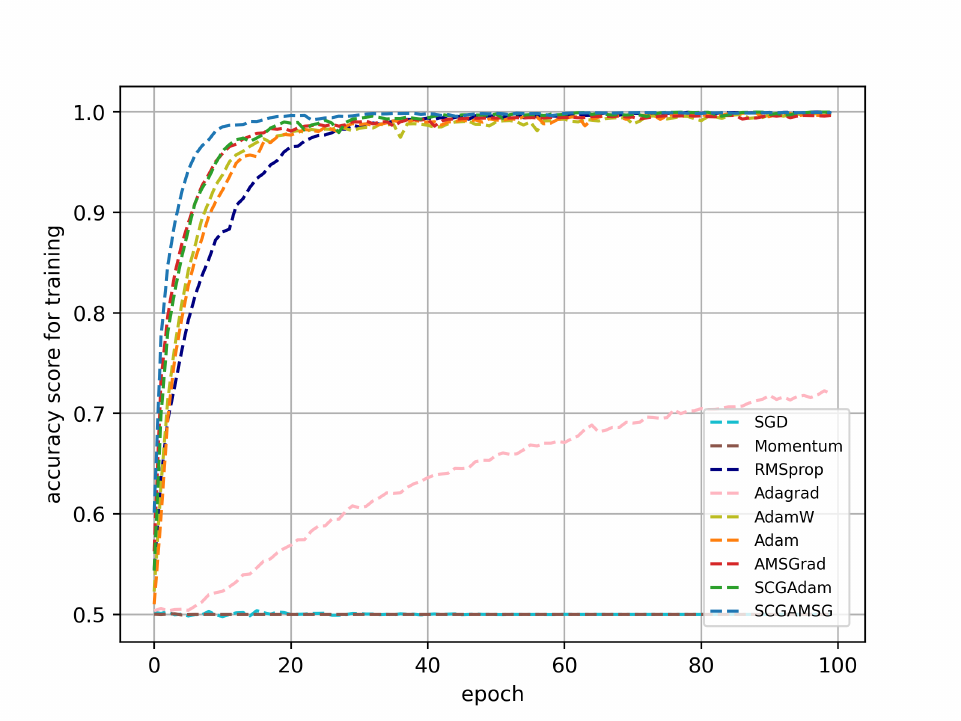}
		\subcaption{}
		\label{fig:2_c_a}
 	\end{minipage}
 	\begin{minipage}{0.323\linewidth}
 		\centering
 		\includegraphics[width=1\textwidth]{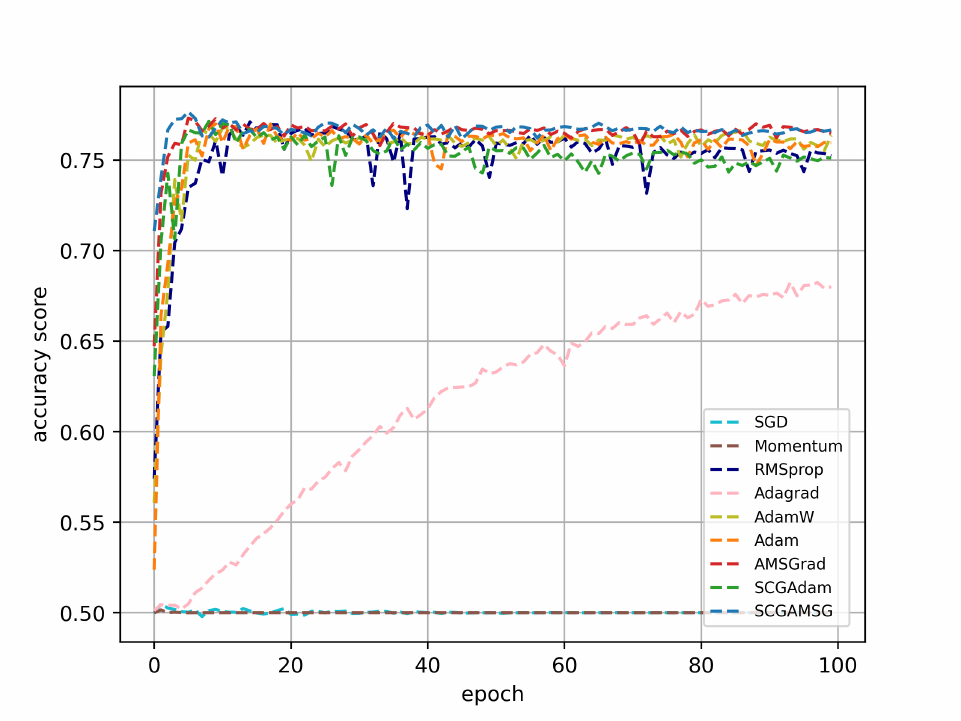}
		\subcaption{}
		\label{fig:2_c_a_t}
 	\end{minipage}
 	\end{tabular}
	\vspace*{-10pt}
\caption{Results of Algorithm \ref{algo:1} with constant learning rates on the IMDb dataset: (a) training loss function value, (b) training classification accuracy score, and (c) test classification accuracy score.}
\label{fig:2_c}
\end{figure*}

\vspace*{-10pt}
\begin{figure*}[htbp] 
 	\begin{tabular}{ccc}
 	\begin{minipage}{0.323\linewidth}
 		\centering
 		\includegraphics[width=1\textwidth]{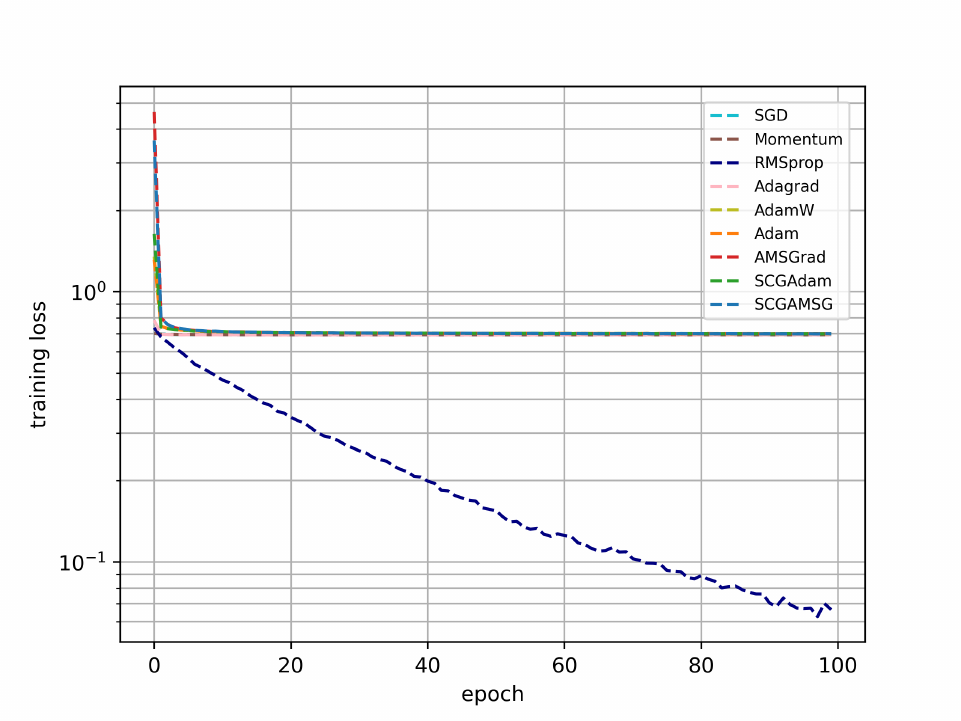}
		\subcaption{}
		\label{fig:2_d_l}
 	\end{minipage}
 	\begin{minipage}{0.323\linewidth}
 		\centering
 		\includegraphics[width=1\textwidth]{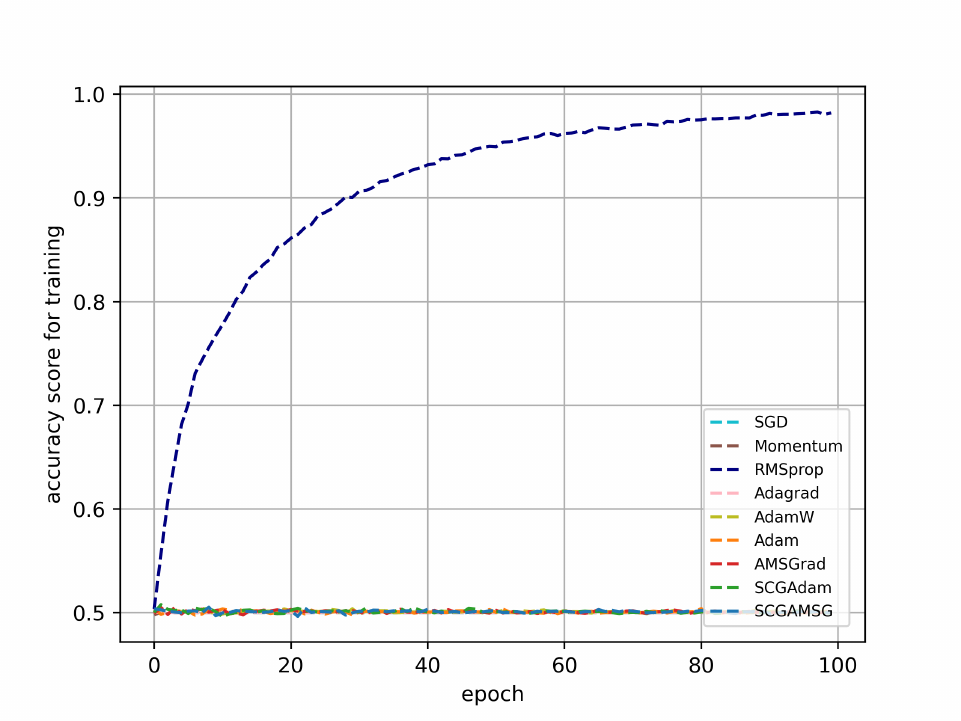}
		\subcaption{}
		\label{fig:2_d_a}
 	\end{minipage}
 	\begin{minipage}{0.323\linewidth}
 		\centering
 		\includegraphics[width=1\textwidth]{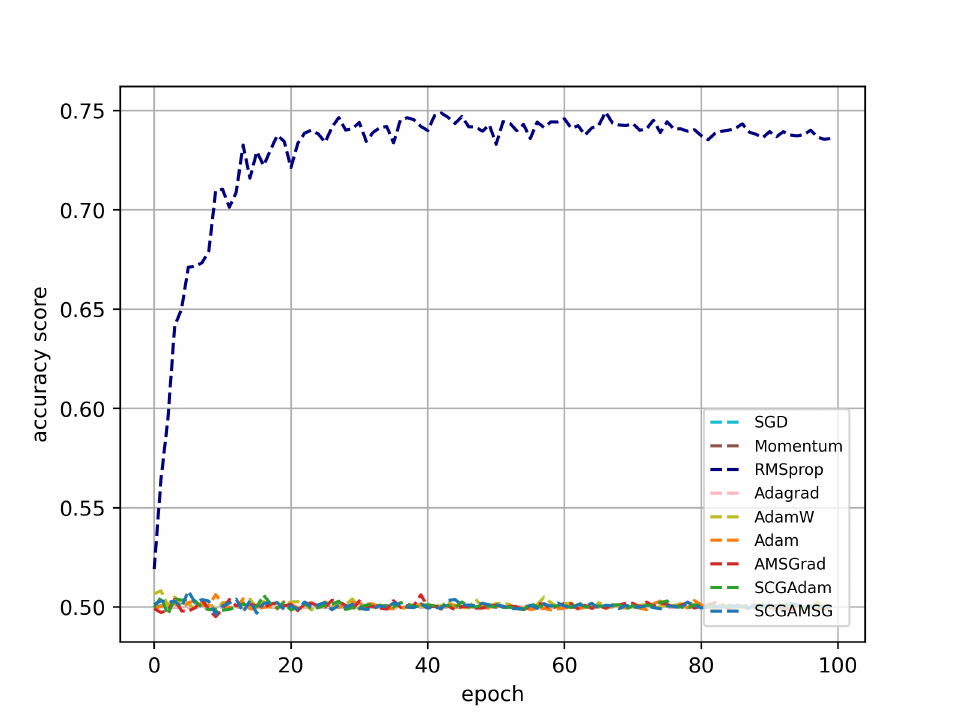}
		\subcaption{}
		\label{fig:2_d_a_t}
 	\end{minipage}
 	\end{tabular}
	\vspace*{-10pt}
\caption{Results of Algorithm \ref{algo:1} with diminishing learning rates on the IMDb dataset: (a) training loss function value, (b) training classification accuracy score, and (c) test classification accuracy score.}
\label{fig:2_d}
\end{figure*}

\begin{figure*}[h] 
 	\begin{tabular}{cc}
 	\begin{minipage}{0.5\linewidth}
 		\centering
 		\includegraphics[width=0.85\textwidth]{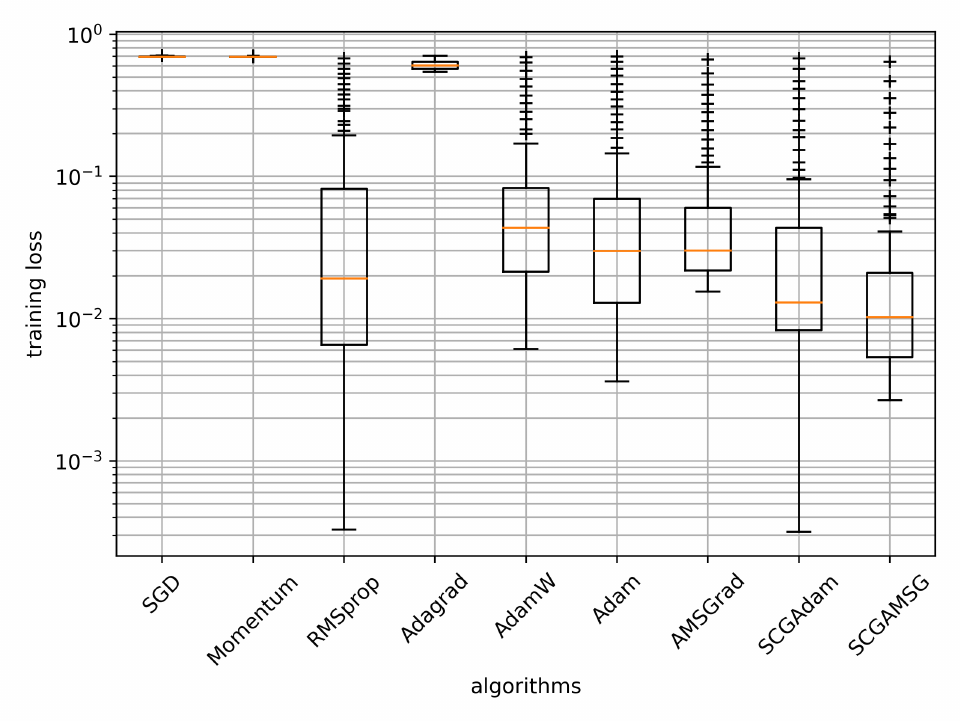}
		\subcaption{}
		\label{fig:6_c_f}
 	\end{minipage}
 	\begin{minipage}{0.5\linewidth}
 		\centering
 		\includegraphics[width=0.85\textwidth]{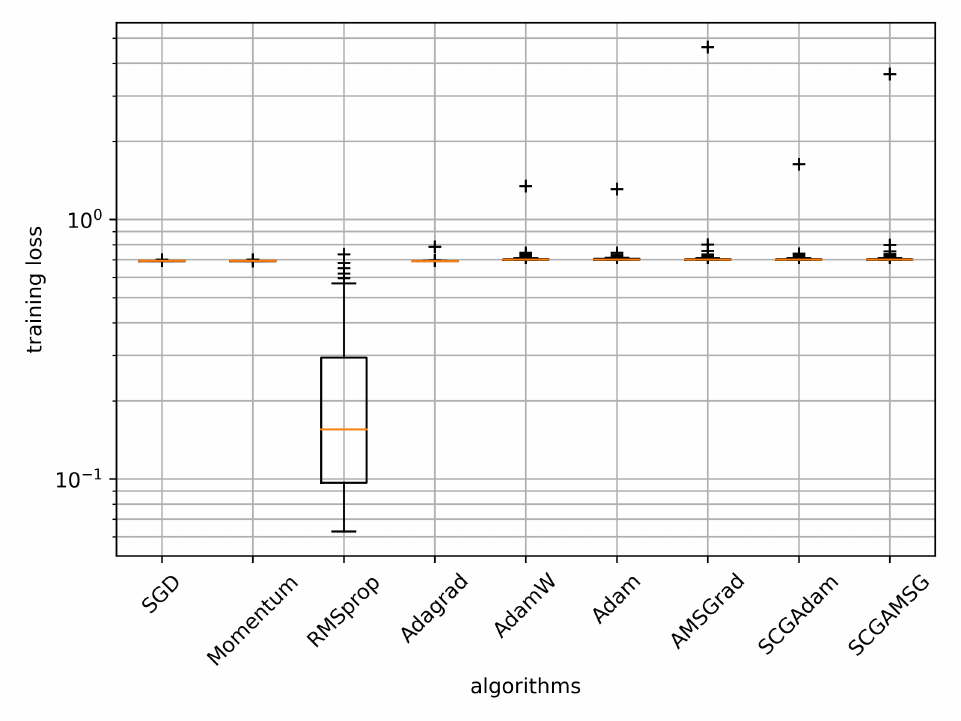}
		\subcaption{}
		\label{fig:6_d_f}
 	\end{minipage}
 	\end{tabular}
\caption{Box plots of training loss function values for Algorithm \ref{algo:1} on the IMDb dataset: (a) constant learning rates and (b) diminishing learning rates.}
\label{fig:6_0}
\end{figure*}

\begin{figure*}[h] 
 	\begin{tabular}{cc}
 	\begin{minipage}{0.5\linewidth}
 		\centering
 		\includegraphics[width=0.85\textwidth]{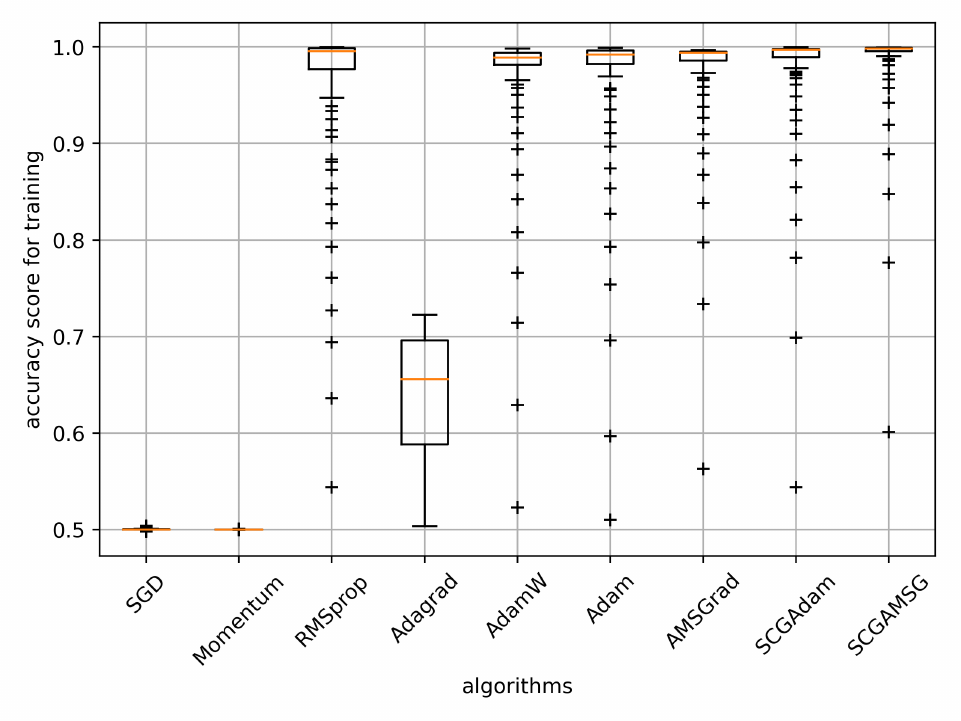}
		\subcaption{}
		\label{fig:6_c}
 	\end{minipage}
 	\begin{minipage}{0.5\linewidth}
 		\centering
 		\includegraphics[width=0.85\textwidth]{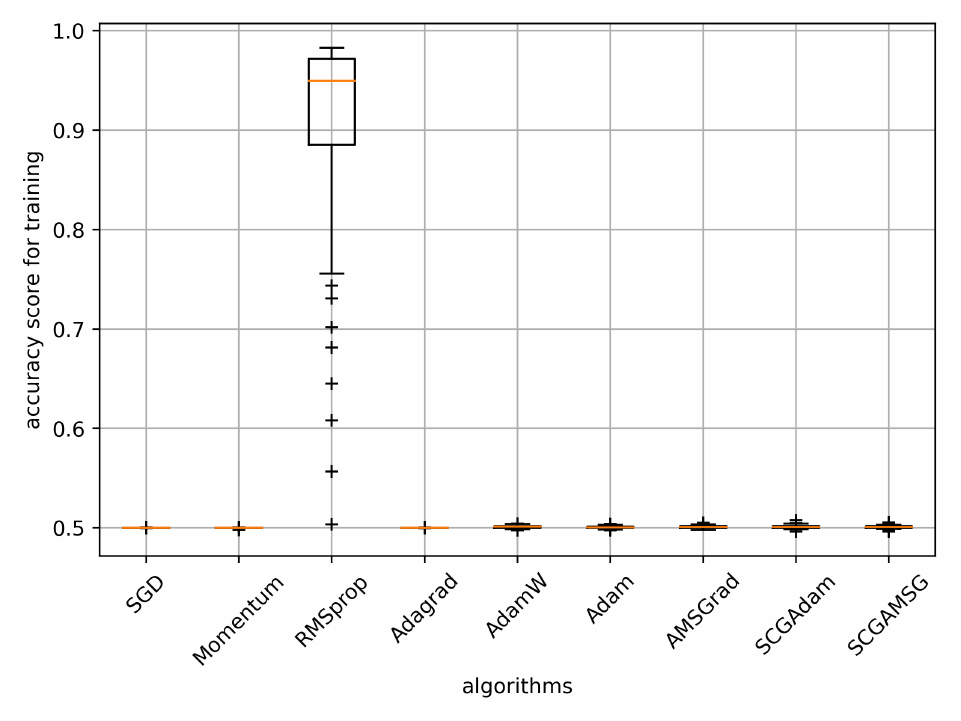}
		\subcaption{}
		\label{fig:6_d}
 	\end{minipage}
 	\end{tabular}
\caption{Box plots of training classification accuracy score for Algorithm \ref{algo:1} on the IMDb dataset: (a) constant learning rates and (b) diminishing learning rates.}
\label{fig:6}
\end{figure*}

\begin{figure*}[h] 
 	\begin{tabular}{cc}
 	\begin{minipage}{0.5\linewidth}
 		\centering
 		\includegraphics[width=0.85\textwidth]{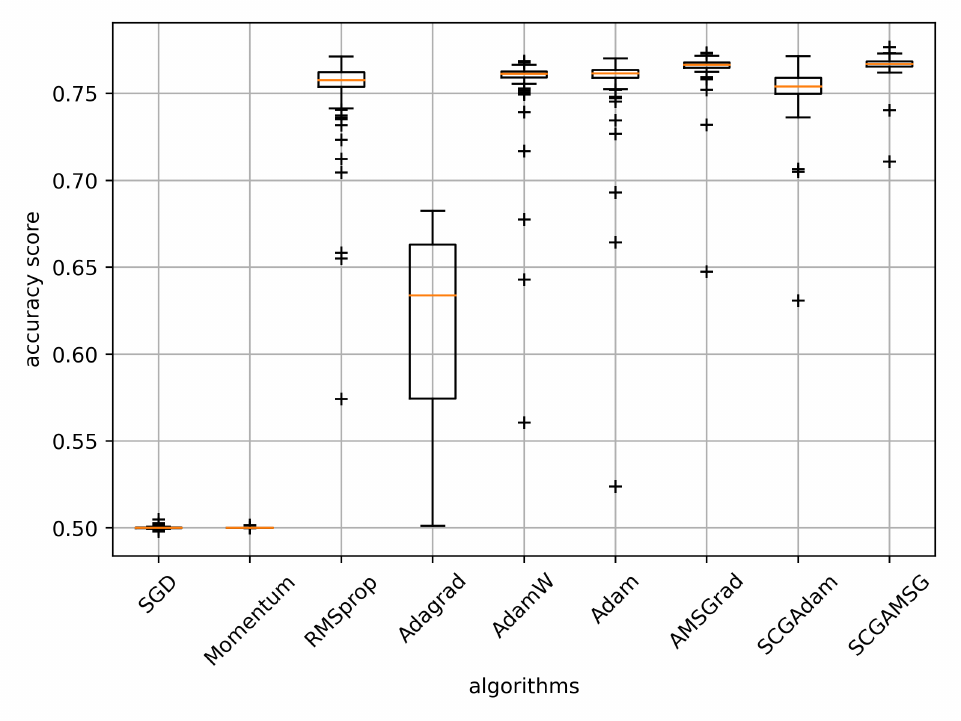}
		\subcaption{}
		\label{fig:7_c}
 	\end{minipage}
 	\begin{minipage}{0.5\linewidth}
 		\centering
 		\includegraphics[width=0.85\textwidth]{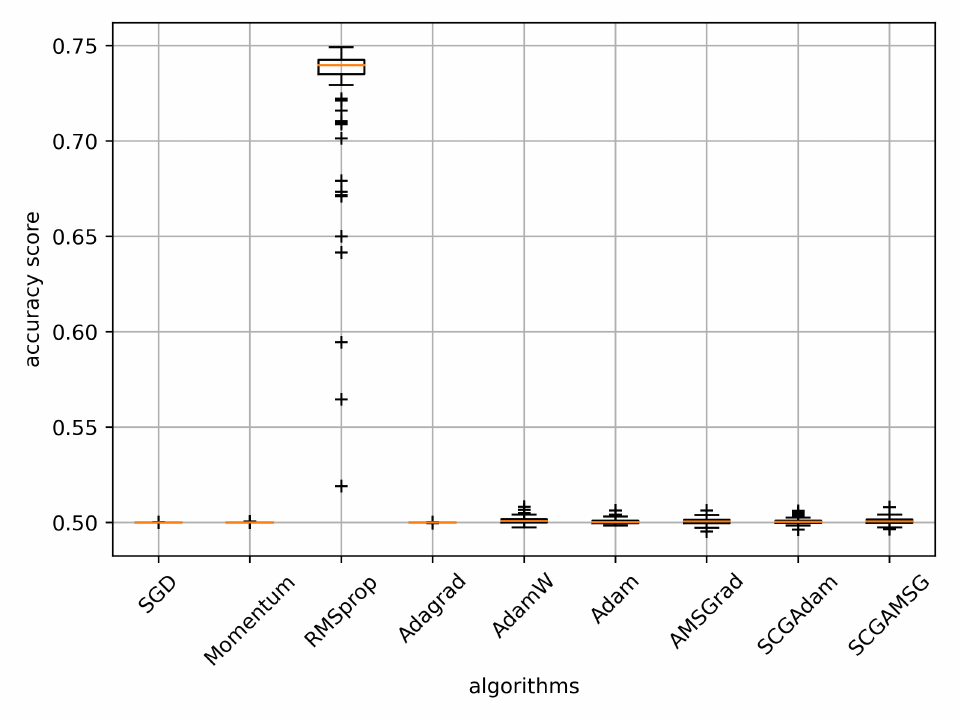}
		\subcaption{}
		\label{fig:7_d}
 	\end{minipage}
 	\end{tabular}
\caption{Box plots of test classification accuracy score for Algorithm \ref{algo:1} on the IMDb dataset: (a) constant learning rates and (b) diminishing learning rates.}
\label{fig:7}
\end{figure*}

\clearpage
\subsection{Image generation with GANs}
\label{subsec:4.4}
Generative adversarial networks (GANs) consist of two deep neural networks, a generator and a discriminator. The training of GANs is generally unstable because it is performed simultaneously by the generator and the discriminator in a minimax game. Since SGD and Momentum often cause mode collapse and their convergence is slow, most of the previous studies \citep{ian2016nips, NIPS2017_892c3b1c, Tim2016Imp, adab} have used adaptive methods such as Adam, RMSProp, and AdaBelief. Therefore, training of GANs is suitable for testing the stability of the optimizer. We trained a deep convolutional GAN (DCGAN) \citep{radford2016unsupervised} on the LSUN-Bedroom dataset \citep{lsun}, SNGAN \citep{Miyato2018Spe} on the CIFAR10 dataset, and Wasserstein GAN with gradient penalty (WGAN-GP) \citep{NIPS2017_892c3b1c} on the Tiny ImageNet dataset \citep{Yang2015Tin} with some adaptive optimizers. We used the widely used Fr\'echet inception distance (FID) \citep{Heusel2017} to evaluate the quality of the generated images. 

For training DCGAN, five 250,000 steps trainings were performed with each optimizer under the optimal hyperparameter settings, and the results are summarized in Figure \ref{fig:16}. Figure \ref{fig:17} compares the box plots of mean FID scores for the last 50,000 steps, where the decrease in FID score has subsided. According to Figure \ref{fig:17}, SCGAMSGrad achieves the lowest FID score on average, with SCGAdam being second best to AdaBelief. See Table \ref{table:hyper} in Appendix \ref{a:hyper} for the hyperparameters used in the experiments.

\begin{figure}[htbp]
\centering
\begin{minipage}[t]{0.8\linewidth}
\includegraphics[width=1\textwidth]{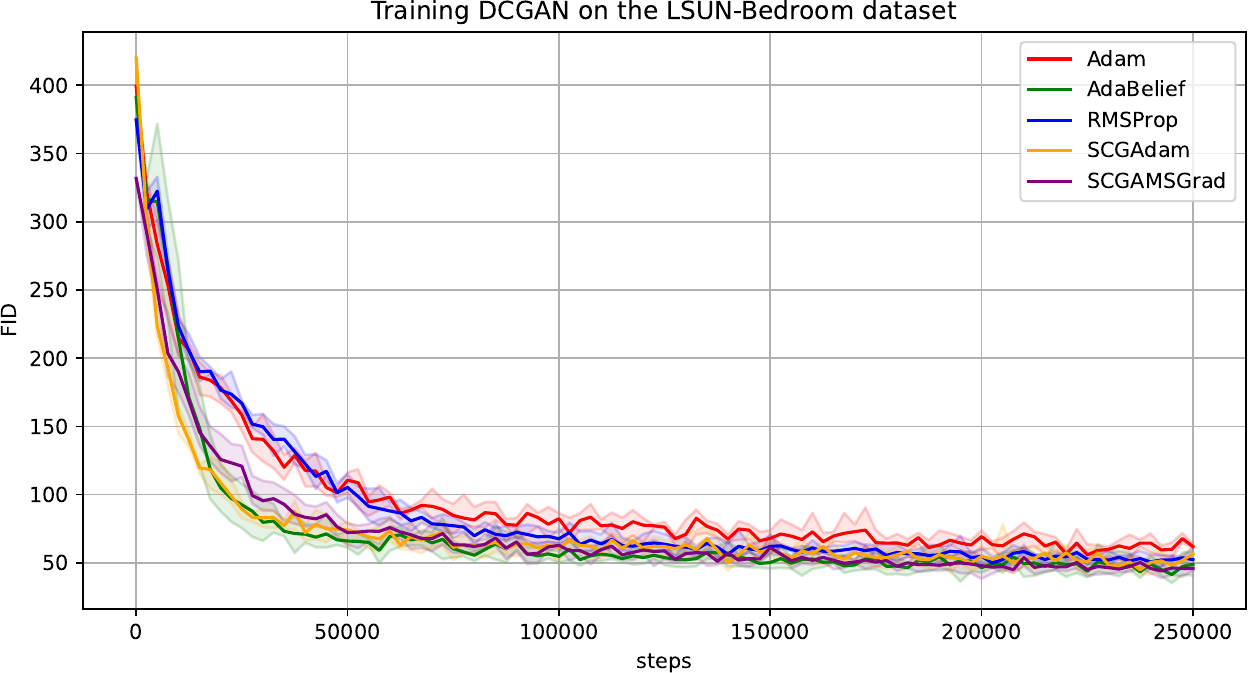}
\end{minipage}
\caption{Mean FID (solid line) bounded by the maximum and the minimum over 5 runs (shaded area) for training DCGAN on the LSUN-Bedroom dataset for five optimizers. For all runs, the batch size is 64 and the learning rate combinations are determined with a grid search (see Figure \ref{fig:18}).}
\label{fig:16}
\end{figure}

\begin{figure}[htbp]
\centering
\begin{minipage}[t]{0.7\linewidth}
\includegraphics[width=1\textwidth]{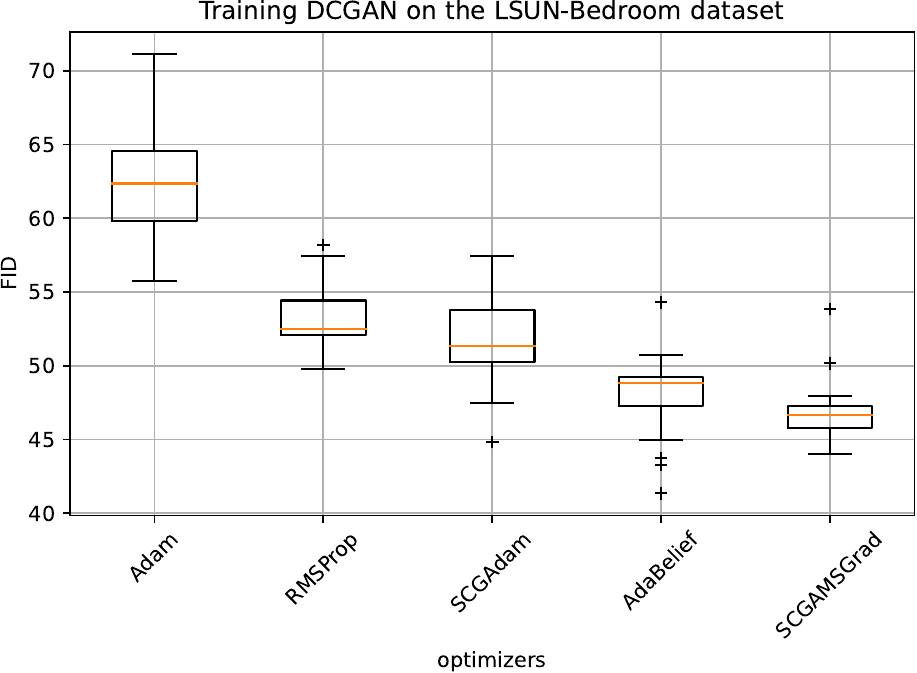}
\end{minipage}
\caption{Box plots of mean FID scores for the last 50,000 steps in training DCGAN on the LSUN-Bedroom dataset for five optimizers.}
\label{fig:17}
\end{figure}

For training SNGAN, five 150,000 steps trainings were performed with each optimizer under the optimal hyperparameter settings; the results are summarized in Figure \ref{fig:19}. Moreover, Figure \ref{fig:20} compares the box plots of mean FID scores for the last 30,000 steps, where the decrease in FID score has subsided. According to this figure, SCGAMSGrad achieves the lowest FID score on average. See Table \ref{table:hyper-sn} in Appendix \ref{a:hyper} for the hyperparameters used in the experiments.

\begin{figure}[h]
\centering
\begin{minipage}[t]{0.8\linewidth}
\includegraphics[width=1\textwidth]{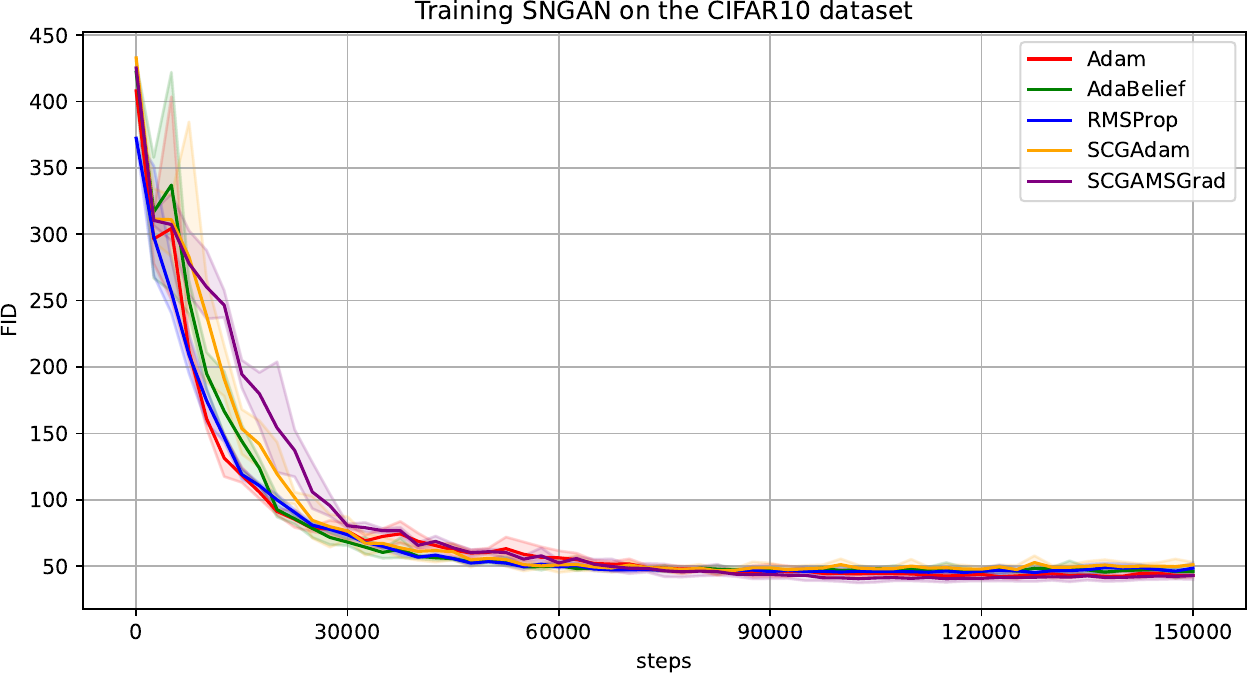}
\end{minipage}
\caption{Mean FID (solid line) bounded by the maximum and the minimum over 5 runs (shaded area) for training SNGAN on the CIFAR10 dataset for five optimizers. For all runs, the batch size is 64 and the learning rate combinations are determined with a grid search (see Figure \ref{fig:29}).}
\label{fig:19}
\end{figure}

\begin{figure}[htbp]
\centering
\begin{minipage}[t]{0.7\linewidth}
\includegraphics[width=1\textwidth]{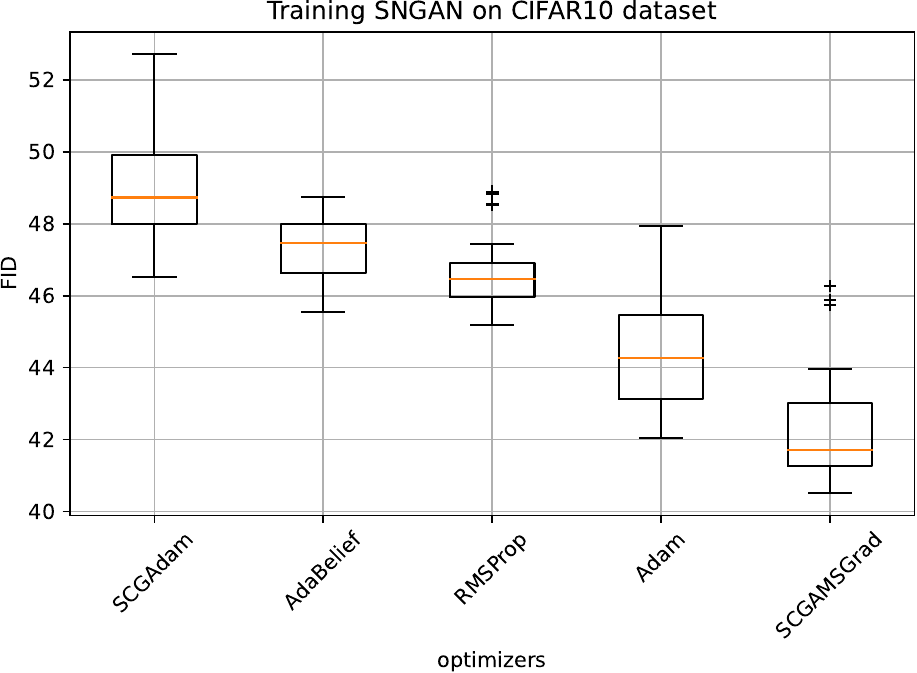}
\end{minipage}
\caption{Box plots of mean FID scores for the last 30,000 steps in training SNGAN on the CIFAR10 dataset for five optimizers.}
\label{fig:20}
\end{figure}

For training WGAN-GP, five 400,000 steps trainings were performed with each optimizer under the optimal hyperparameter settings; the results are summarized in Figure \ref{fig:21}. Moreover, Figure \ref{fig:22} compares the box plots of mean FID scores for the last 80,000 steps, where the decrease in FID score has subsided. According to this figure, SCGAdam achieves the lowest FID score on average. See Table \ref{table:hyper-w} in Appendix \ref{a:hyper} for the hyperparameters used in the experiments.

\begin{figure}[htbp]
\centering
\begin{minipage}[t]{0.8\linewidth}
\includegraphics[width=1\textwidth]{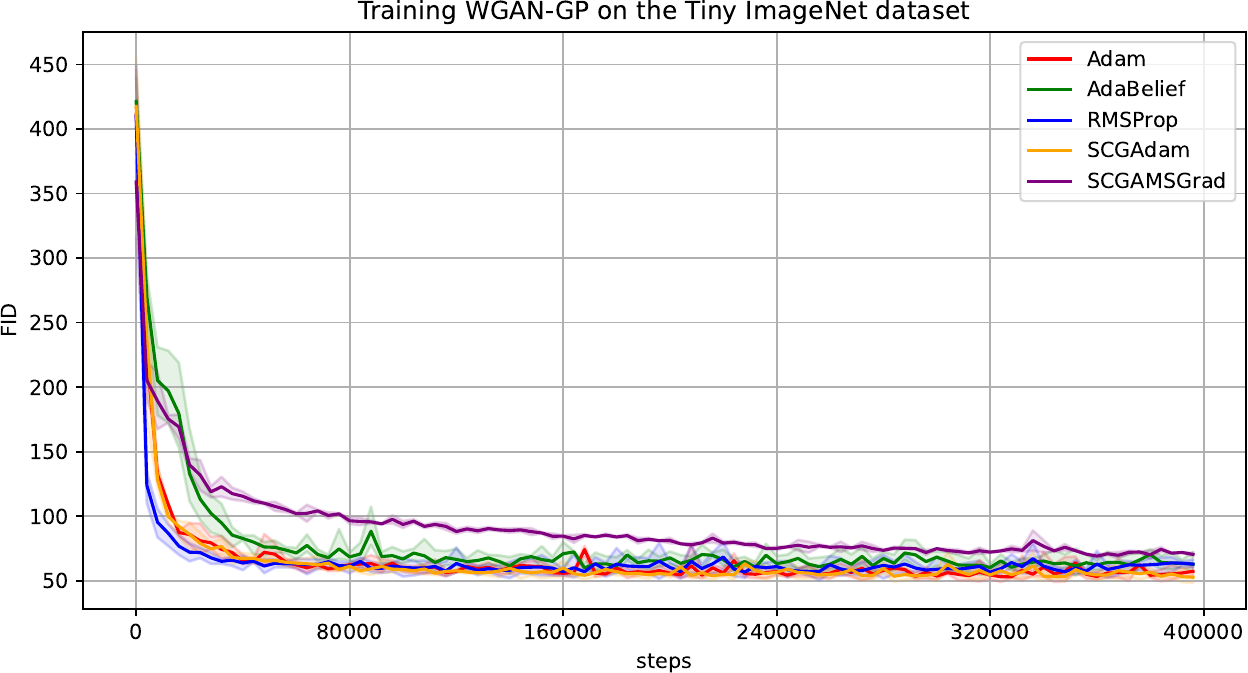}
\end{minipage}
\caption{Mean FID (solid line) bounded by the maximum and the minimum over 5 runs (shaded area) for training WGAN-GP on the Tiny ImageNet dataset for five optimizers. For all runs, the batch size is 64 and the learning rate combinations are determined with a grid search (see Figure \ref{fig:30}).}
\label{fig:21}
\end{figure}

\begin{figure}[htbp]
\centering
\begin{minipage}[t]{0.7\linewidth}
\includegraphics[width=1\textwidth]{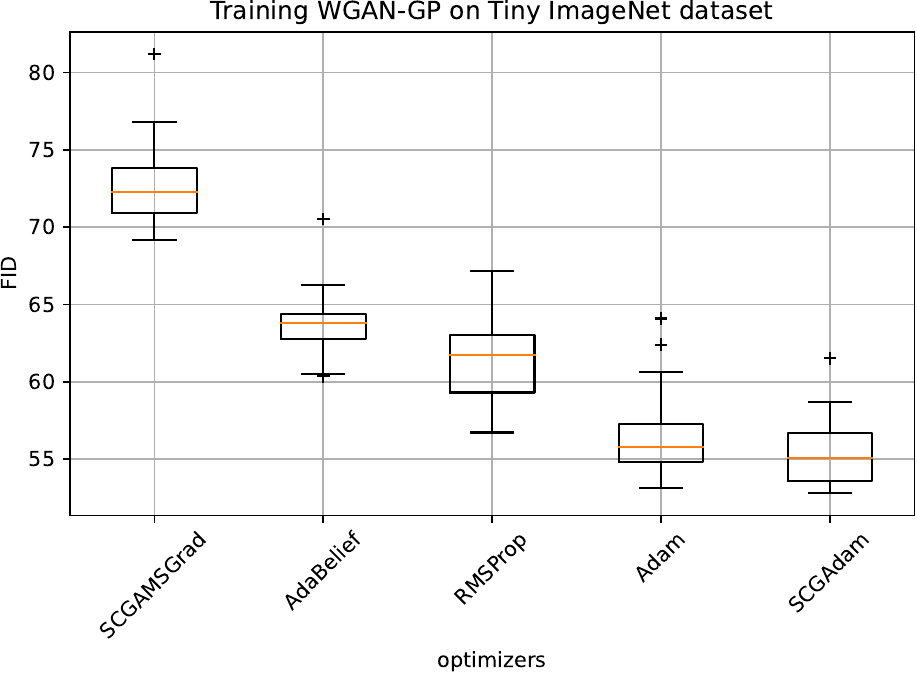}
\end{minipage}
\caption{Box plots of mean FID scores for the last 80,000 steps in training WGAN-GP on the Tiny ImageNet dataset for five optimizers.}
\label{fig:22}
\end{figure}

All of the results show that the proposed method successfully trains unstable GANs as well as other adaptive methods. They also show that the proposed method is able to achieve the lowest FID for certain datasets and models.

\section{Conclusion}
\label{sec:5}
We proposed a scaled conjugate gradient method for nonconvex optimization in deep learning. With constant learning rates, this method has approximately $\mathcal{O}(1/N)$ convergence, and with diminishing learning rates, it has $\mathcal{O}(1/\sqrt{N})$ convergence, which is an improvement on the previous results reported in \citep{electronics9111809}. We also showed that the proposed method is suitable for practical applications, in particular image and text classification tasks. Using constant learning rates, it minimized the training loss functions faster than other methods. Furthermore, our experimental results show that the proposed method can achieve lowest FID score among adaptive optimizers in training several GANs.

% Acknowledgements and Disclosure of Funding should go at the end, before appendices and references
\acks{We are sincerely grateful to Action Editor Prateek Jain and the three anonymous reviewers for helping us improve the original manuscript. This research is partly supported by the computational resources of the DGX A100 named TAIHO at Meiji University. This work was supported by the Japan Society for the Promotion of Science (JSPS) KAKENHI, Grant Numbers 21K11773 and 24K14846.}

% Manual newpage inserted to improve layout of sample file - not
% needed in general before appendices/bibliography.

\newpage

\appendix
\section{Appendix A}
\label{appen:1}
Suppose that $H \in \mathbb{S}_{++}^d$. The $H$-inner product of $\mathbb{R}^d$ is defined for all $\bm{x}, \bm{y} \in \mathbb{R}^d$ by $\langle \bm{x},\bm{y} \rangle_H := \langle \bm{x}, H \bm{y} \rangle$, and the $H$-norm is defined by $\|\bm{x}\|_H := \sqrt{\langle \bm{x}, H \bm{x} \rangle}$.

\subsection{Lemmas}
\begin{lemma}\label{lem:1}
Under (A1) and (A2), for all $\bm{x} \in \mathbb{R}^d$ and all $n \in \mathbb{N}$,
\begin{align*}
&\mathbb{E}\left[\left\| \bm{x}_{n+1} - \bm{x} \right\|_{\mathsf{H}_n}^2\right]\\
&\leq
\mathbb{E}\left[\left\| \bm{x}_{n} - \bm{x} \right\|_{\mathsf{H}_n}^2\right]
+ 2 \alpha_n 
 \bigg[
 \frac{\beta_n}{\tilde{\zeta}_n} 
 {\mathbb{E} \left[\left\langle \bm{x} - \bm{x}_n, \bm{m}_{n-1} \right\rangle \right]}
 + \frac{\tilde{\beta}_n}{\tilde{\zeta}_n} 
\bigg\{ 
(1 + \gamma_n) \mathbb{E} \left[ \left\langle \bm{x} - \bm{x}_n, \nabla f(\bm{x}_n) \right\rangle \right]\\
&\quad + \delta_n \mathbb{E} \left[ \left\langle \bm{x}_n - \bm{x}, \bm{\mathsf{G}}_{n-1} \right\rangle \right]
\bigg\} \bigg]
 + \alpha_n^2 \mathbb{E} \left[ \left\|\bm{\mathsf{d}}_n \right\|_{\mathsf{H}_n}^2 \right], 
\end{align*}
where ${\tilde{\zeta}}_n := 1 - {\zeta}^{n+1}$ and $\tilde{\beta}_n := 1 - \beta_n$.
\end{lemma}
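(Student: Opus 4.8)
The plan is to expand the squared $\mathsf{H}_n$-norm of $\bm{x}_{n+1} - \bm{x}$ via the update rule and then translate every term back through the algorithm's recursions. First I would write $\bm{x}_{n+1} - \bm{x} = (\bm{x}_n - \bm{x}) + \alpha_n \bm{\mathsf{d}}_n$ (step 8) and use bilinearity of $\langle \cdot, \cdot \rangle_{\mathsf{H}_n}$ to obtain the exact identity
\[
\left\| \bm{x}_{n+1} - \bm{x} \right\|_{\mathsf{H}_n}^2 = \left\| \bm{x}_n - \bm{x} \right\|_{\mathsf{H}_n}^2 + 2\alpha_n \left\langle \bm{x}_n - \bm{x}, \bm{\mathsf{d}}_n \right\rangle_{\mathsf{H}_n} + \alpha_n^2 \left\| \bm{\mathsf{d}}_n \right\|_{\mathsf{H}_n}^2 .
\]
The key move is the cross term: since $\langle \bm{u}, \bm{v} \rangle_{\mathsf{H}_n} = \langle \bm{u}, \mathsf{H}_n \bm{v} \rangle$ and step 7 gives $\mathsf{H}_n \bm{\mathsf{d}}_n = -\hat{\bm{m}}_n$, I can replace $\langle \bm{x}_n - \bm{x}, \bm{\mathsf{d}}_n \rangle_{\mathsf{H}_n}$ by $\langle \bm{x} - \bm{x}_n, \hat{\bm{m}}_n \rangle$, thereby eliminating both $\bm{\mathsf{d}}_n$ and $\mathsf{H}_n$ from the linear part.

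Next I would unfold $\hat{\bm{m}}_n$ down to the gradient. Substituting $\hat{\bm{m}}_n = \tilde{\zeta}_n^{-1} \bm{m}_n$ (step 5), then the momentum recursion $\bm{m}_n = \beta_n \bm{m}_{n-1} + \tilde{\beta}_n \bm{\mathsf{G}}_n$ (step 4), and finally the scaled conjugate direction $\bm{\mathsf{G}}_n = (1+\gamma_n)\mathsf{G}(\bm{x}_n,\xi_n) - \delta_n \bm{\mathsf{G}}_{n-1}$ (step 3), the cross term splits into exactly three pieces: a $\beta_n$-weighted inner product with $\bm{m}_{n-1}$, a $(1+\gamma_n)$-weighted inner product with the stochastic gradient, and a $\delta_n$-weighted inner product with $\bm{\mathsf{G}}_{n-1}$ (using $-\delta_n \langle \bm{x} - \bm{x}_n, \bm{\mathsf{G}}_{n-1}\rangle = \delta_n \langle \bm{x}_n - \bm{x}, \bm{\mathsf{G}}_{n-1}\rangle$). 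This already reproduces the bracketed structure of the claimed bound, the three prefactors being $\beta_n/\tilde{\zeta}_n$, $\tilde{\beta}_n(1+\gamma_n)/\tilde{\zeta}_n$, and $\tilde{\beta}_n\delta_n/\tilde{\zeta}_n$, each multiplied by the common $2\alpha_n$.

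The one genuinely probabilistic step is replacing the stochastic gradient by the true gradient. I would condition on the $\sigma$-field generated by $\xi_0,\dots,\xi_{n-1}$, with respect to which $\bm{x}_n$ is measurable; then (A2) gives $\mathbb{E}[\mathsf{G}(\bm{x}_n,\xi_n)\mid \xi_{0},\dots,\xi_{n-1}] = \nabla f(\bm{x}_n)$, so by the tower property $\mathbb{E}[\langle \bm{x}-\bm{x}_n, \mathsf{G}(\bm{x}_n,\xi_n)\rangle] = \mathbb{E}[\langle \bm{x}-\bm{x}_n, \nabla f(\bm{x}_n)\rangle]$. Taking expectations throughout the norm identity and collecting terms then yields the asserted bound (in fact an equality, since every manipulation above is exact; stating it as $\leq$ simply matches the form needed downstream).

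The hard part here is not any estimate but the conditioning argument: I must apply the unbiasedness in (A2) to the gradient term only, where $\bm{x}_n$ is frozen by the past randomness, while leaving the inner products with $\bm{m}_{n-1}$ and $\bm{\mathsf{G}}_{n-1}$ untouched inside the expectation. Once that is handled cleanly, everything else is algebraic bookkeeping of the three update rules, and the displayed inequality follows immediately.
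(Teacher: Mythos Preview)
Your proposal is correct and matches the paper's own proof essentially step for step: expand the $\mathsf{H}_n$-norm via the update, use $\mathsf{H}_n\bm{\mathsf{d}}_n=-\hat{\bm{m}}_n$ to rewrite the cross term, unfold $\hat{\bm{m}}_n\to\bm{m}_n\to\bm{\mathsf{G}}_n$, and apply the tower property with (A2) conditioned on $\xi_{[n-1]}$ to replace $\mathsf{G}(\bm{x}_n,\xi_n)$ by $\nabla f(\bm{x}_n)$. Your observation that the result is actually an equality is also accurate.
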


\begin{proof}
Let $\bm{x}\in \mathbb{R}^d$ and $n\in \mathbb{N}$. We have that 
\begin{align}\label{ineq:001}
\begin{split}
\left\|\bm{x}_{n+1} - \bm{x} \right\|_{\mathsf{H}_n}^2
=
\left\| \bm{x}_n - \bm{x} \right\|_{\mathsf{H}_n}^2 
+ 2 \alpha_n \left\langle \bm{x}_n - \bm{x}, \bm{\mathsf{d}}_n \right\rangle_{\mathsf{H}_n}
+ \alpha_n^2 \left\|\bm{\mathsf{d}}_n \right\|_{\mathsf{H}_n}^2. 
\end{split}
\end{align}
The definition of the $\mathsf{H}_n$-inner product and the definition of Algorithm 1 ensure that 
\begin{align*}
\left\langle \bm{x}_n - \bm{x}, \bm{\mathsf{d}}_n \right\rangle_{\mathsf{H}_n}
=
\frac{1}{{\tilde{\zeta}}_n}\left\langle \bm{x} - \bm{x}_n, \bm{m}_n \right\rangle
=
\frac{\beta_n}{{\tilde{\zeta}}_n} \left\langle \bm{x} - \bm{x}_n, \bm{m}_{n-1} \right\rangle 
+
\frac{\tilde{\beta}_n}{{\tilde{\zeta}}_n} \left\langle \bm{x} - \bm{x}_n, \bm{\mathsf{G}}_n \right\rangle.
\end{align*}
The definition of $\bm{\mathsf{G}}_n$ implies that 
\begin{align*}
\left\langle \bm{x} - \bm{x}_n, \bm{\mathsf{G}}_n \right\rangle
= 
(1 + \gamma_n) \left\langle \bm{x} - \bm{x}_n, \mathsf{G}(\bm{x}_n, \xi_n) \right\rangle
+ \delta_n \left\langle \bm{x}_n - \bm{x}, \bm{\mathsf{G}}_{n-1}\right\rangle.
\end{align*}
Define the history of process $\xi_0,\xi_1,\ldots$ to time step $n$ by $\xi_{[n]} = (\xi_0,\xi_1,\ldots,\xi_n)$. The condition $\bm{x}_n = \bm{x}_n(\xi_{[n-1]})$ $(n\in \mathbb{N})$ and (A2) guarantee that
\begin{align*}
\mathbb{E} \left[\left\langle \bm{x} - \bm{x}_n, \mathsf{G}(\bm{x}_n,\xi_n) \right\rangle \right]
&=
\mathbb{E} 
 \left[ 
 \mathbb{E} \left[\left\langle \bm{x} - \bm{x}_n, \mathsf{G}(\bm{x}_n,\xi_n) \right\rangle | \xi_{[n-1]} \right]
 \right]\\
&= 
\mathbb{E} 
 \left[ 
 \left\langle \bm{x} - \bm{x}_n, \mathbb{E} \left[ \mathsf{G}(\bm{x}_n,\xi_n) | \xi_{[n-1]} \right] \right\rangle
 \right]\\
&=
\mathbb{E} \left[\left\langle \bm{x} - \bm{x}_n, {\nabla f} (\bm{x}_n) \right\rangle \right].
\end{align*}
Accordingly, taking the expectation of \eqref{ineq:001} gives us the following lemma.
\end{proof}

\begin{lemma}\citep[Lemma 2]{iiduka2021}\label{lem:2}
Under (A3), for all $n\in\mathbb{N}$,
\begin{align*}
\mathbb{E}\left[\|\bm{m}_n\|^2 \right] \leq \tilde{M}^2 := \max \left\{ \|\bm{m}_{-1}\|^2, M^2 \right\}. 
\end{align*}
Additionally, under (A4), for all $n\in\mathbb{N}$,
\begin{align*}
\mathbb{E}\left[\|\bm{\mathsf{d}}_n\|_{\mathsf{H}_n}^2 \right] \leq \frac{\tilde{B}^2 \tilde{M}^2}{\tilde{\zeta}^2}, 
\end{align*}
where $\tilde{\zeta} := 1-{\zeta}$ and $\tilde{B} := \sup\{ {\max_{i=1,2,\ldots,d} h_{n,i}^{-1/2}} \colon n\in\mathbb{N}\} < + \infty$. 
\end{lemma}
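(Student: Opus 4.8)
The plan is to establish the two displayed bounds separately: the first by induction on the momentum recursion, and the second by converting the $\mathsf{H}_n$-norm of $\bm{\mathsf{d}}_n$ into a quadratic form in $\hat{\bm{m}}_n$ and then invoking the first bound.

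For the bound on $\mathbb{E}[\|\bm{m}_n\|^2]$, I would argue by induction on $n$ using the update $\bm{m}_n = \beta_n \bm{m}_{n-1} + (1-\beta_n)\bm{\mathsf{G}}_n$ from step 4 of Algorithm \ref{algo:1}. Since $\beta_n \in [0,1)$, the vector $\bm{m}_n$ is a convex combination of $\bm{m}_{n-1}$ and $\bm{\mathsf{G}}_n$, so convexity of the squared norm (Jensen's inequality) gives $\|\bm{m}_n\|^2 \le \beta_n \|\bm{m}_{n-1}\|^2 + (1-\beta_n)\|\bm{\mathsf{G}}_n\|^2$. Taking expectations and invoking (A3), which controls the second moment of the gradient estimate driving $\bm{\mathsf{G}}_n$ by $M^2$, yields $\mathbb{E}[\|\bm{m}_n\|^2] \le \beta_n \mathbb{E}[\|\bm{m}_{n-1}\|^2] + (1-\beta_n)M^2$. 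The base case is immediate, since $\mathbb{E}[\|\bm{m}_{-1}\|^2] = \|\bm{m}_{-1}\|^2 \le \tilde{M}^2$ by the definition of $\tilde{M}^2$; assuming $\mathbb{E}[\|\bm{m}_{n-1}\|^2] \le \tilde{M}^2$ and using $M^2 \le \tilde{M}^2$, the weighted average on the right-hand side is again bounded by $\tilde{M}^2$, which closes the induction. This is exactly the momentum estimate inherited from \citep{iiduka2021}.

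For the bound on $\mathbb{E}[\|\bm{\mathsf{d}}_n\|_{\mathsf{H}_n}^2]$, I would first use step 7, which gives $\mathsf{H}_n \bm{\mathsf{d}}_n = -\hat{\bm{m}}_n$, so that $\|\bm{\mathsf{d}}_n\|_{\mathsf{H}_n}^2 = \langle \bm{\mathsf{d}}_n, \mathsf{H}_n \bm{\mathsf{d}}_n\rangle = \langle \hat{\bm{m}}_n, \mathsf{H}_n^{-1}\hat{\bm{m}}_n\rangle$. Because $\mathsf{H}_n = \mathsf{diag}(h_{n,i}) \in \mathbb{S}_{++}^d \cap \mathbb{D}^d$, its inverse is diagonal with entries $h_{n,i}^{-1}$, so the quadratic form is controlled by the largest eigenvalue: $\langle \hat{\bm{m}}_n, \mathsf{H}_n^{-1}\hat{\bm{m}}_n\rangle \le (\max_{i\in[d]} h_{n,i}^{-1})\|\hat{\bm{m}}_n\|^2 \le \tilde{B}^2 \|\hat{\bm{m}}_n\|^2$, where finiteness of $\tilde{B}$ comes from (A4): monotonicity $h_{n,i} \ge h_{0,i}$ together with $\mathsf{H}_0 \succ O$ gives $\tilde{B} \le \max_{i\in[d]} h_{0,i}^{-1/2} < +\infty$. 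Next, step 5 gives $\hat{\bm{m}}_n = (1-\zeta^{n+1})^{-1}\bm{m}_n$; since $\zeta \in [0,1)$ we have $\zeta^{n+1} \le \zeta$ for every $n \ge 0$, hence $(1-\zeta^{n+1})^{-1} \le (1-\zeta)^{-1} = \tilde{\zeta}^{-1}$ and $\|\hat{\bm{m}}_n\|^2 \le \tilde{\zeta}^{-2}\|\bm{m}_n\|^2$. Taking expectations and applying the first bound gives $\mathbb{E}[\|\bm{\mathsf{d}}_n\|_{\mathsf{H}_n}^2] \le \tilde{B}^2 \tilde{\zeta}^{-2}\mathbb{E}[\|\bm{m}_n\|^2] \le \tilde{B}^2 \tilde{M}^2/\tilde{\zeta}^2$.

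The Jensen step, the induction, and the elementary monotone bound $(1-\zeta^{n+1})^{-1}\le\tilde{\zeta}^{-1}$ are all routine; the only points demanding a little care are the uniform-in-$n$ spectral control of $\mathsf{H}_n^{-1}$ (which is where (A4) and positive-definiteness of $\mathsf{H}_0$ are genuinely needed to keep $\tilde{B}$ finite) and the correct propagation of the $M^2$ second-moment input through the momentum recursion, which is precisely where Assumption (A3) enters.
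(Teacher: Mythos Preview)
The paper does not supply its own proof of this lemma; it is quoted verbatim from \citep[Lemma~2]{iiduka2021}. Your overall strategy---Jensen plus induction for the first bound, and the spectral estimate $\langle \hat{\bm{m}}_n,\mathsf{H}_n^{-1}\hat{\bm{m}}_n\rangle\le\tilde{B}^2\|\hat{\bm{m}}_n\|^2$ combined with $(1-\zeta^{n+1})^{-1}\le\tilde{\zeta}^{-1}$ for the second---is exactly the standard argument and is what the cited reference does. The second paragraph of your proposal is correct as written.

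There is, however, a genuine gap in your first paragraph that is specific to the present paper. You pass from $\|\bm{m}_n\|^2\le\beta_n\|\bm{m}_{n-1}\|^2+(1-\beta_n)\|\bm{\mathsf{G}}_n\|^2$ to $\mathbb{E}[\|\bm{m}_n\|^2]\le\beta_n\mathbb{E}[\|\bm{m}_{n-1}\|^2]+(1-\beta_n)M^2$ by ``invoking (A3).'' That step requires $\mathbb{E}[\|\bm{\mathsf{G}}_n\|^2]\le M^2$. In \citep{iiduka2021} this is immediate, because there the symbol $\bm{\mathsf{G}}_n$ \emph{is} the stochastic gradient $\mathsf{G}(\bm{x}_n,\xi_n)$. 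In Algorithm~\ref{algo:1} of the present paper, step~3 sets $\bm{\mathsf{G}}_n=(1+\gamma_n)\mathsf{G}(\bm{x}_n,\xi_n)-\delta_n\bm{\mathsf{G}}_{n-1}$, and (A3) by itself does not give $\mathbb{E}[\|\bm{\mathsf{G}}_n\|^2]\le M^2$; indeed Lemma~\ref{lem:3} only establishes $\mathbb{E}[\|\bm{\mathsf{G}}_n\|^2]\le 16\hat{M}^2$. Your phrase ``the gradient estimate driving $\bm{\mathsf{G}}_n$'' papers over exactly this distinction. Either the lemma is being imported with the $\bm{m}_n$ of \citep{iiduka2021} (i.e., the special case $\gamma_n=\delta_n=0$), in which case your argument is fine, or the constant $\tilde{M}^2=\max\{\|\bm{m}_{-1}\|^2,M^2\}$ does not survive unchanged in the scaled-CG setting and your induction closes only with the larger constant coming from Lemma~\ref{lem:3}. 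You should make explicit which reading you intend.
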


\begin{lemma}\label{lem:3}
Under (A3), for all $n\in\mathbb{N}$,
\begin{align*}
\mathbb{E}\left[\|\bm{\mathsf{G}}_n\|^2 \right] 
\leq 16 \hat{M}^2, 
\end{align*}
where $\hat{M} := \max \{M^2, \|\bm{\mathsf{G}}_{-1} \|^2 \}$.
\end{lemma}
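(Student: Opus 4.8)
The recursive definition $\bm{\mathsf{G}}_n = (1+\gamma_n)\mathsf{G}(\bm{x}_n,\xi_n) - \delta_n \bm{\mathsf{G}}_{n-1}$ expresses $\bm{\mathsf{G}}_n$ through its predecessor $\bm{\mathsf{G}}_{n-1}$, so the plan is to derive a one-step recursive inequality for $\mathbb{E}[\|\bm{\mathsf{G}}_n\|^2]$ and then close it by induction on $n$. First I would apply the elementary bound $\|\bm{u}+\bm{v}\|^2 \le 2\|\bm{u}\|^2 + 2\|\bm{v}\|^2$ (which follows from $2\langle \bm{u},\bm{v}\rangle \le \|\bm{u}\|^2+\|\bm{v}\|^2$) with $\bm{u} = (1+\gamma_n)\mathsf{G}(\bm{x}_n,\xi_n)$ and $\bm{v} = -\delta_n\bm{\mathsf{G}}_{n-1}$, giving $\|\bm{\mathsf{G}}_n\|^2 \le 2(1+\gamma_n)^2\|\mathsf{G}(\bm{x}_n,\xi_n)\|^2 + 2\delta_n^2\|\bm{\mathsf{G}}_{n-1}\|^2$. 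Using the parameter ranges prescribed for Algorithm \ref{algo:1}, namely $1+\gamma_n \le 2$ and $\delta_n \le 1/2$, then yields the pointwise estimate $\|\bm{\mathsf{G}}_n\|^2 \le 8\|\mathsf{G}(\bm{x}_n,\xi_n)\|^2 + \tfrac12\|\bm{\mathsf{G}}_{n-1}\|^2$.

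Next I would take expectations and invoke (A3), i.e.\ $\mathbb{E}[\|\mathsf{G}(\bm{x}_n,\xi_n)\|^2] \le M^2 \le \hat{M}^2$, to obtain the recursion
\begin{equation*}
\mathbb{E}\left[\|\bm{\mathsf{G}}_n\|^2\right] \le 8\hat{M}^2 + \frac{1}{2}\,\mathbb{E}\left[\|\bm{\mathsf{G}}_{n-1}\|^2\right].
\end{equation*}
The quantity $16\hat{M}^2$ is exactly the fixed point of the affine map $t \mapsto 8\hat{M}^2 + \tfrac12 t$, so the desired bound propagates by induction: the base case uses $\|\bm{\mathsf{G}}_{-1}\|^2 \le \hat{M}^2 \le 16\hat{M}^2$, and if $\mathbb{E}[\|\bm{\mathsf{G}}_{n-1}\|^2] \le 16\hat{M}^2$ then $\mathbb{E}[\|\bm{\mathsf{G}}_n\|^2] \le 8\hat{M}^2 + \tfrac12\cdot 16\hat{M}^2 = 16\hat{M}^2$. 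Equivalently one can unroll the recursion into the geometric series $8\hat{M}^2\sum_{k\ge 0}(1/2)^k = 16\hat{M}^2$, but the inductive form keeps the constant clean. Here $\hat{M}^2 \ge M^2$ and $\hat{M}^2 \ge \|\bm{\mathsf{G}}_{-1}\|^2$ by the definition of $\hat{M}$.

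The only genuinely delicate point is the self-referential nature of the estimate: because $\bm{\mathsf{G}}_n$ is built from $\bm{\mathsf{G}}_{n-1}$, no single application of (A3) bounds $\mathbb{E}[\|\bm{\mathsf{G}}_n\|^2]$ directly, and the recursion is stable only because the coefficient $2\delta_n^2 \le 1/2$ is strictly below $1$; this is precisely why the admissible range $\delta_n \in [0,1/2]$ is imposed in Algorithm \ref{algo:1}. I would also flag that the argument relies on $1+\gamma_n \le 2$ so that the gradient coefficient stays uniformly bounded, since without a cap on $\gamma_n$ no constant independent of the iteration count could be attained. Everything else is routine, and taking the expectation term by term is legitimate because $\bm{x}_n$ is $\xi_{[n-1]}$-measurable, exactly as exploited in the proof of Lemma \ref{lem:1}.
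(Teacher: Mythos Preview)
Your proposal is correct and follows essentially the same route as the paper's own proof: apply $\|\bm{u}+\bm{v}\|^2 \le 2\|\bm{u}\|^2+2\|\bm{v}\|^2$ to the recursion for $\bm{\mathsf{G}}_n$, use $1+\gamma_n\le 2$ and $\delta_n\le 1/2$ together with (A3), and close by induction on $n$ (the paper starts the induction at $n=0$ rather than at $n=-1$, but this is immaterial). Your observation that the argument tacitly requires $\gamma_n\le 1$, even though Algorithm~\ref{algo:1} only stipulates $(\gamma_n)\subset[0,+\infty)$, is well taken; the paper's proof makes the same unannounced use of this bound.
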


\begin{proof} 
From the definition of $\bm{\mathsf{G}}_n$, we find that, for all $n\in\mathbb{N}$,
\begin{align*}
\mathbb{E}\left[ \|\bm{\mathsf{G}}_n\|^2 \right] 
&\leq 2 (1 + \gamma_n)^2 \mathbb{E}\left[ \|\mathsf{G}(\bm{x}_n,\xi_n) \|^2 \right] 
+ 2 \delta_n^2 \mathbb{E}\left[ \|\bm{\mathsf{G}}_{n-1}\|^2 \right]\\
&\leq 8 M^2 + 2 \delta_n^2 \mathbb{E}\left[ \|\bm{\mathsf{G}}_{n-1}\|^2 \right],
\end{align*}
where $\gamma_n \leq 1$ and (A3) have been used. In the case of $n = 0$, we have that $\mathbb{E}[ \|\bm{\mathsf{G}}_0\|^2 ] \leq 8 \hat{M}^2 + 2 (1/2)^2 \hat{M}^2 \leq 16 \hat{M}^2$, where $\delta_n \leq 1/2$. Assume that $\mathbb{E}[ \|\bm{\mathsf{G}}_n \|^2 ] \leq 16 \hat{M}^2$ for some $n$. Then, we have 
\begin{align*}
\mathbb{E}\left[ \|\bm{\mathsf{G}}_{n+1}\|^2 \right] 
&\leq 8 M^2 + 2 \delta_{n+1}^2 \mathbb{E}\left[ \|\bm{\mathsf{G}}_{n}\|^2 \right]\\
&\leq 8 \hat{M}^2 + 2 \cdot \frac{1}{4} \cdot 16 \hat{M}^2
= 16 \hat{M}^2.
\end{align*}
Induction thus shows that, for all $n\in\mathbb{N}$, $\mathbb{E}[ \|\bm{\mathsf{G}}_{n}\|^2 ] \leq 16 \hat{M}^2$.
\end{proof}

\subsection{Theorem}
\begin{theorem}\label{thm:3}
Let $(\kappa_n)_{n\in \mathbb{N}}$ be defined by $\kappa_n := \alpha_n \tilde{\beta}_n (1 + \gamma_n) /\tilde{\zeta}_n$ and $(\beta_n)_{n\in \mathbb{N}}$ satisfy $\kappa_{n+1} \leq \kappa_n$ ($n\in\mathbb{N}$) and $\limsup_{n\to + \infty} \beta_n < 1$. Define $V_n (\bm{x}) := \mathbb{E}\left[ \langle \bm{x}_n - \bm{x}, \nabla f (\bm{x}_n) \rangle \right]$ for all $\bm{x} \in \mathbb{R}^d$ and all $n\in \mathbb{N}$. Under (A1)--(A6), for $\bm{x} \in \mathbb{R}^d$ and $n\geq 1$, 
\begin{align*}
\frac{1}{n} \sum_{k=1}^n V_k (\bm{x})
\leq 
\frac{D \sum_{i=1}^d 
B_{i}}{2 \tilde{b} \alpha_n n}
+
\frac{\tilde{B}^2 \tilde{M}^2}{2 \tilde{b}\tilde{\zeta}^2 n} \sum_{k=1}^n \alpha_k
+
\frac{\tilde{M}\sqrt{Dd}}{\tilde{b} n} \sum_{k=1}^n \beta_k
+
\frac{4 \hat{M}\sqrt{Dd}}{n} \sum_{k=1}^n \frac{\delta_k}{1 + \gamma_k}, 
\end{align*}
where $\tilde{b} := 1 - b$, $\tilde{\zeta} := 1-\zeta$, $(\beta_n)_{n\in\mathbb{N}} \subset (0,b] \subset (0,1)$, $\tilde{M}$, $\hat{M}$, and $\tilde{B}$ are defined in Lemmas \ref{lem:2} and \ref{lem:3}, and $D$ and $B_i$ are defined in Assumption \ref{assum:1}.
\end{theorem}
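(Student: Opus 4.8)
The plan is to start from Lemma~\ref{lem:1}, which furnishes the one-step descent inequality in the $\mathsf{H}_n$-norm, and to convert it into a bound on the weighted sum of the performance measures $V_k(\bm{x})$. First I would isolate the term $\tfrac{\tilde{\beta}_n}{\tilde{\zeta}_n}(1+\gamma_n)\mathbb{E}[\langle \bm{x}-\bm{x}_n,\nabla f(\bm{x}_n)\rangle] = -\tfrac{\tilde{\beta}_n}{\tilde{\zeta}_n}(1+\gamma_n)V_n(\bm{x})$, recognizing it as $-\kappa_n V_n(\bm{x})/\alpha_n$ up to the factor $\alpha_n$ absorbed in the definition $\kappa_n := \alpha_n\tilde{\beta}_n(1+\gamma_n)/\tilde{\zeta}_n$. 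Rearranging Lemma~\ref{lem:1} then gives, for each $n$,
\begin{align*}
2\kappa_n V_n(\bm{x})
\leq
\mathbb{E}\!\left[\|\bm{x}_n-\bm{x}\|_{\mathsf{H}_n}^2\right]
-\mathbb{E}\!\left[\|\bm{x}_{n+1}-\bm{x}\|_{\mathsf{H}_n}^2\right]
+2\alpha_n\frac{\beta_n}{\tilde{\zeta}_n}\mathbb{E}[\langle\bm{x}-\bm{x}_n,\bm{m}_{n-1}\rangle]
+2\alpha_n\frac{\tilde{\beta}_n\delta_n}{\tilde{\zeta}_n}\mathbb{E}[\langle\bm{x}_n-\bm{x},\bm{\mathsf{G}}_{n-1}\rangle]
+\alpha_n^2\mathbb{E}[\|\bm{\mathsf{d}}_n\|_{\mathsf{H}_n}^2].
\end{align*}

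Next I would sum this from $k=1$ to $n$ and handle the four groups of terms. The leading quadratic terms form a telescoping-like sum, but because the matrix $\mathsf{H}_n$ changes with $n$ the cancellation is not exact; here I would invoke monotonicity (A4), $h_{n+1,i}\geq h_{n,i}$, to bound $\mathbb{E}[\|\bm{x}_{k+1}-\bm{x}\|_{\mathsf{H}_k}^2] \leq \mathbb{E}[\|\bm{x}_{k+1}-\bm{x}\|_{\mathsf{H}_{k+1}}^2]$, so that the successive differences collapse and leave essentially $\mathbb{E}[\|\bm{x}_1-\bm{x}\|_{\mathsf{H}_1}^2]$, which I would control coordinatewise using (A5) (the bound $B_i$ on $\mathbb{E}[h_{n,i}]$) and (A6) (the bound $D$ on $(x_{n,i}-x_i)^2$) to yield the $D\sum_i B_i$ factor. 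The stepsize-squared term is controlled by Lemma~\ref{lem:2}, $\mathbb{E}[\|\bm{\mathsf{d}}_n\|_{\mathsf{H}_n}^2]\leq \tilde{B}^2\tilde{M}^2/\tilde{\zeta}^2$, giving $\sum_k\alpha_k^2$. The momentum cross-term $\langle\bm{x}-\bm{x}_n,\bm{m}_{n-1}\rangle$ is bounded via Cauchy--Schwarz by $\|\bm{x}_n-\bm{x}\|\,\|\bm{m}_{n-1}\|$, then by (A6) and Lemma~\ref{lem:2} to produce the $\sqrt{Dd}\,\tilde{M}$ factor against $\sum_k\beta_k$. The conjugate cross-term $\langle\bm{x}_n-\bm{x},\bm{\mathsf{G}}_{n-1}\rangle$ is treated identically, using Lemma~\ref{lem:3}'s bound $\mathbb{E}[\|\bm{\mathsf{G}}_n\|^2]\leq 16\hat M^2$ (so $\mathbb{E}[\|\bm{\mathsf{G}}_{n-1}\|]\leq 4\hat M$) to produce the $4\hat M\sqrt{Dd}$ factor against $\sum_k \delta_k/(1+\gamma_k)$.

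The remaining step is to divide the summed inequality by $n$ and by the coefficient of $V_n$. The monotonicity assumption $\kappa_{n+1}\leq\kappa_n$ is the device that lets me factor the sum: since $\kappa_n$ is nonincreasing and the $V_k$ are weighted by $\kappa_k\geq\kappa_n$, I can lower-bound $\sum_k\kappa_k V_k \geq \kappa_n\sum_k V_k$ only where the $V_k$ share a sign, so instead I would use that the effective weight $\tilde{\beta}_k(1+\gamma_k)/\tilde{\zeta}_k$ is bounded below by $\tilde{b}=1-b$ (using $\beta_k\leq b$, $\gamma_k\geq0$, $\tilde{\zeta}_k\leq1$) to pull a uniform factor $\tilde{b}$ out of the left-hand side and $\alpha_n$ out where the telescoped constant sits. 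I expect the main obstacle to be exactly this bookkeeping of the $n$-dependent coefficients: keeping the factor $\alpha_n$ in the denominator of the leading $1/(\tilde{b}\alpha_n n)$ term correct while the other three terms carry $\tfrac1n\sum_k$ of the respective stepsizes, and ensuring every application of $\beta_k\leq b$, $\tilde{\zeta}_k\in(\tilde{\zeta},1]$, and $\gamma_k\geq0$ points in the direction that preserves the inequality rather than reversing it. Once those constant signs are verified the stated bound follows directly.
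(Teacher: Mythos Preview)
Your proposal has a genuine gap at the final step. After summing, your left-hand side is $\sum_k 2\kappa_k V_k(\bm{x})$, and you correctly notice that passing to $\sum_k V_k(\bm{x})$ via $\kappa_k\geq\kappa_n$ requires a sign on $V_k(\bm{x})$ that you do not have (for nonconvex $f$ these quantities can take both signs). Your proposed repair --- extracting the uniform lower bound $\tilde\beta_k(1+\gamma_k)/\tilde\zeta_k\geq\tilde b$ --- has exactly the same defect: it yields $\kappa_k V_k\geq\tilde b\,\alpha_k V_k$ only when $V_k\geq0$, and even then it leaves $\sum_k\alpha_k V_k$ rather than $\sum_k V_k$ on the left. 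There is also a secondary slip in your telescoping paragraph: the inequality $\mathbb{E}[\|\bm{x}_{k+1}-\bm{x}\|_{\mathsf{H}_k}^2]\leq\mathbb{E}[\|\bm{x}_{k+1}-\bm{x}\|_{\mathsf{H}_{k+1}}^2]$ points the wrong way for an upper bound, since this term enters with a minus sign; replacing it by something larger makes each summand smaller, giving a lower bound on the sum instead of the upper bound you need.

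The paper's remedy is to divide the one-step inequality from Lemma~\ref{lem:1} by $\kappa_k$ \emph{before} summing, so the left-hand side is $\sum_k V_k(\bm{x})$ with unit coefficients from the outset and no sign issue ever arises. The price is that the telescoping block becomes $K_n=\sum_k\kappa_k^{-1}\big(\mathbb{E}[\|\bm{x}_k-\bm{x}\|_{\mathsf{H}_k}^2]-\mathbb{E}[\|\bm{x}_{k+1}-\bm{x}\|_{\mathsf{H}_k}^2]\big)$, which no longer collapses directly. This is where the hypothesis $\kappa_{k+1}\leq\kappa_k$ is actually used: writing everything coordinatewise and combining it with (A4) gives $h_{k,i}/\kappa_k-h_{k-1,i}/\kappa_{k-1}\geq0$, so the residual cross-terms are nonnegative and can be bounded above by $D(h_{k,i}/\kappa_k-h_{k-1,i}/\kappa_{k-1})$ via (A6); a second telescope then yields $K_n\leq D\sum_i B_i/\kappa_n\leq D\sum_i B_i/(\tilde b\,\alpha_n)$. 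After that, the remaining three groups ($B_n$, $\Delta_n$, $A_n$) are handled exactly as you outlined, via Cauchy--Schwarz with Lemmas~\ref{lem:2} and~\ref{lem:3}.
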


\begin{proof}
Fix $\bm{x}\in X$ arbitrarily. Lemma \ref{lem:1} guarantees that, for all $n \geq 1$,
\begin{align}\label{key}
\begin{split}
{\sum_{k=1}^n V_k (\bm{x})} 
&\leq
\frac{1}{2} \underbrace{\sum_{k=1}^n \frac{1}{\kappa_k}
\left\{ 
\mathbb{E}\left[\left\| \bm{x}_{k} - \bm{x} \right\|_{\mathsf{H}_k}^2\right]
-
\mathbb{E}\left[\left\| \bm{x}_{k+1} - \bm{x} \right\|_{\mathsf{H}_k}^2\right]
\right\}}_{K_n}\\
&\quad + \underbrace{
\sum_{k=1}^n \frac{\beta_k}{\tilde{\beta}_k (1+\gamma_k)} \mathbb{E} \left[ \left\langle \bm{x} - \bm{x}_k,\bm{m}_{k-1} \right\rangle \right]}_{B_n} \\
&\quad + \underbrace{
\sum_{k=1}^n \frac{\delta_k}{(1+\gamma_k)} \mathbb{E} \left[ \left\langle \bm{x}_k - \bm{x},\bm{\mathsf{G}}_{k-1} \right\rangle \right]}_{\Delta_n}
+ \frac{1}{2} \underbrace{\sum_{k=1}^n \frac{\alpha_k \tilde{\zeta}_k}{\tilde{\beta}_k} \mathbb{E} \left[ \left\|\bm{\mathsf{d}}_k \right\|_{\mathsf{H}_k}^2 \right]}_{A_n}.
\end{split}
\end{align} 
From the definition of $K_n$ and $\mathbb{E} [ \| \bm{x}_{n+1} - \bm{x} \|_{\mathsf{H}_{n}}^2]/\kappa_n \geq 0$, 
\begin{align}\label{LAM}
K_n
\leq
\frac{\mathbb{E}\left[\left\| \bm{x}_{1} - \bm{x} \right\|_{\mathsf{H}_{1}}^2\right]}{\kappa_1}
+
\underbrace{
\sum_{k=2}^n \left\{
\frac{\mathbb{E}\left[\left\| \bm{x}_{k} - \bm{x} \right\|_{\mathsf{H}_{k}}^2\right]}{\kappa_k}
-
\frac{\mathbb{E}\left[\left\| \bm{x}_{k} - \bm{x} \right\|_{\mathsf{H}_{k-1}}^2\right]}{\kappa_{k-1}} 
\right\}
}_{\tilde{K}_n}.
\end{align}
Since $\overline{\mathsf{H}}_k \in \mathbb{S}_{++}^d$ exists such that $\mathsf{H}_k = \overline{\mathsf{H}}_k^2$, we have $\|\bm{x}\|_{\mathsf{H}_k}^2 = \| \overline{\mathsf{H}}_k \bm{x} \|^2$ for all $\bm{x}\in\mathbb{R}^d$. Accordingly, we have 
\begin{align*}
\tilde{K}_n 
=
\mathbb{E} \left[ 
\sum_{k=2}^n 
\left\{
\frac{\left\| \overline{\mathsf{H}}_{k} (\bm{x}_{k} - \bm{x}) \right\|^2}{\kappa_k}
-
\frac{\left\| \overline{\mathsf{H}}_{k-1} (\bm{x}_{k} - \bm{x}) \right\|^2}{\kappa_{k-1}}
\right\}
\right].
\end{align*}
From $\mathsf{H}_k = \mathsf{diag}(h_{k,i})$, for all $k\in\mathbb{N}$ and all $\bm{x} := (x_i) \in \mathbb{R}^d$,
\begin{align}\label{HK}
\overline{\mathsf{H}}_{k} = \mathsf{diag}\left(\sqrt{h_{k,i}}\right) 
\text{ and }
\left\| \overline{\mathsf{H}}_{k} \bm{x} \right\|^2
=
\sum_{i=1}^d h_{k,i} x_i^2.
\end{align}
Hence, for all $n\geq 2$,
\begin{align*}
\tilde{K}_n 
= 
\mathbb{E} \left[ 
\sum_{k=2}^n
\sum_{i=1}^d 
\left(
\frac{h_{k,i}}{\kappa_k}
-
\frac{h_{k-1,i}}{\kappa_{k-1}}
\right)
(x_{k,i} - x_i)^2
\right].
\end{align*}
From $\kappa_k \leq \kappa_{k-1}$ $(k\geq 1)$ and (A4), we have $h_{k,i}/\kappa_k - h_{k-1,i}/\kappa_{k-1} \geq 0$ $(k \geq 1, i \in [d])$. Accordingly, (A6) implies that, for all $n\geq 2$,
\begin{align*}
\tilde{K}_n
&\leq
D
\mathbb{E} \left[ 
\sum_{k=2}^n
\sum_{i=1}^d 
\left(
\frac{h_{k,i}}{\kappa_k}
-
\frac{h_{k-1,i}}{\kappa_{k-1}}
\right)
\right]
= 
D
\mathbb{E} \left[ 
\sum_{i=1}^d
\left(
\frac{h_{n,i}}{\kappa_n}
-
\frac{h_{1,i}}{\kappa_{1}}
\right)
\right].
\end{align*}
Therefore, \eqref{LAM}, $\mathbb{E} [\| \bm{x}_{1} - \bm{x}\|_{\mathsf{H}_{1}}^2]/\kappa_1 \leq D \mathbb{E} [ \sum_{i=1}^d h_{1,i}/\kappa_1]$, and (A5) imply, for all $n\in\mathbb{N}$,
\begin{align*}
K_n 
\leq
D \mathbb{E} \left[ \sum_{i=1}^d \frac{h_{1,i}}{\kappa_1} \right]
+
D
\mathbb{E} \left[
\sum_{i=1}^d 
\left(
\frac{h_{n,i}}{\kappa_n}
-
\frac{h_{1,i}}{\kappa_{1}}
\right)
\right]
\leq
\frac{D}{\kappa_n}
\mathbb{E} \left[
\sum_{i=1}^d 
h_{n,i}
\right]
\leq 
\frac{D}{\kappa_n}
\sum_{i=1}^d 
B_{i}.
\end{align*}
Moreover, from $\gamma_n, \zeta \geq 0$ and $\tilde{b} := 1 -b$,
\begin{align*}
\kappa_n := \frac{\alpha_n \tilde{\beta}_n (1+\gamma_n)}{\tilde{\zeta}_n} 
\geq 
\tilde{b} \alpha_n.
\end{align*}
Hence, 
\begin{align}\label{L} 
K_n 
\leq 
\frac{D \sum_{i=1}^d 
B_{i}}{\tilde{b} \alpha_n}.
\end{align}
The Cauchy-Schwarz inequality and (A6) imply that
\begin{align}\label{B}
\begin{split}
B_n 
\leq \sum_{k=1}^n \frac{\beta_k}{\tilde{\beta}_k (1+\gamma_k)} \mathbb{E} \left[ \left\| \bm{x} - \bm{x}_k \right\|\left\|\bm{m}_{k-1} \right\| \right]
\leq 
\frac{\sqrt{Dd}}{\tilde{b}} \sum_{k=1}^n \beta_k \mathbb{E} \left[
\left\|\bm{m}_{k-1} \right\|
\right]
\leq
\frac{\tilde{M}\sqrt{Dd}}{\tilde{b}} \sum_{k=1}^n \beta_k,
\end{split}
\end{align}
where the final inequality comes from Lemma \ref{lem:2} and Jensen's inequality, i.e., for all $n\in\mathbb{N}$, $\mathbb{E}[\|\bm{m}_n\|] \leq \tilde{M}$. A discussion similar to the one showing \eqref{B} implies that
\begin{align}\label{Delta}
\begin{split}
\Delta_n 
\leq \sum_{k=1}^n \frac{\delta_k}{1+\gamma_k} \mathbb{E} \left[ \left\| \bm{x}_k - \bm{x} \right\|\left\|\bm{\mathsf{G}}_{k-1} \right\| \right]
\leq
4 \hat{M}\sqrt{Dd} \sum_{k=1}^n \frac{\delta_k}{1+\gamma_k},
\end{split}
\end{align}
where the final inequality comes from Lemma \ref{lem:3} and Jensen's inequality, i.e., for all $n\in\mathbb{N}$, $\mathbb{E}[\|\bm{\mathsf{G}}_n\|] \leq 16 \hat{M}$. Lemma \ref{lem:2} ensures that, for all $n\in\mathbb{N}$, 
\begin{align}\label{D}
A_n := \sum_{k=1}^n \frac{\alpha_k \tilde{\zeta}_k}{\tilde{\beta}_k} \mathbb{E} \left[ \left\|\bm{\mathsf{d}}_k \right\|_{\mathsf{H}_k}^2 \right] 
\leq 
\frac{\tilde{B}^2 \tilde{M}^2}{\tilde{b}\tilde{\zeta}^2} \sum_{k=1}^n \alpha_k.
\end{align}
Therefore, the assertion of Theorem \ref{thm:3} follows from \eqref{key} and \eqref{L}--\eqref{D}.
\end{proof}

\subsection{Proofs of Theorem \ref{thm:1} and Proposition \ref{prop:1}}
\begin{proof}
[Proof of Theorem \ref{thm:1}] Let $\bm{x} \in \mathbb{R}^d$, $\alpha_n := \alpha \in (0,1)$, $\beta_n := \beta = b \in (0,1)$, $\gamma_n := \gamma \in [0, +\infty)$, and $\delta_n := \delta \in [0,1/2]$. Define $X_n := \mathbb{E}[\|\bm{x}_n - \bm{x}\|_{\mathsf{H}_n}^2]$. Lemmas \ref{lem:1}, \ref{lem:2}, and \ref{lem:3} imply that, for all $n\in\mathbb{N}$, 
\begin{align*}
X_{n+1} 
&\leq X_n + \left(X_{n+1} - \mathbb{E}\left[\|\bm{x}_{n+1} - \bm{x}\|_{\mathsf{H}_n}^2 \right] \right)\\
&\quad + \frac{2 \alpha \beta \tilde{M} \sqrt{Dd}}{\tilde{\zeta}}
- \frac{2 \alpha \tilde{b}\tilde{\gamma}}{1 - \zeta^{n+1}} V_n(\bm{x}) 
+ \frac{8 \alpha \tilde{b} \delta \hat{M} \sqrt{Dd}}{\tilde{\zeta}}
+ \frac{\alpha^2 \tilde{B}^2 \tilde{M}^2}{\tilde{\zeta}^2}.
\end{align*}
From $1 - \zeta^{n+1} \leq 1$ and $(X_{n+1} - X_n) \zeta^{n+1} \leq X_{n+1} \zeta^{n+1}$, we have, for all $n\in\mathbb{N}$, 
\begin{align}\label{zeta}
\begin{split}
X_{n+1} 
&\leq X_n + X_{n+1} \zeta^{n+1} + \left(X_{n+1} - \mathbb{E}\left[\|\bm{x}_{n+1} - \bm{x}\|_{\mathsf{H}_n}^2 \right] \right) \\
&\quad + \frac{2 \alpha \beta \tilde{M} \sqrt{Dd}}{\tilde{\zeta}}
- 2 \alpha \tilde{b} \tilde{\gamma} V_n(\bm{x}) 
+ \frac{8 \alpha \tilde{b} \delta \hat{M} \sqrt{Dd}}{\tilde{\zeta}} 
+ \frac{\alpha^2 \tilde{B}^2 \tilde{M}^2}{\tilde{\zeta}^2}.
\end{split}
\end{align}
We will show that, for all $\epsilon > 0$,
\begin{align}\label{Ineq:1}
\begin{split}
\liminf_{n\to +\infty} 
V_n(\bm{x})
\leq 
\frac{\alpha \tilde{B}^2 \tilde{M}^2}{2\tilde{b} \tilde{\gamma} \tilde{\zeta}^2}
+
\frac{\beta \tilde{M}\sqrt{Dd}}{\tilde{b}\tilde{\gamma}\tilde{\zeta}}
+
\frac{4 \delta \hat{M} \sqrt{Dd}}{\tilde{\gamma} \tilde{\zeta}}
+
\frac{Dd\epsilon}{2 \tilde{b} \tilde{\gamma}}
+ 
\epsilon.
\end{split}
\end{align}
If \eqref{Ineq:1} does not hold for all $\epsilon > 0$, then there exists $\epsilon_0 > 0$ such that 
\begin{align}\label{INF}
\begin{split}
\liminf_{n\to +\infty} V_n (\bm{x})
> 
\frac{\alpha \tilde{B}^2 \tilde{M}^2}{2\tilde{b} \tilde{\gamma}\tilde{\zeta}^2}
+
\frac{\beta \tilde{M}\sqrt{Dd}}{\tilde{b}\tilde{\gamma}\tilde{\zeta}}
+
\frac{4 \delta \hat{M} \sqrt{Dd}}{\tilde{\gamma}\tilde{\zeta}}
+
\frac{Dd\epsilon_0}{2 \tilde{b}\tilde{\gamma}}
+ 
\epsilon_0.
\end{split}
\end{align}
Assumptions (A4) and (A5) guarantee that there exists $n_0\in\mathbb{N}$ such that $n \in \mathbb{N}$ with $n\geq n_0$ implies 
\begin{align}\label{I}
\mathbb{E}\left[\sum_{i=1}^d (h_{n+1,i} - h_{n,i}) \right] 
\leq \frac{d \alpha \epsilon_0}{2},
\end{align}
which, together with \eqref{HK}, \eqref{I}, and (A6), implies that, for all $n \geq n_0$,
\begin{align}\label{IV}
X_{n+1}
-
\mathbb{E}\left[ \left\| \bm{x}_{n+1} - \bm{x} \right\|_{\mathsf{H}_{n}}^2 \right]
\leq
\frac{D d \alpha \epsilon_0}{2}.
\end{align}
The condition $\zeta \in [0,1)$ implies that there exists $n_1 \in \mathbb{N}$ such that, for all $n \geq n_1$, 
\begin{align}\label{I_I}
X_{n+1} \zeta^{n+1} 
\leq 
\frac{D d \alpha \epsilon_0}{2}.
\end{align}
Accordingly, \eqref{zeta}, \eqref{IV}, and \eqref{I_I} ensure that, for all $n \geq \max \{n_0,n_1\}$,
\begin{align}\label{zeta_1}
\begin{split}
X_{n+1} 
&\leq X_n + D d \alpha \epsilon_0 - 2 \alpha \tilde{b} V_n(\bm{x})\\
&\quad + \frac{2 \alpha \beta \tilde{M} \sqrt{Dd}}{\tilde{\zeta}} 
+ \frac{8 \alpha \tilde{b} \delta \hat{M} \sqrt{Dd}}{\tilde{\zeta}} 
+ \frac{\alpha^2 \tilde{B}^2 \tilde{M}^2}{\tilde{\zeta}^2}.
\end{split}
\end{align}
Meanwhile, there exists $n_2 \in \mathbb{N}$ such that, for all $n \geq n_2$, $\liminf_{n\to +\infty} V_n (\bm{x}) - \epsilon_0/2 \leq V_n(\bm{x})$. Hence, \eqref{INF} guarantees that, for all $n \geq n_2$,
\begin{align}\label{II}
V_n
> 
\frac{\alpha \tilde{B}^2 \tilde{M}^2}{2\tilde{b} \tilde{\gamma}\tilde{\zeta}^2}
+
\frac{\beta \tilde{M}\sqrt{Dd}}{\tilde{b}\tilde{\gamma}\tilde{\zeta}}
+
\frac{4 \delta \hat{M} \sqrt{Dd}}{\tilde{\gamma}\tilde{\zeta}}
+
\frac{Dd\epsilon_0}{2 \tilde{b}\tilde{\gamma}}
+ 
\frac{\epsilon_0}{2}.
\end{align}
Therefore, \eqref{zeta_1} and \eqref{II} ensure that, for all $n \geq n_3 := \max\{ n_0, n_1, n_2\}$,
\begin{align*}
X_{n+1} 
&<
X_n + D d \alpha \epsilon_0
-2 \alpha \tilde{b} \tilde{\gamma} 
\bigg\{
\frac{\alpha \tilde{B}^2 \tilde{M}^2}{2\tilde{b} \tilde{\gamma}\tilde{\zeta}^2}
+
\frac{\beta \tilde{M}\sqrt{Dd}}{\tilde{b}\tilde{\gamma}\tilde{\zeta}}
+
\frac{4 \delta \hat{M} \sqrt{Dd}}{\tilde{\gamma}\tilde{\zeta}}
+ 
\frac{Dd\epsilon_0}{2 \tilde{b}\tilde{\gamma}}
+
\frac{\epsilon_0}{2} 
\bigg\}\\
&\quad + \frac{2 \alpha \beta \tilde{M} \sqrt{Dd}}{\tilde{\zeta}}
+ \frac{8 \alpha \tilde{b} \delta \hat{M} \sqrt{Dd}}{\tilde{\zeta}} 
+ \frac{\alpha^2 \tilde{B}^2 \tilde{M}^2}{\tilde{\zeta}^2}
\\
&=
X_n - \alpha \tilde{b} \tilde{\gamma} \epsilon_0
< 
X_{n_3} - \alpha \tilde{b} \tilde{\gamma} \epsilon_0 (n +1 - n_3),
\end{align*}
which is a contradiction since the right-hand side of the final inequality approaches minus infinity as $n$ approaches positive infinity. Therefore, \eqref{Ineq:1} holds for all $\epsilon > 0$; that is, we have
\begin{align*}
\liminf_{n\to +\infty} 
V_n (\bm{x})
\leq 
\frac{\alpha \tilde{B}^2 \tilde{M}^2}{2\tilde{b} \tilde{\gamma}\tilde{\zeta}^2}
+
\frac{\beta \tilde{M}\sqrt{Dd}}{\tilde{b}\tilde{\gamma}\tilde{\zeta}}
+
\frac{4 \delta \hat{M} \sqrt{Dd}}{\tilde{\gamma}\tilde{\zeta}}.
\end{align*}
Theorem \ref{thm:3} implies the assertions in Theorem \ref{thm:1}. This completes the proof. 
\end{proof}

\begin{proof}
[Proof of Proposition \ref{prop:1}] Since $f_i$ is convex for $i\in [T]$, we have that, for all $\bm{x} \in \mathbb{R}^d$ and all $n\in\mathbb{N}$, 
\begin{align*}
&\mathbb{E}[f(\bm{x}_n) - f^\star] \leq V_n(\bm{x}),\\
&\min_{k\in [n]} 
\mathbb{E} [f(\bm{x}_k) - f^\star ], \mathbb{E} [f(\tilde{\bm{x}}_n) - f^\star ] 
\leq \frac{1}{n} \sum_{k=1}^n V_k(\bm{x}),\\
&R(T) = \sum_{t=1}^T (f_t (\bm{x}_t) - f_t (\bm{x}^\star)) 
\leq \sum_{t=1}^T \langle \bm{x}_t - \bm{x}^\star, \mathsf{G}(\bm{x}_t,\xi_t)\rangle.
\end{align*} 
Accordingly, Theorems \ref{thm:1} and \ref{thm:3} imply Proposition \ref{prop:1}.
\end{proof}

\subsection{Proofs of Theorem \ref{thm:2} and Proposition \ref{prop:2}}
\begin{proof}
[Proof of Theorem \ref{thm:2}] Lemmas \ref{lem:1} and \ref{lem:2} imply that, for all $\bm{x} \in \mathbb{R}^d$ and all $k\in \mathbb{N}$,
\begin{align*}
2 \alpha_k V_k(\bm{x})
&\leq 
X_k - X_{k+1} + X_{k+1} \zeta^{k+1}\\
&\quad + \left(X_{k+1} - \mathbb{E}\left[\|\bm{x}_{k+1} - \bm{x}\|_{\mathsf{H}_k}^2 \right] \right)
+ \frac{\tilde{B}^2 \tilde{M}^2}{\tilde{\zeta}^2}\alpha_k^2
+ 2 \alpha_k \beta_k \mathbb{E}\left[\langle \bm{x} - \bm{x}_k, \bm{m}_{k-1} \rangle \right]\\
&\quad + 2 \alpha_k \tilde{\beta}_k \delta_k 
\mathbb{E}\left[\langle \bm{x}_k - \bm{x}, \bm{\mathsf{G}}_{k-1} \rangle \right]
- 2 \{\alpha_k \gamma_k - \alpha_k \beta_k (1+\gamma_k)\} V_k (\bm{x}), 
\end{align*}
which in turn implies that
\begin{align*}
2 \alpha_k V_k(\bm{x})
&\leq 
X_k - X_{k+1} + X^* \zeta^{k+1}\\
&\quad + D \mathbb{E}\left[ \sum_{i=1}^d (h_{k+1,i} - h_{k,i}) \right]
+ \frac{\tilde{B}^2 \tilde{M}^2}{\tilde{\zeta}^2}\alpha_k^2
+ 2 \tilde{M} \sqrt{Dd} \alpha_k \beta_k
+ 8 \hat{M} \sqrt{Dd} \alpha_k \delta_k\\
&\quad + 2 F \{\alpha_k \gamma_k + \alpha_k \beta_k (1+\gamma_k)\}, 
\end{align*}
where $X^* := \sup \{ X_n \colon n\in\mathbb{N} \}$ and $F := \sup \{ |V_n(\bm{x})| \colon n\in\mathbb{N} \}$ are finite by (A1) and (A6). Summing the above inequality from $k=0$ to $k=n$ implies that, for all $\bm{x} \in \mathbb{R}^d$,
\begin{align*}
2 \sum_{k=0}^n \alpha_k V_k(\bm{x})
&\leq 
X_0
+ X^* \sum_{k=0}^n \zeta^{k+1}
+ D \mathbb{E}\left[ \sum_{i=1}^d (h_{n+1,i} - h_{0,i}) \right]\\
&\quad + \frac{\tilde{B}^2 \tilde{M}^2}{\tilde{\zeta}^2} \sum_{k=0}^n \alpha_k^2
+ 2 \tilde{M} \sqrt{Dd} \sum_{k=0}^n \alpha_k \beta_k\\
&\quad + 8 \hat{M} \sqrt{Dd} \sum_{k=0}^n \alpha_k \delta_k
+ 4 F \sum_{k=0}^n \alpha_k \gamma_k\\
&\quad 
+ 2 F \sum_{k=0}^n \alpha_k \beta_k.
\end{align*}
From (A5), $\zeta \in [0,1)$, $\sum_{n=0}^{+\infty} \alpha_n^2 < + \infty$, $\sum_{n=0}^{+\infty} \alpha_n \beta_n < + \infty$, $\sum_{n=0}^{+\infty} \alpha_n \gamma_n < + \infty$, and $\sum_{n=0}^{+\infty} \alpha_n \delta_n < + \infty$, we have 
\begin{align}\label{sum}
\sum_{k=0}^{+\infty} \alpha_k V_k
< + \infty,
\end{align}
which, together with $\sum_{n=0}^{+\infty} \alpha_n = + \infty$, implies that 
\begin{align*}
\liminf_{n \to +\infty} V_n(\bm{x}) \leq 0.
\end{align*}
Suppose that $(\alpha_n)_{n\in\mathbb{N}}$, $(\beta_n)_{n\in\mathbb{N}}$, $(\gamma_n)_{n\in\mathbb{N}}$, and $(\delta_n)_{n\in\mathbb{N}}$ satisfy $\lim_{n \to +\infty} 1/(n \alpha_n) = 0$, \\$\lim_{n \to +\infty} (1/n) \sum_{k=0}^n \alpha_k = 0$, $\lim_{n \to +\infty} (1/n) \sum_{k=0}^n \beta_k = 0$, $\lim_{n \to +\infty} (1/n) \sum_{k=0}^n \delta_k = 0$, and $\gamma_n \geq 0$ ($n\in \mathbb{N}$). Then, Theorem \ref{thm:3} implies that, for all $\bm{x} \in \mathbb{R}^d$,
\begin{align*}
\limsup_{n \to +\infty} \frac{1}{n} \sum_{k=1}^n V_k (\bm{x}) \leq 0.
\end{align*} 
In the case of $\alpha_n := 1/n^\eta$ ($\eta \in (0,1)$), $\beta_n := \beta^n$ ($\beta \in [0,1)$), $\delta_n := \delta^n$ ($\delta \in [0,1)$), and a monotone decreasing sequence $(\gamma_n)_{n\in\mathbb{N}}$, we have $\kappa_{n+1} \leq \kappa_n$ ($n\in\mathbb{N}$) and $\limsup_{n\to + \infty} \beta_n < 1$. We also have
\begin{align*} 
&\lim_{n \to + \infty} \frac{1}{n \alpha_n} = \lim_{n \to + \infty} \frac{1}{n^{1-\eta}} 
= 0,\\
&\frac{1}{n} \sum_{k=1}^n \alpha_k
\leq
\frac{1}{n} \left( 1 + \int_1^n \frac{\mathrm{d}t}{t^\eta} \right)
\leq \frac{1}{(1-\eta)n^{\eta}}.
\end{align*}
Furthermore, $\sum_{k=1}^n \beta_k \leq \sum_{k=1}^{+\infty} \beta_k = \beta/(1-\beta)$ and $\sum_{k=1}^n \delta_k \leq \sum_{k=1}^{+\infty} \delta_k = \delta/(1-\delta)$. Therefore, Theorem \ref{thm:3} gives us the convergence rates in Theorem \ref{thm:2}. This completes the proof.
\end{proof}

\begin{proof}
[Proof of Proposition \ref{prop:2}] It is sufficient to show that any accumulation point of $(\tilde{\bm{x}}_n)_{n\in\mathbb{N}}$ belongs to $X^\star$ almost surely. Theorem 6 and the proof of Proposition 5 imply that $\lim_{n \to + \infty} \mathbb{E} [ f ( \tilde{\bm{x}}_n ) - f^\star] = 0$. Let $\hat{\bm{x}} \in \mathbb{R}^d$ be an arbitrary accumulation point of $(\tilde{\bm{x}}_n)_{n\in\mathbb{N}} \subset \mathbb{R}^d$. Then, there exists $(\tilde{\bm{x}}_{n_i})_{i\in\mathbb{N}} \subset (\tilde{\bm{x}}_n)_{n\in\mathbb{N}}$ such that $(\tilde{\bm{x}}_{n_i})_{i\in\mathbb{N}}$ converges almost surely to $\hat{\bm{x}}$. The continuity of $f$ implies that $\mathbb{E} \left[ f ( \hat{\bm{x}}) - f^\star \right]= 0$, and hence, $\hat{\bm{x}} \in X^\star$. The remaining assertions in Proposition \ref{prop:2} follow from the proof of Proposition \ref{prop:1}.
\end{proof}

\subsection{Algorithms for image classification}
\label{appen:5}
\subsubsection{ResNet-18 on CIFAR-100 dataset}
The training of ResNet-18 on the CIFAR-100 dataset is based on the results in \url{https://github.com/weiaicunzai/pytorch-cifar100}.

\textbf{Constant learning rates:} A learning rate with a cosine annealing is used \citep{sgdr}. The initial constant learning rates used in the experiments are determined using a grid search ($\alpha \in \{10^{-3}, 5 \times 10^{-3}, 10^{-2}, 5 \times 10^{-2}, 10^{-1} \}$). 

\begin{itemize}
\item SGD \citep{robb1951}: $\alpha = 5 \times 10^{-2}$ 
\item Momentum\footnote{\url{https://github.com/kuangliu/pytorch-cifar}} \citep{polyak1964,nes1983}: $\alpha = 10^{-1}$, $\beta_n = 0.9$, and the weight decay is $5 \times 10^{-4}$ 
\item RMSprop \citep{rmsprop}: $\alpha = 10^{-2}$ and the parameter $\alpha$ is $0.9$ 
\item Adagrad \citep{adagrad}: $\alpha = 10^{-2}$
\item AdamW \citep{loshchilov2018decoupled}: 
$\alpha = 10^{-3}$ and the weight decay is $10^{-2}$ 
\item Adam \citep{adam}: Algorithm \ref{algo:1} with \eqref{adam}, $\alpha = 10^{-2}$, $\zeta = \beta_n = 0.9$, $\theta = 0.999$, and $\gamma_n = \delta_n = 0$
\item AMSGrad \citep{reddi2018}: Algorithm \ref{algo:1} with \eqref{amsg}, $\alpha = 10^{-3}$, $\zeta = 0$, $\beta_n = 0.9$, $\theta = 0.999$, and $\gamma_n = \delta_n = 0$
\item SCGAdam-C: Algorithm \ref{algo:1} with \eqref{adam}, $\alpha = 10^{-3}$, $\zeta = \beta_n = 0.9$, $\theta = 0.999$, $\gamma_n = 10^{-1}$, and $\delta_n = 10^{-3}$
\item SCGAMSG-C: Algorithm \ref{algo:1} with \eqref{amsg}, $\alpha = 10^{-3}$, $\zeta = 0$, $\theta = 0.999$, $\beta_n = 0.9$, $\gamma_n = 10^{-1}$, and $\delta_n = 10^{-3}$
\end{itemize}

\textbf{Diminishing learning rates:} All optimizers used $\alpha_n = 1/\sqrt{n}$.
\begin{itemize}
\item SGD \citep{robb1951}
\item Momentum \citep{polyak1964,nes1983}: $\beta_n = 1/2^n$ 
\item RMSprop \citep{rmsprop}: the parameter $\alpha$ is $1/2^n$ 
\item Adagrad \citep{adagrad}: 
\item AdamW \citep{loshchilov2018decoupled}: the weight decay is $10^{-2}$ and $\beta_n = 1/2^n$
\item Adam \citep{adam}: Algorithm \ref{algo:1} with \eqref{adam}, $\zeta = 0.9$, $\theta = 0.999$, $\beta_n = 1/2^n$, and $\gamma_n = \delta_n = 0$ 
\item AMSGrad \citep{reddi2018}: Algorithm \ref{algo:1} with \eqref{amsg}, $\zeta = 0$, $\theta = 0.999$, $\beta_n = 1/2^n$, and $\gamma_n = \delta_n = 0$ 
\item SCGAdam-D: Algorithm \ref{algo:1} with \eqref{adam}, $\zeta = 0.9$, $\theta = 0.999$, and $\beta_n = \gamma_n = \delta_n = 1 / 2^n$
\item SCGAMSG-D: Algorithm \ref{algo:1} with \eqref{adam}, $\zeta = 0$, $\theta = 0.999$, and $\beta_n = \gamma_n = \delta_n = 1 / 2^n$
\end{itemize}

\subsubsection{ResNet-18 on CIFAR-10 dataset}
The training of ResNet-18 on the CIFAR-10 dataset is based on the results in \url{https://github.com/kuangliu/pytorch-cifar}.

\textbf{Constant learning rates:} A learning rate with a cosine annealing is used \citep{sgdr}. The initial constant learning rates used in the experiments are determined using a grid search ($\alpha \in \{10^{-3}, 5 \times 10^{-3}, 10^{-2}, 5 \times 10^{-2}, 10^{-1} \}$). 

\begin{itemize}
\item SGD \citep{robb1951}: $\alpha = 5 \times 10^{-2}$ 
\item Momentum\footnote{\url{https://github.com/kuangliu/pytorch-cifar}} \citep{polyak1964,nes1983}: $\alpha = 10^{-1}$, $\beta_n = 0.9$, and the weight decay is $5 \times 10^{-4}$
\item RMSprop \citep{rmsprop}: $\alpha = 10^{-2}$ and the parameter $\alpha$ is $0.9$ 
\item Adagrad \citep{adagrad}: $\alpha = 10^{-2}$
\item AdamW \citep{loshchilov2018decoupled}: $\alpha = 10^{-3}$ and
the weight decay is $10^{-2}$ 
\item Adam \citep{adam}: Algorithm \ref{algo:1} with \eqref{adam}, $\alpha = 10^{-2}$, $\zeta = \beta_n = 0.9$, $\theta = 0.999$, and $\gamma_n = \delta_n = 0$
\item AMSGrad \citep{reddi2018}: Algorithm \ref{algo:1} with \eqref{amsg}, $\alpha = 10^{-3}$, $\zeta = 0$, $\beta_n = 0.9$, $\theta = 0.999$, and $\gamma_n = \delta_n = 0$
\item SCGAdam-C: Algorithm \ref{algo:1} with \eqref{adam}, $\alpha = 10^{-3}$, $\zeta = \beta_n = 0.9$, $\theta = 0.999$, $\gamma_n = 10^{-1}$, and $\delta_n = 10^{-2}$
\item SCGAMSG-C: Algorithm \ref{algo:1} with \eqref{amsg}, $\alpha = 10^{-3}$, $\zeta = 0$, $\theta = 0.999$, $\beta_n = 0.9$, $\gamma_n = 10^{-1}$, and $\delta_n = 10^{-2}$
\end{itemize}

\textbf{Diminishing learning rates:} All optimizers used $\alpha_n = 1/\sqrt{n}$.
\begin{itemize}
\item SGD \citep{robb1951}
\item Momentum \citep{polyak1964,nes1983}: $\beta_n = 1/2^n$ 
\item RMSprop \citep{rmsprop}: the parameter $\alpha$ is $1/2^n$ 
\item Adagrad \citep{adagrad}: 
\item AdamW \citep{loshchilov2018decoupled}: the weight decay is $10^{-2}$ and $\beta_n = 1/2^n$
\item Adam \citep{adam}: Algorithm \ref{algo:1} with \eqref{adam}, $\zeta = 0.9$, $\theta = 0.999$, $\beta_n = 1/2^n$, and $\gamma_n = \delta_n = 0$ 
\item AMSGrad \citep{reddi2018}: Algorithm \ref{algo:1} with \eqref{amsg}, $\zeta = 0$, $\theta = 0.999$, $\beta_n = 1/2^n$, and $\gamma_n = \delta_n = 0$ 
\item SCGAdam-D: Algorithm \ref{algo:1} with \eqref{adam}, $\zeta = 0.9$, $\theta = 0.999$, and $\beta_n = \gamma_n = \delta_n = 1 / 2^n$
\item SCGAMSG-D: Algorithm \ref{algo:1} with \eqref{adam}, $\zeta = 0$, $\theta = 0.999$, and $\beta_n = \gamma_n = \delta_n = 1 / 2^n$
\end{itemize}

\subsection{Algorithms for text classification}
\label{appen:6}
\subsubsection{LSTM on IMDb dataset}
\textbf{Constant learning rates:} All optimizers used $\alpha = 10^{-3}$. The existing optimizers together with their default values are in \texttt{torch.optim}\footnote{\url{https://pytorch.org/docs/stable/optim.html}}.
\begin{itemize}
\item SGD \citep{robb1951}
\item Momentum \citep{polyak1964,nes1983}: $\beta_n = 0.9$ 
\item RMSprop \citep{rmsprop}: the parameter $\alpha$ is $0.99$ 
\item Adagrad \citep{adagrad}
\item AdamW \citep{loshchilov2018decoupled}: the weight decay is $10^{-2}$ 
\item Adam \citep{adam}: Algorithm \ref{algo:1} with \eqref{adam}, $\zeta = \beta_n = 0.9$, $\theta = 0.999$, and $\gamma_n = \delta_n = 0$
\item AMSGrad \citep{reddi2018}: Algorithm \ref{algo:1} with \eqref{amsg}, $\zeta = 0$, $\beta_n = 0.9$, $\theta = 0.999$, and $\gamma_n = \delta_n = 0$
\item SCGAdam-C: Algorithm \ref{algo:1} with \eqref{adam}, $\zeta = \beta_n = 0.9$, $\theta = 0.999$, $\gamma_n = 1$, and $\delta_n = 10^{-2}$
\item SCGAMSG-C: Algorithm \ref{algo:1} with \eqref{amsg}, $\zeta = 0$, $\theta = 0.999$, $\beta_n = 0.9$, $\gamma_n = 1$, and $\delta_n = 10^{-3}$
\end{itemize}

\textbf{Diminishing learning rates:} All optimizers used $\alpha_n = 1/\sqrt{n}$. The existing optimizers together with their default values are in \texttt{torch.optim}.

\begin{itemize}
\item SGD \citep{robb1951}
\item Momentum \citep{polyak1964,nes1983}: $\beta_n = 1/2^n$ 
\item RMSprop \citep{rmsprop}: the parameter is $1/2^n$ 
\item Adagrad \citep{adagrad}
\item AdamW \citep{loshchilov2018decoupled}: the weight decay is $10^{-2}$ and $\beta_n = 1/2^n$
\item Adam \citep{adam}: Algorithm \ref{algo:1} with \eqref{adam}, $\zeta = 0.9$, $\theta = 0.999$, $\beta_n = 1/2^n$, and $\gamma_n = \delta_n = 0$ 
\item AMSGrad \citep{reddi2018}: Algorithm \ref{algo:1} with \eqref{amsg}, $\zeta = 0$, $\theta = 0.999$, $\beta_n = 1/2^n$, and $\gamma_n = \delta_n = 0$ 
\item SCGAdam-D: Algorithm \ref{algo:1} with \eqref{adam}, $\zeta = 0.9$, $\theta = 0.999$, and $\beta_n = \gamma_n = \delta_n = 1 / 2^n$
\item SCGAMSG-D: Algorithm \ref{algo:1} with \eqref{adam}, $\zeta = 0$, $\theta = 0.999$, and $\beta_n = \gamma_n = \delta_n = 1 / 2^n$
\end{itemize}

\subsection{Algorithms for image generation}
The combinations of learning rates used in the experiments in Section \ref{subsec:4.4} were determined using a grid search. Figures \ref{fig:18}-\ref{fig:30} shows the results of the grid search.

\label{a:hyper}
\begin{table*}[htbp]
\begin{center}
\caption{Hyperparameters of the optimizer used in the training DCGAN in Section \ref{subsec:4.4}.}
\label{table:hyper}
\begin{tabular}{|ll|l|c|c|c|c|}
\hline

&optimizer
& $\alpha^D$ 
& $\alpha^G$
& $\beta_1^G = \beta_1^D$ 
& $\beta_2^G = \beta_2^D$ 
& $\beta_1$ and $\beta_2$'s reference\\ \hline \hline

&Adam
& $0.0003$
& $0.0001$
& $0.5$
& $0.999$ 
& \citep{radford2016unsupervised} \\

&AdaBelief
& $0.00003$
& $0.0003$
& $0.5$
& $0.999$ 
& \citep{adab} \\

&RMSProp
& $0.00003$
& $0.0001$
& $0$
& $0.99$ 
& \\

&SCGAdam
& $0.0001$
& $0.0003$
& $0.5$
& $0.999$ 
& \\

&SCGAMSGrad
& $0.00005$
& $0.0005$
& $0.5$
& $0.999$ 
& \\
\hline

\end{tabular}
\end{center}
\end{table*}

\begin{table*}[htbp]
\begin{center}
\caption{Hyperparameters of the optimizer used in the training SNGAN in Section \ref{subsec:4.4}.}
\label{table:hyper-sn}
\begin{tabular}{|ll|l|c|c|c|}
\hline

&optimizer
& $\alpha^D$ 
& $\alpha^G$
& $\beta_1^G = \beta_1^D$ 
& $\beta_2^G = \beta_2^D$ 
\\ \hline \hline

&Adam
& $0.0001$
& $0.0001$
& $0.5$
& $0.999$ \\

&AdaBelief
& $0.0001$
& $0.0001$
& $0.5$
& $0.999$ \\

&RMSProp
& $0.00005$
& $0.0001$
& $0$
& $0.99$ \\

&SCGAdam
& $0.0001$
& $0.0003$
& $0.5$
& $0.999$ \\

&SCGAMSGrad
& $0.0001$
& $0.0003$
& $0.5$
& $0.999$ \\
\hline

\end{tabular}
\end{center}
\end{table*}

\begin{table*}[htbp]
\begin{center}
\caption{Hyperparameters of the optimizer used in the training WGAN-GP in Section \ref{subsec:4.4}.}
\label{table:hyper-w}
\begin{tabular}{|ll|l|c|c|c|c|}
\hline

&optimizer
& $\alpha^D$ 
& $\alpha^G$
& $\beta_1^G = \beta_1^D$ 
& $\beta_2^G = \beta_2^D$ 
& $\beta_1$ and $\beta_2$'s reference\\ \hline \hline

&Adam
& $0.0003$
& $0.0001$
& $0.5$
& $0.999$ 
& \citep{NIPS2017_892c3b1c} \\

&AdaBelief
& $0.00005$
& $0.0005$
& $0.5$
& $0.999$ 
& \citep{adab} \\

&RMSProp
& $0.0003$
& $0.0005$
& $0$
& $0.99$ 
& \\

&SCGAdam
& $0.0003$
& $0.0001$
& $0.5$
& $0.999$ 
& \\

&SCGAMSGrad
& $0.0003$
& $0.0001$
& $0.5$
& $0.999$ 
& \\
\hline

\end{tabular}
\end{center}
\end{table*}

\begin{figure}[htbp]
\begin{tabular}{ccc}
\begin{minipage}[t]{0.33\linewidth}
\centering
\includegraphics[width=1\textwidth]{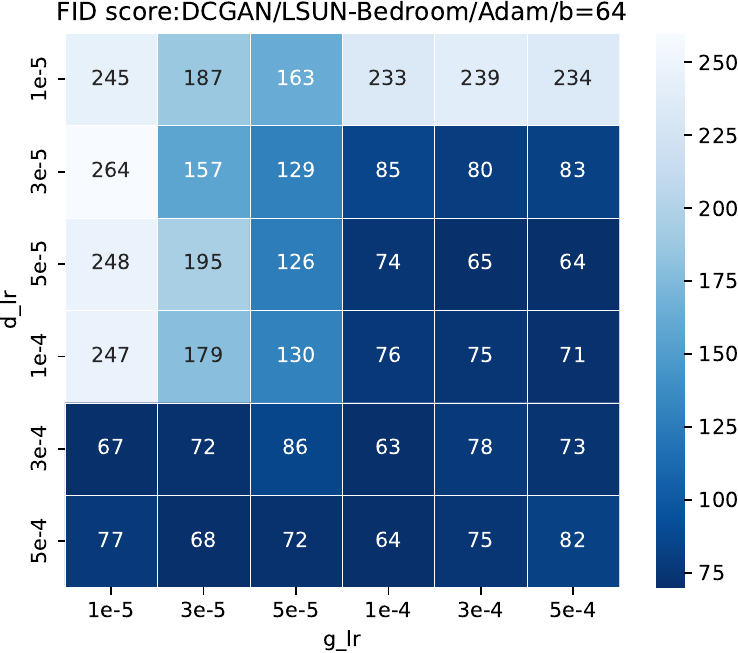}
\end{minipage} 
\begin{minipage}[t]{0.33\linewidth}
\centering
\includegraphics[width=1\textwidth]{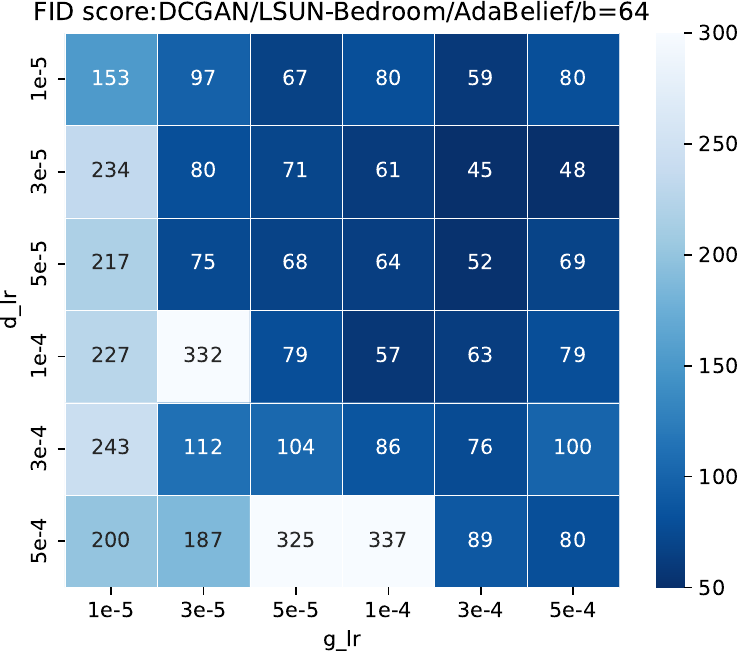}
\end{minipage} 
\begin{minipage}[t]{0.33\linewidth}
\centering
\includegraphics[width=1\textwidth]{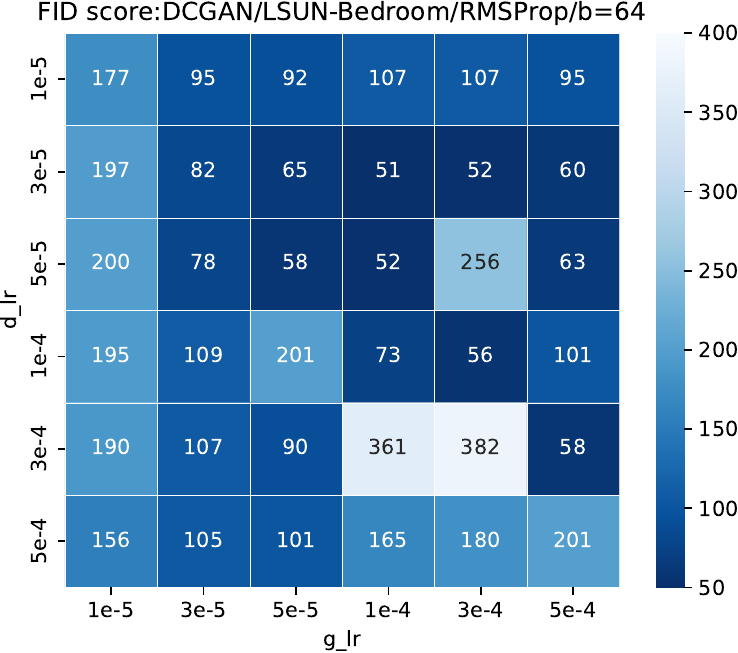}
\end{minipage} \\
\begin{minipage}[t]{0.33\linewidth}
\centering
\includegraphics[width=1\textwidth]{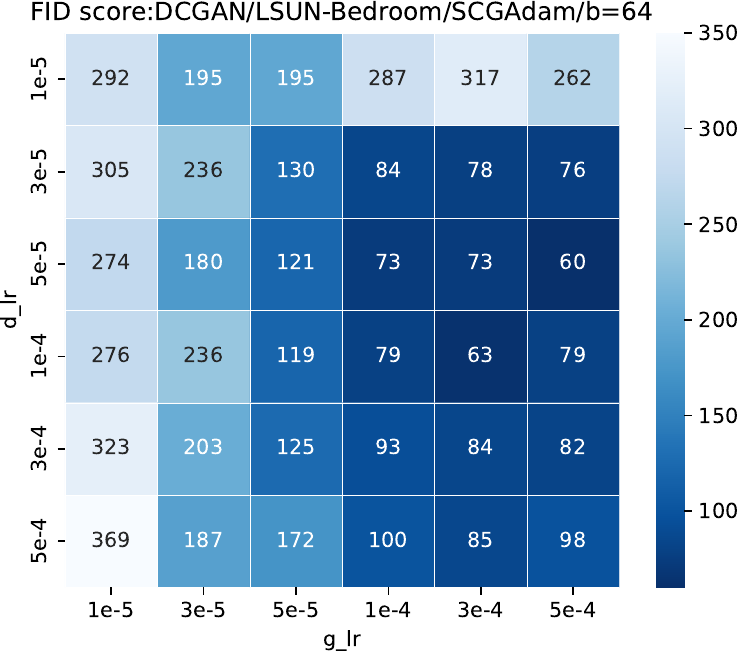}
\end{minipage} 
\begin{minipage}[t]{0.33\linewidth}
\centering
\includegraphics[width=1\textwidth]{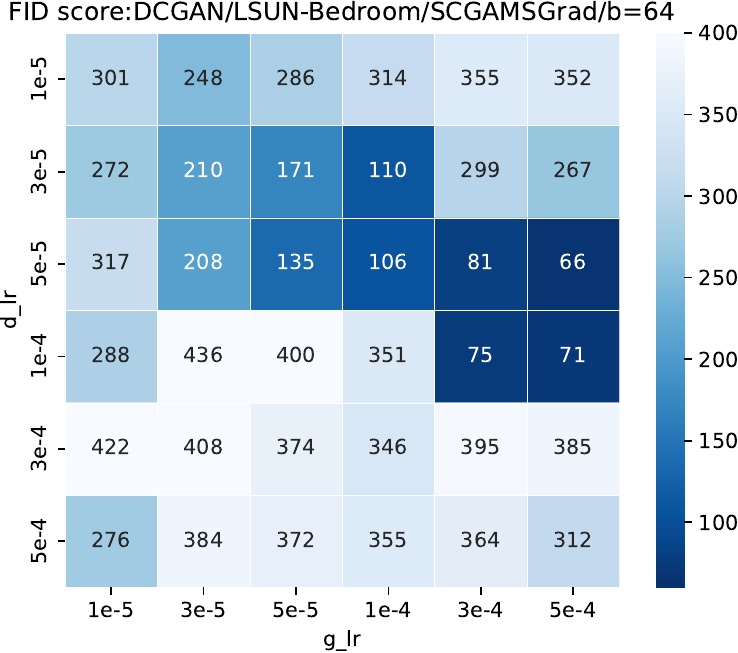}
\end{minipage} 
\end{tabular}
\caption{Analysis of the relationship between combination of learning rates and FID score in training DCGAN on the LSUN-Bedroom dataset: discriminator learning rate $\alpha^D$ on the vertical axis and generator learning rate $\alpha^G$ on the horizontal axis. The heatmap colors denote the FID scores; the darker the blue, the lower the FID, meaning that the training of the generator succeeded.}
\label{fig:18}
\end{figure}

\begin{figure}[htbp]
\begin{tabular}{ccc}
\begin{minipage}[t]{0.33\linewidth}
\centering
\includegraphics[width=1\textwidth]{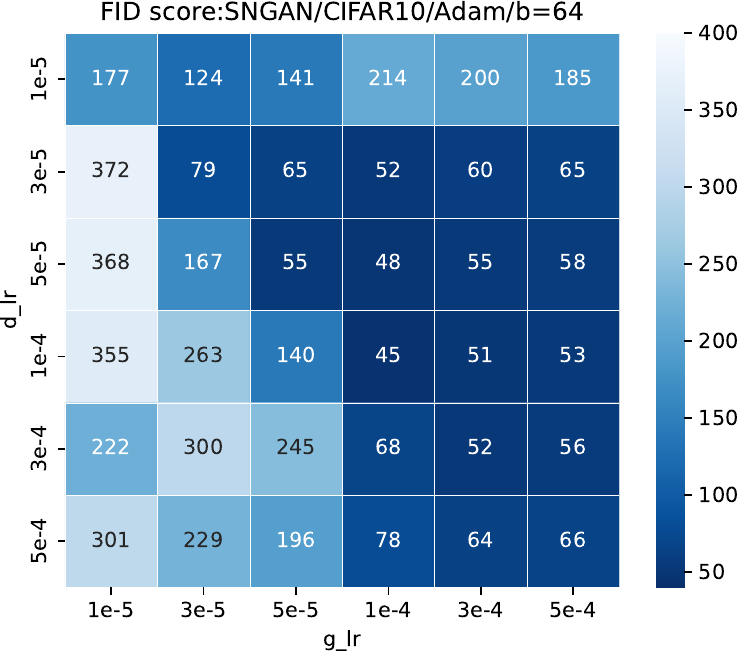}
\end{minipage} 
\begin{minipage}[t]{0.33\linewidth}
\centering
\includegraphics[width=1\textwidth]{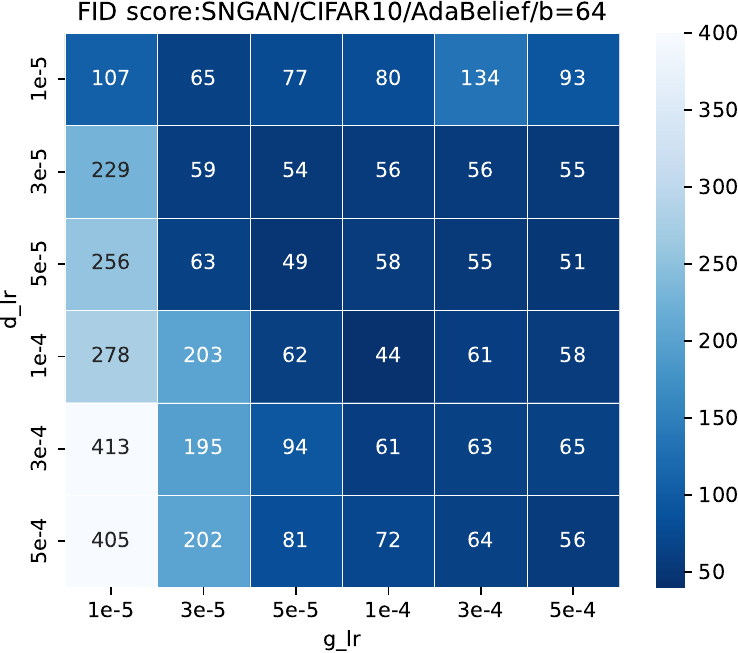}
\end{minipage} 
\begin{minipage}[t]{0.33\linewidth}
\centering
\includegraphics[width=1\textwidth]{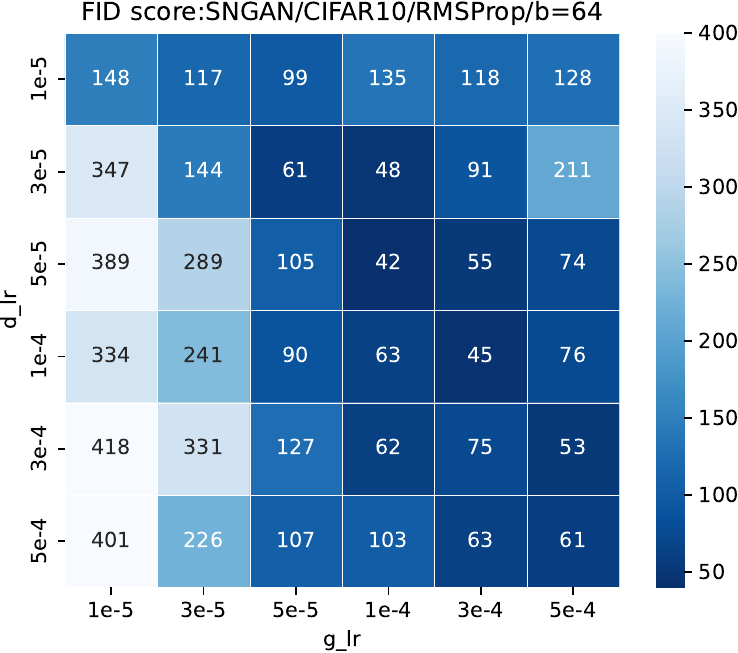}
\end{minipage} \\
\begin{minipage}[t]{0.33\linewidth}
\centering
\includegraphics[width=1\textwidth]{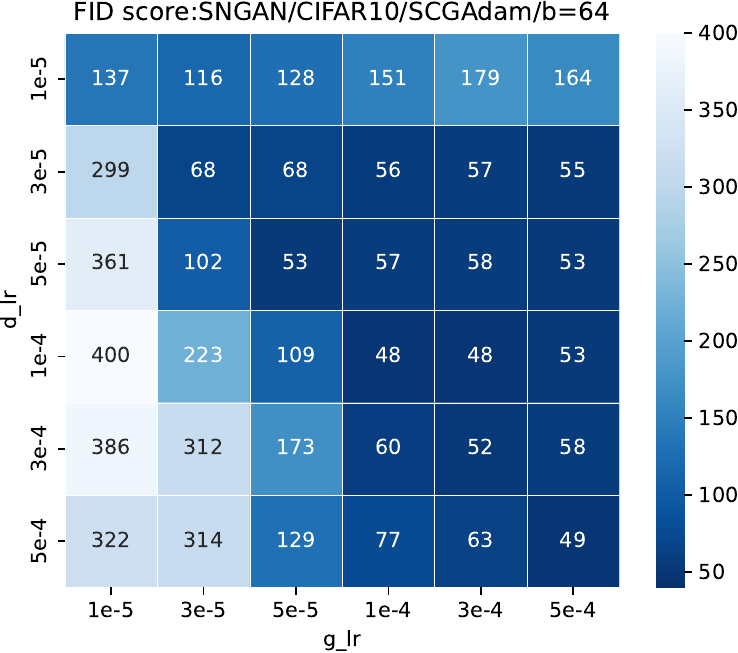}
\end{minipage} 
\begin{minipage}[t]{0.33\linewidth}
\centering
\includegraphics[width=1\textwidth]{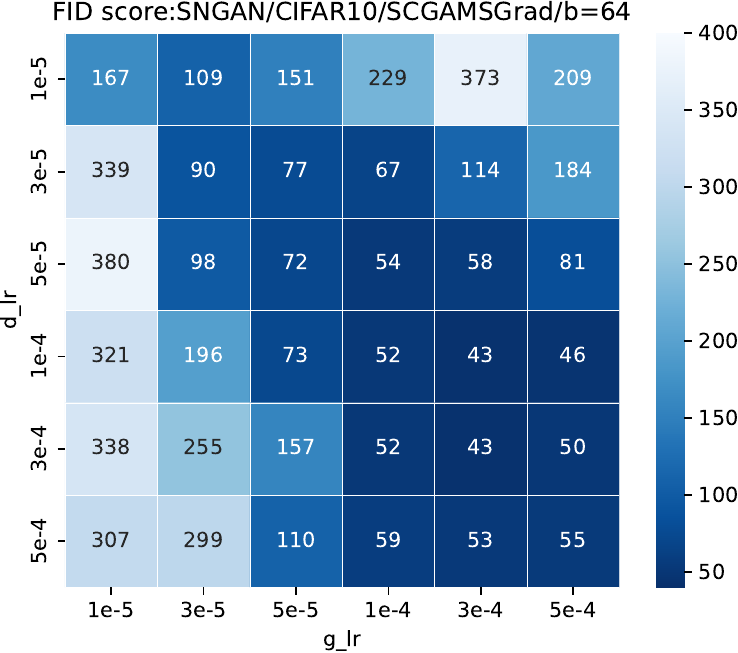}
\end{minipage} 
\end{tabular}
\caption{Analysis of the relationship between combination of learning rates and FID score in training SNGAN on the CIFAR10 dataset: discriminator learning rate $\alpha^D$ on the vertical axis and generator learning rate $\alpha^G$ on the horizontal axis. The heatmap colors denote the FID scores; the darker the blue, the lower the FID, meaning that the training of the generator succeeded.}
\label{fig:29}
\end{figure}

\begin{figure}[htbp]
\begin{tabular}{ccc}
\begin{minipage}[t]{0.33\linewidth}
\centering
\includegraphics[width=1\textwidth]{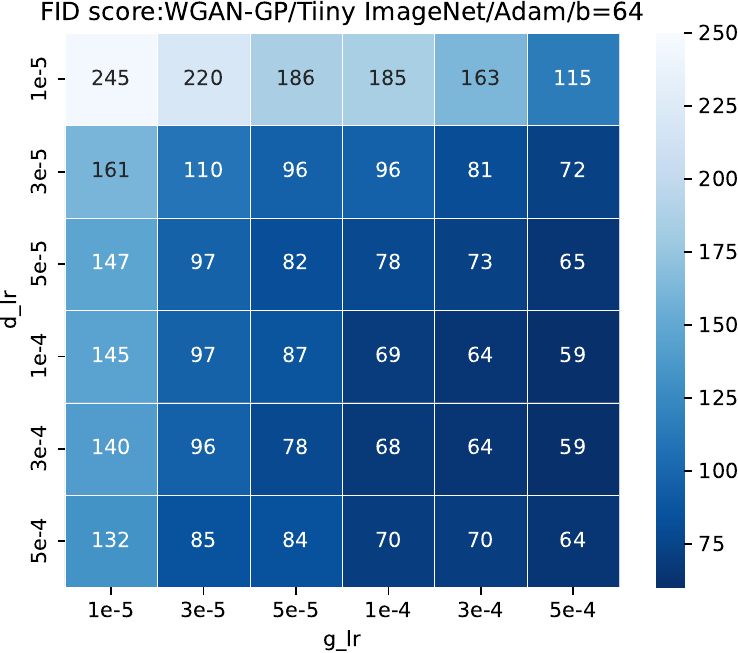}
\end{minipage} 
\begin{minipage}[t]{0.33\linewidth}
\centering
\includegraphics[width=1\textwidth]{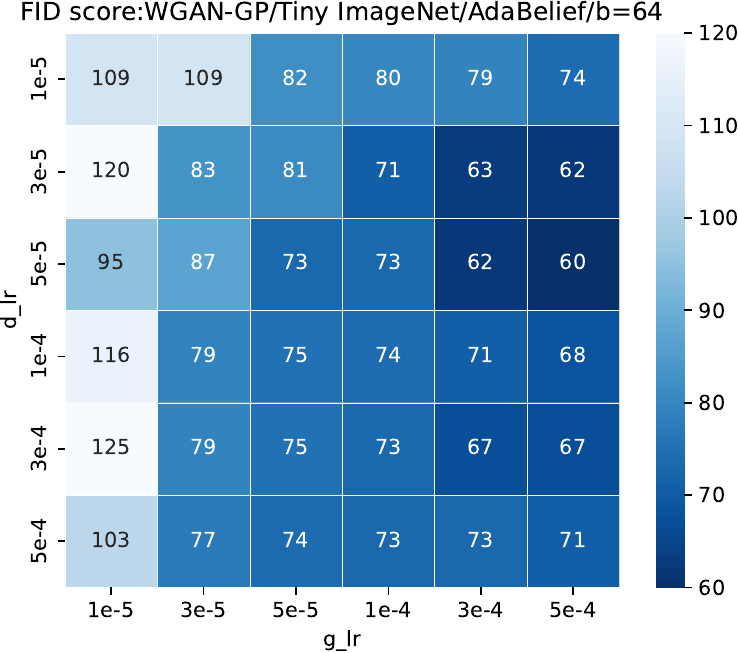}
\end{minipage} 
\begin{minipage}[t]{0.33\linewidth}
\centering
\includegraphics[width=1\textwidth]{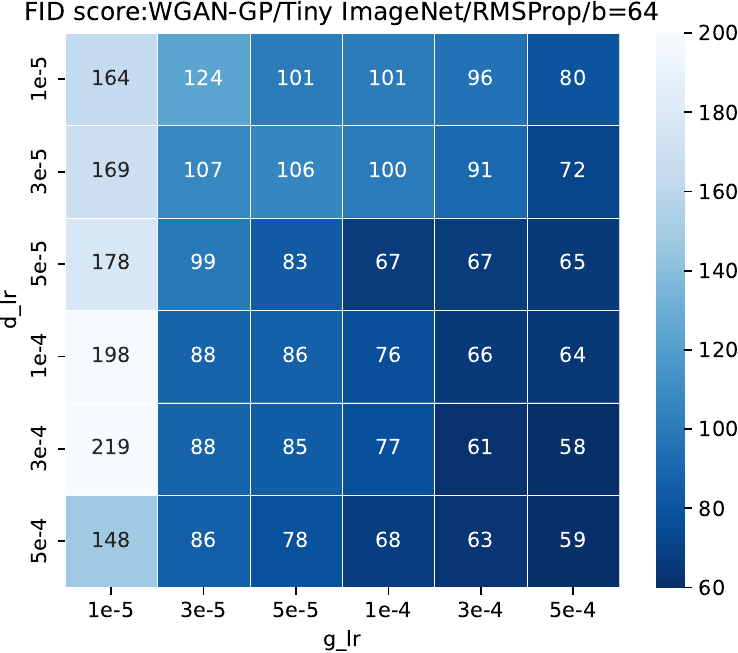}
\end{minipage} \\
\begin{minipage}[t]{0.33\linewidth}
\centering
\includegraphics[width=1\textwidth]{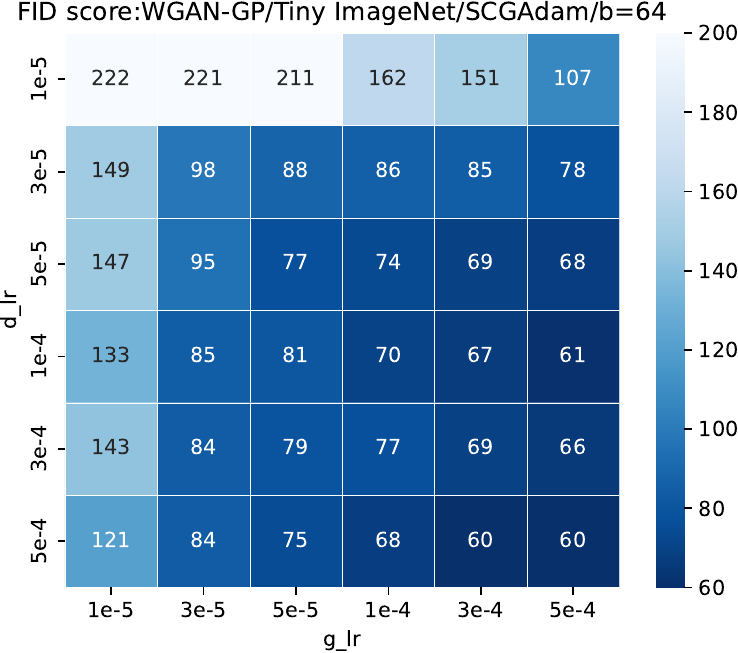}
\end{minipage} 
\begin{minipage}[t]{0.33\linewidth}
\centering
\includegraphics[width=1\textwidth]{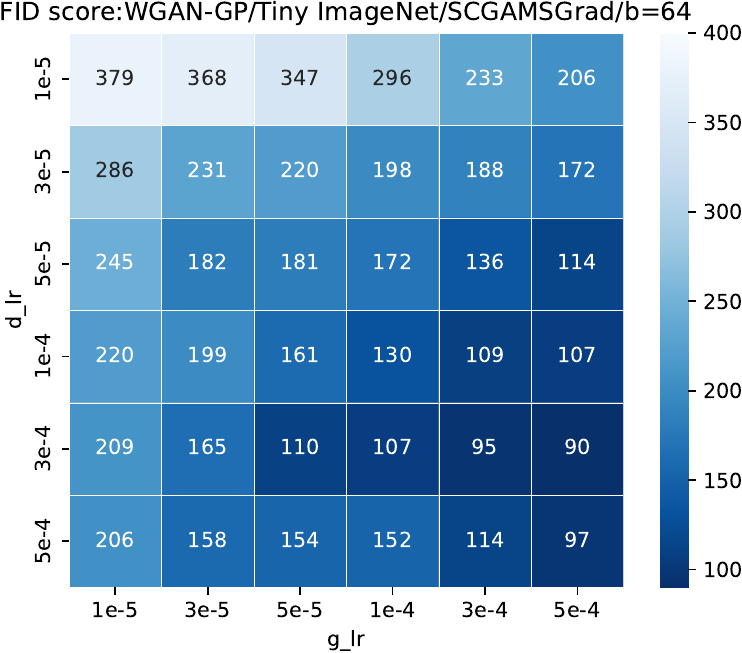}
\end{minipage} 
\end{tabular}
\caption{Analysis of the relationship between combination of learning rates and FID score in training WGAN-GP on the Tiny ImageNet dataset: discriminator learning rate $\alpha^D$ on the vertical axis and generator learning rate $\alpha^G$ on the horizontal axis. The heatmap colors denote the FID scores; the darker the blue, the lower the FID, meaning that the training of the generator succeeded.}
\label{fig:30}
\end{figure}

\vskip 0.2in
\bibliography{biblio}

\end{document}